  \newcommand\figcaption{\def\@captype{figure}\caption}
  \newcommand\tabcaption{\def\@captype{table}\caption}
\def\rc{\color{black}}
\newcommand\numberthis{\addtocounter{equation}{1}\tag{\theequation}}
\newcounter{ass_counter}
\newcounter{thm_counter}
\newtheorem{theorem}[thm_counter]{Theorem}
\newtheorem{lemma}[thm_counter]{Lemma}
\newtheorem{corollary}[thm_counter]{Corollary}
\newtheorem{assumption}[ass_counter]{Assumption}
\Crefname{assumption}{Assumption}{Assumptions}
\def\nrc{\color{black}}
\newenvironment{salign*}{\begin{align*}\small}{\end{align*}}
\title{Communication Compression for Decentralized Training}
\author[1]{Hanlin Tang\thanks{\texttt{htang14@ur.rochester.edu}}}
\author[2]{Shaoduo Gan\thanks{\texttt{sgan@inf.ethz.ch}}}
\author[2]{Ce Zhang\thanks{\texttt{ce.zhang@inf.ethz.ch}}}
\author[3]{Tong Zhang\thanks{\texttt{tongzhang@tongzhang-ml.org}}}
\author[3,1]{Ji Liu\thanks{\texttt{ji.liu.uwisc@gmail.com}}}
\affil[1]{Department of Computer Science, University of Rochester}
\affil[2]{Department of Computer Science, ETH Zurich}
\affil[3]{Tencent AI Lab}
\begin{document}

\maketitle

\begin{abstract}

Optimizing distributed learning systems is an art
of balancing between computation and communication.
There have been two lines of research that try to
deal with slower networks: {\em communication 
compression} for
low bandwidth networks, and {\em decentralization} for
high latency networks. In this paper, We explore
a natural question: {\em can the combination
of both techniques lead to
a system that is robust to both bandwidth
and latency?}

Although the system implication of such combination
is trivial, the underlying theoretical principle and
algorithm design is challenging:  unlike centralized algorithms, simply compressing
{\rc exchanged information,
even in an unbiased stochastic way, 
within the decentralized network would accumulate the error and fail to converge.} 
In this paper, we develop
a framework of compressed, decentralized training and
propose two different strategies, which we call
{\em extrapolation compression} and {\em difference compression}.
We analyze both algorithms and prove 
both converge at the rate of $O(1/\sqrt{nT})$ 
where $n$ is the number of workers and $T$ is the
number of iterations, matching the convergence rate for
full precision, centralized training. We validate 
our algorithms and find that our proposed algorithm outperforms
the best of merely decentralized and merely quantized
algorithm significantly for networks with {\em both} 
high latency and low bandwidth.
\end{abstract}

\section{Introduction}


When training machine learning models in a distributed
fashion, the underlying constraints of how workers (or nodes)
communication have a significant impact on the training
algorithm. When workers cannot form a fully connected
communication topology 
or the communication latency is high ({\rc 
e.g., in sensor networks or mobile networks}), 
decentralizing the
communication comes to the rescue. On the other hand,
when the amount of data sent through the network is an
optimization objective (maybe to lower the cost or energy consumption), 
or the network bandwidth is low,
compressing the traffic,
either via sparsification \citep{wangni2017gradient, konevcny2016randomized} or quantization \citep{zhang2017zipml, pmlr-v70-suresh17a} is a popular strategy. 
In this paper, our goal is to develop a 
novel framework that works robustly in an environment
that {\em both} decentralization and communication
compression could be beneficial. {\rc In this paper, we focus on quantization, the process of lowering the precision of data representation, often in a
stochastically unbiased way. But the same techniques
would apply to other unbiased compression schemes such as
sparsification.

Both decentralized training and quantized (or compressed more generally) training have
attracted intensive interests recently \citep{Yuan_dsgd, zhao2016decentralized, Lian_dsgd, konevcny2016randomized, alistarh2017qsgd}. 
Decentralized algorithms usually exchange local models 
among nodes, which consumes the main communication budget; on
the other hand, quantized algorithms usually exchange
quantized gradient, and update an un-quantized model.
A straightforward idea to combine these two is to directly 
quantize the models sent through the network during 
decentralized training. However, this simple strategy does not
converge to the right solution as the quantization error would
accumulate during training. The technical contribution of
this paper is to develop novel algorithms that combine {\em both}
decentralized training and quantized training together.

\paragraph{ Problem Formulation.} We consider the following decentralized optimization:
{\small
\begin{equation}
\min_{x\in\mathbb{R}^{N}}\quad f(x) = {1\over n} \sum_{i=1}^n \underbrace{\mathbb{E}_{\xi\sim\mathcal{D}_i}F_{i}(x; \xi)}_{=: f_i(x)},\label{eq:main}
\end{equation}
}
where $n$ is the number of node and $\mathcal{D}_i$ is the local data distribution for node $i$. $n$ nodes form a connected graph and each node can only communicate with its neighbors. {\nrc Here we only assume $f_i(x)$'s are with L-Lipschitzian gradients.}

\paragraph{ Summary of Technical Contributions.}
In this paper, we propose two decentralized parallel stochastic gradient descent algorithms (D-PSGD): extrapolation compression D-PSGD (ECD-PSGD) and 
difference compression D-PSGD (DCD-PSGD). Both algorithms can be proven to converge in the rate roughly $O(1/\sqrt{nT})$ where $T$ is the number of iterations. The convergence rates are consistent with two special cases: centralized parallel stochastic gradient descent (C-PSGD) and D-PSGD. {\rc To the best of our knowledge, this is the first work to combine quantization algorithms and decentralized algorithms for generic optimization.}

The key difference between ECD-PSGD and 
DCD-PSGD is that DCD-PSGD quantizes the {\em difference} between
the last two local models, and ECD-PSGD quantizes the 
{\em extrapolation} between the last two local models. DCD-PSGD admits a slightly better convergence rate than ECD-PSGD when the data variation among nodes is very large. On the other hand, ECD-PSGD is more
robust to more aggressive quantization, as extremely low precision 
quantization can cause DCD-PSGD to diverge, since DCD-PSGD has strict constraint on quantization. 
In this paper, we analyze both algorithms, and empirically
validate our theory. We also show that when the underlying
network has both high latency and low bandwidth, 
both algorithms outperform state-of-the-arts 
significantly. {\rc We present both algorithm because
we believe both of them are theoretically interesting. In
practice, ECD-PSGD could potentially be a more robust
choice.}

\paragraph{Definitions and notations}
Throughout this paper, we use following notations and definitions:
\begin{itemize}[fullwidth]
\item $\nabla f(\cdot)$ denotes the gradient of a function $f$.
\item $f^{*}$ denotes the optimal solution of \eqref{eq:main}.
\item $\lambda_{i}(\cdot)$ denotes the $i$-th largest eigenvalue of a matrix.
\item $\bm{1}=[1,1,\cdots,1]^{\top}\in\mathbb{R}^n$ denotes the full-one vector.
\item $\|\cdot\|$ denotes the $l_2$ norm for vector.
\item $\|\cdot\|_F$ denotes the vector Frobenius norm of matrices.
\item $\bm{C}(\cdot)$ denotes the compressing operator.
\item $f_i(x) := \mathbb{E}_{\xi\sim\mathcal{D}_i}F_{i}(x; \xi)$.
\end{itemize}

\section{Related work}

\paragraph{Stochastic gradient descent} The \textsl{Stocahstic Gradient Descent} (\textbf{SGD}) \citep{Ghadimi_dsgd,Moulines_dsgd,Nemi_dsgd} - a stochastic variant of the gradient descent method - has been widely used for solving large scale machine learning problems \citep{Leon_sgd}. It admits the optimal convergence rate $O ( 1/\sqrt{T} )$ {\nrc for non-convex functions}. \\
\paragraph{Centralized algorithms} The centralized algorithms is a widely used scheme for parallel computation, such as Tensorflow \citep{abadi2016tensorflow}, MXNet \citep{chen2015mxnet}, and CNTK \citep{Seide:2016:CMO:2939672.2945397}. It uses a central node to control all leaf nodes. For \textsl{Centralized Parallel Stochastic Gradient Descent} (\textbf{C-PSGD}), the central node performs parameter updates and leaf nodes compute stochastic gradients based on local information in parallel. In \citet{agarwal2011distributed,zinkevich2010parallelized}, the effectiveness of C-PSGD is studied with latency taken into consideration. The distributed mini-batches SGD, which requires each leaf node to compute the stochastic gradient more than once before the parameter update, is studied in \citet{dekel2012optimal}. \citet{recht2011hogwild} proposed a variant of C-PSGD, HOGWILD, and proved that it would still work even if we allow the memory to be shared and let the private mode to be overwriten by others. The asynchronous non-convex C-PSGD optimization is studied in \citet{lian2015asynchronous}. \citet{zheng2016asynchronous} proposed an algorithm to improve the performance of the asynchronous C-PSGD. In \citet{alistarh2017qsgd,de2017understanding}, a quantized SGD is proposed to save the communication cost for both convex and non-convex object functions. The convergence rate for C-PSGD is $O (1/\sqrt{Tn}) $. The tradeoff between the mini-batch number and the local SGD step is studied in \citet{DBLP:journals/corr/abs-1808-07217,Stich18local}.\\
\paragraph{Decentralized algorithms} 
Recently, decentralized training algorithms have attracted significantly
amount of attentions. Decentralized algorithms 
are mostly applied to solve the consensus problem \citep{zhang2017projection,Lian_dsgd,Sirb_dsgd}, where the network topology is decentralized. A recent work shows that decentralized algorithms could outperform the centralized counterpart for distributed training \citep{Lian_dsgd}. The main advantage of decentralized algorithms over centralized algorithms lies on avoiding the communication traffic in the central node. In particular, decentralized algorithms could be much more efficient than centralized algorithms when the network bandwidth is small and the latency is large.
The decentralized algorithm (also named gossip algorithm in some literature under certain scenarios \citep{colin2016gossip}) only assume a connect computational network, without using the central node to collect information from all nodes. Each node owns its local data and can only exchange information with its neighbors. The goal is still to learn a model over all distributed data.
The decentralized structure can applied in solving of multi-task multi-agent reinforcement learning \citep{omidshafiei2017deep,mhamdi2017dynamic}. \citet{Boyd_dsgd} uses a randomized {\nrc weighted} matrix and studied the effectiveness of the {\nrc weighted} matrix in different situations. Two methods \citep{li2017decentralized,Shi_dgd} were proposed to blackuce the steady point error in decentralized gradient descent convex optimization. \citet{dobbe2017fully} applied an information theoretic framework for decentralize analysis. The performance of the decentralized algorithm is dependent on the second largest eigenvalue of the {\nrc weighted} matrix. In \citet{NIPS2018_7705}, In \citet{NIPS2018_7705}, they proposed the gradient descent based algorithm (\textbf{CoLA}) for decentralized learning of linear classification and regression models, and proved the convergence rate for strongly convex and general convex cases.\\
\paragraph{Decentralized parallel stochastic gradient descent} The \textsl{Decentralized Parallel Stochastic Gradient Descent} (\textbf{D-PSGD}) \citep{nedic2009distributed,Yuan_dsgd} requires each node to exchange its own stochastic gradient and update the parameter using the information it receives. In \citet{nedic2009distributed}, the convergence rate for a time-varying topology was proved when the maximum of the subgradient is assumed to be bounded. In \citet{Lan_dsgd}, a new decentralized primal-dual type method is proposed with a computational complexity of {\nrc$O(\sqrt{n/T})$} for general convex objectives. The linear speedup of D-PSGD is proved in \citet{Lian_dsgd}, where the computation complexity is {\nrc$O (1/\sqrt{nT})$}. The asynchronous variant of D-PSGD is studied in \citet{Lian_adsgd}. \\
\paragraph{Compression} To guarantee the convergence and correctness, this paper only considers using the unbiased stochastic compression techniques. Existing methods include randomized quantization \citep{zhang2017zipml, pmlr-v70-suresh17a} and randomized sparsification \citep{wangni2017gradient, konevcny2016randomized}. Other compression methods can be found in~\citet{kashyap2007quantized,lavaei2012quantized,nedic2009quan}. In \citet{DBLP:conf/nips/DrumondLJF18}, a compressed DNN training algorithm is proposed. In \citet{NIPS2018_7697}, a centralized biased sparsified parallel SGD with memory is studied and proved to admits an factor of acceleration.

\begin{wrapfigure}{R}{10cm}
\centering
\vspace{-0.5cm}
\includegraphics[width=7.5cm]{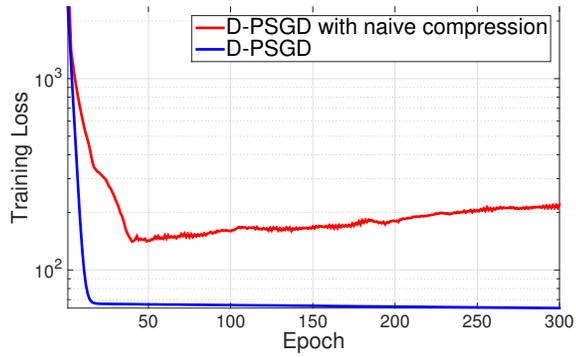}
\vspace{0cm}
\caption{D-PSGD vs. D-PSGD with naive compression }
\vspace{0.5cm}
\label{Fig:naive}
\end{wrapfigure}

\section{Preliminary: decentralized parallel stochastic gradient descent (D-PSGD)}
Unlike the traditional (centralized) parallel stochastic gradient descent (C-PSGD), which requires a central node to compute the average value of all leaf nodes, the decentralized parallel stochastic gradient descent (D-PSGD) algorithm does not need such a central node. Each node (say node $i$) only exchanges its local model $\bm{x}^{(i)}$ with its neighbors to take weighted average, specifically, $\bm{x}^{(i)} = \sum_{j=1}^nW_{ij}\bm{x}^{(j)}$ where $W_{ij} \geq 0$ in general and $W_{ij}=0$ means that node $i$ and node $j$ is not connected.
At $t$th iteration, D-PSGD consists of three steps ($i$ is the node index):

{\bf 1.} Each node computes the stochastic gradient $\nabla F_i(\bm{x}^{(i)}_t;\xi_t^{(i)})$, where $\xi^{(i)}_t$ is the samples from its local data set and $\bm{x}^{(i)}_t$ is the local model on node $i$.

{\bf 2.} Each node queries its neighbors' variables and updates its local model using $\bm{x}^{(i)} = \sum_{j=1}^nW_{ij}\bm{x}^{(j)}$.

{\bf 3.} Each node updates its local model {\small$\bm{x}^{(i)}_t \gets \bm{x}^{(i)}_t - \gamma_t \nabla F_i\left(\bm{x}_t^{(i)};\xi^{(i)}_t\right) $} using stochastic gradient, where $\gamma_t$ is the learning rate.

To look at the D-PSGD algorithm from a global view, by defining{\small
\begin{align*}
&X :=  [\bm{x}^{(1)}, \bm{x}^{(2)}, \cdots, \bm{x}^{(n)}] \in \mathbb{R}^{N\times n},\quad
G(X; \xi) :=  [\nabla F_1(x^{(1)}; \xi^{(1)}), \cdots, \nabla F_n(x^{(n)}; \xi^{(n)})] \\
 &\nabla f(\overline{X}) := \sum_{i=1}^{n}\frac{1}{n}\nabla f_i\left(\frac{1}{n}\sum_{i=1}^nx^{(i)}\right),\quad
\overline{\nabla f}(X):=  \mathbb{E}_{\xi}G(X;\xi_t)\frac{\bm{1}}{n}=\frac{1}{n}\sum_{i=1}^n\nabla f_i(x^{(i)}),
\end{align*}}
the D-PSGD can be summarized into the form
$ X_{t+1} = X_t W - \gamma_t G(X_t; \xi_t)$.

The convergence rate of D-PSGD can be shown to be {\small$O \left(\frac{\sigma}{\sqrt{nT}} + \frac{n^{\frac{1}{3}} \zeta^{\frac{2}{3}}}{T^{\frac{2}{3}}}\right) $} (without assuming  convexity) where both $\sigma$ and $\zeta$ are the stochastic variance (please refer to Assumption~\ref{ass:global} for detailed definitions), if the learning rate is chosen appropriately.

\section{Quantized, Decentralized Algorithms}
We introduce two quantized decentralized algorithms that compress information exchanged between nodes. All communications for decentralized algorithms are exchanging local models $\bm{x}^{(i)}$. 

To reduce the communication cost, a straightforward idea is to compress the information exchanged within the decentralized network just like centralized algorithms sending compressed stochastic gradient \citep{2016arXiv161002132A}. Unfortunately, such naive combination does not work even using the unbiased stochastic compression and diminishing learning rate as shown in Figure~\ref{Fig:naive}. The reason can be seen from the detailed derivation (please find it in Supplement).

 Before propose our solutions to this issue, let us first make some common optimization assumptions for analyzing decentralized stochastic algorithms~\citep{Lian_adsgd}.
\begin{assumption}
\label{ass:global}
Throughout this paper, we make the following commonly used assumptions:
\begin{enumerate}

  \item \textbf{Lipschitzian gradient:} All function $f_i(\cdot)$'s are with $L$-Lipschitzian gradients.
  \item \textbf{Symmetric double stochastic matrix:} The  weighted matrix $W$ is a real double stochastic matrix that satisfies $W=W^{\top}$ and $W\bm{1}=W$.
  \item \textbf{Spectral gap:} Given the symmetric doubly stochastic matrix $W$,
    we define $\rho := \max \{| \lambda_2 (W) |, | \lambda_n (W) |\}$ and assume $\rho<1$.
  \item \textbf{Bounded variance:} Assume the variance of stochastic gradient to be bounded
{\small\begin{align*}
    \mathbb{E}_{\xi\sim \mathcal{D}_i} \left\| \nabla F_i (\bm{x}; \xi) - \nabla f_i (\bm{x})\right\|^2 \leqslant  \sigma^2, 
    \quad
     {1\over n}\sum_{i=1}^n\left\| \nabla f_i (\bm{x})-\nabla f (\bm{x})\right\|^2 \leqslant  \zeta^2, \quad \forall i, \forall \bm{x},
\end{align*}}
  \item \textbf{Independent and unbiased stochastic compression:} The stochastic compression operation $\bm{C}(\cdot)$ is unbiased, that is, $\mathbb{E}(\bm{C}(Z)) = Z$ for any $Z$ and the stochastic compressions are independent on different workers or at different time point.
  \end{enumerate}
\end{assumption}
The last assumption essentially restricts the compression to be lossy but unbiased. Biased stochastic compression is generally hard to ensure the convergence and lossless compression can combine with any algorithms. Both of them are beyond of the scope of this paper. The commonly used stochastic unbiased compression include random quantization\footnote{A real number is randomly quantized into one of closest thresholds, for example, givens the thresholds $\{0, 0.3, 0.8, 1\}$, the number ``$0.5$'' will be quantized to $0.3$ with probability $40\%$ and to $0.8$ with probability $60\%$. Here, we assume that all numbers have been normalized into the range $[0,1]$.} \citep{zhang2017zipml} and sparsification\footnote{A real number $z$ is set to $0$ with probability $1-p$ and to $z/p$ with probability $p$.} \citep{wangni2017gradient,konevcny2016randomized}.

%
%
%

\subsection{Difference compression approach}

\begin{figtab}[t!]
\vspace{-0.4cm}
\scriptsize

\begin{minipage}[b]{0.43\textwidth}
\begin{algorithm}[H]\label{alg_2}
\caption{DCD-PSGD}\label{alg2}
\begin{algorithmic}[1]
\scriptsize
\STATE {\bfseries Input:} Initial point $\bm{x}^{(i)}_1=\bm{x}_1$, initial replica $\hat{\bm{x}}^{(i)}_1=\bm{x}_1$, iteration step length $\gamma$, {\nrc weighted} matrix $W$, and number of total iterations T
\FOR{t = 1,2,...,T}
\STATE Randomly sample $\xi^{(i)}_t$ from local data of the $i$th node
\STATE Compute local stochastic gradient $\nabla F_i(\bm{x}^{(i)}_t;\xi^{(i)}_t)$ using $\xi^{(i)}_t$ and current optimization variable $\bm{x}^{(i)}_t$
\STATE \label{alg:step} Update the local model using local stochastic gradient and the weighted average of its connected neighbors' replica {\rc (denote as $\hat{\bm{x}}^{(j)}_t$)}:
\begin{align*}
\bm{x}_{t+\frac{1}{2}}^{(i)}=\sum_{j=1}^{n}W_{ij}\bm{x}^{(j)}_t -\gamma\nabla F_i(\bm{x}^{(i)}_t;\xi^{(i)}_t),
\end{align*}
\STATE Each node computes
$\bm{z}^{(i)}_{t} = \bm{x}_{t+{1\over 2}}^{(i)} - \bm{x}^{(i)}_t,$
and compress this $\bm{z}^{(i)}_{t}$ into $\bm{C}(\bm{z}_t^{(i)})$.
\STATE Update the local optimization variables 
\begin{align*}
\bm{x}_{t+1}^{(i)}\gets \bm{x}_{t}^{(i)} + \bm{C}(\bm{z}_t^{(i)}).
\end{align*}
\STATE Send $\bm{C}(\bm{z}_t^{(i)})$ to its connected neighbors, and update the replicas of its connected neighbors' values:
\begin{align*}
\hat{\bm{x}}_{t+1}^{(j)} = \hat{\bm{x}}_{t}^{(j)} + \bm{C}(\bm{z}_t^{(i)}).
\end{align*}
\ENDFOR
\STATE {\bfseries Output:} $\frac{1}{n}\sum_{i=1}^{n}\bm{x}^{(i)}_T$
\end{algorithmic}
\end{algorithm}
\end{minipage}\quad
\begin{minipage}[b]{0.5\textwidth}

\begin{algorithm}[H]
{
\caption{ECD-PSGD}\label{alg1}
\begin{algorithmic}[1]
\scriptsize
\STATE {\bfseries Input:} Initial point $\bm{x}^{(i)}_1=\bm{x}_1$, initial estimate ${\rc \tilde{\bm{x}}}^{(i)}_1=\bm{x}_1$, iteration step length $\gamma$, {\nrc weighted} matrix $W$, and number of total iterations T.
\FOR{$t = 1,2,\cdots,T$}
\STATE Randomly sample $\xi^{(i)}_t$ from local data of the $i$th node
\STATE Compute local stochastic gradient $\nabla F_i(\bm{x}^{(i)}_t;\xi^{(i)}_t)$ using $\xi^{(i)}_t$ and current optimization variable $\bm{x}^{(i)}_t$
\STATE Compute the neighborhood weighted average by using the estimate value of the connected neighbors :
\[ 
\bm{x}_{t+\frac{1}{2}}^{(i)}=\sum_{j=1}^{n}W_{ij}{\rc \tilde{\bm{x}}}^{(j)}_t
\]
\STATE Update the local model
\[
\bm{x}_{t+1}^{(i)}\gets \bm{x}_{t+\frac{1}{2}}^{(i)}-\gamma\nabla F_i(\bm{x}^{(i)}_t,\xi^{(i)}_t)
\]
\STATE Each node computes the $z$-value of itself:
\[
\bm{z}^{(i)}_{t+1} = \left(1-0.5t\right)\bm{x}_t^{(i)}+0.5t\bm{x}_{t+1}^{(i)}
\]
and compress this $\bm{z}^{(i)}_{t}$ into $\bm{C}(\bm{z}_t^{(j)})$.
\STATE Each node updates the estimate for its connected neighbors:
\[{\rc \tilde{\bm{x}}}_{t+1}^{(j)}=\left(1-2t^{-1}\right){\rc \tilde{\bm{x}}}^{(j)}_t+2t^{-1}\bm{C}(\bm{z}_t^{(j)})
\]
\ENDFOR
\STATE {\bfseries Output:} $\frac{1}{n}\sum_{i=1}^{n}\bm{x}^{(i)}_T$
\end{algorithmic}
}
\end{algorithm}
\end{minipage}
\end{figtab}

 In this section, we introduces a difference based approach, namely, difference compression D-PSGD (DCD-PSGD), to ensure efficient convergence. 

The DCD-PSGD basically follows the framework of D-PSGD, except that nodes exchange the compressed difference of local models between two successive iterations, instead of exchanging local models. More specifically, each node needs to store its neighbors' models in last iteration $\{{\rc{\hat{\bm{x}}}^{(j)}_{t}}: j~\text{is node $i$'s neighbor}\}$ and follow the following steps:
\begin{enumerate} 
\item take the weighted average and apply stochastic gradient descent step: 
$\bm{x}_{t+\frac{1}{2}}^{(i)}=\sum_{j=1}^{n}W_{ij}{{\rc\hat{\bm{x}}}^{(j)}_t} -\gamma\nabla F_i(\bm{x}^{(i)}_t;\xi^{(i)}_t)$,
where {\rc$\hat{\bm{x}}^{(j)}_t$} is just the replica of $\bm{x}^{(j)}_t$ but is stored on node $i$\footnote{Actually each neighbor of node $j$ maintains a replica of $\bm{x}^{(j)}_t$.};
\item compress the difference between $\bm{x}^{(i)}_{t}$ and $\bm{x}_{t+\frac{1}{2}}^{(i)}$ and update the local model:$\bm{z}_t^{(i)} = \bm{x}_{t+\frac{1}{2}}^{(i)} -\bm{x}_t^{(i)},\quad \bm{x}^{(i)}_{t+1} =\bm{x}^{(i)}_t + \bm{C}(\bm{z}_t^{(i)})$;
\item send $\bm{C}(\bm{z}_t^{(i)})$ and query neighbors' $\bm{C}(\bm{z}_t)$ to update the local replica:
$\forall j~\text{is node $i$'s neighbor}$,
$\hat{\bm{x}}_{t+1}^{(j)} = \hat{\bm{x}}^{(j)}_t + \bm{C}(\bm{z}_t^{(j)})$.
\end{enumerate}

The full DCD-PSGD algorithm is described in Algorithm~\ref{alg2}.

To {\rc ensure convergence,} we need to make some restriction on the compression operator $\bm{C}(\cdot)$. Again this compression operator could be random quantization or random sparsification or any other operators. We  introduce the definition of the signal-to-noise related parameter $\alpha$. 
Let $\alpha := \sqrt{\sup_{Z\neq 0} {\|Q\|^2_F / \|Z\|^2_F}}$, where $Q=Z-C(Z)$. We have the following theorem.

\begin{theorem}\label{theo_2}
Under the Assumption~\ref{ass:global}, 
if $\alpha$ satisfies $(1-\rho)^2-4\mu^2\alpha^2>0$, choosing 
$\gamma$ {satisfying} $1-3D_1L^2\gamma^2>0$, we have the following convergence rate for
  Algorithm~\ref{alg2}{\small
\begin{align*}
& \sum_{t=1}^T\left(\left(1-D_3\right)\mathbb{E}\|\nabla f(\overline{X}_t)\|^2 + D_4\mathbb{E}\|\overline{\nabla f}(X_t)\|^2\right) 
\leq  \frac{2(f(0)-f^*)}{\gamma} + \frac{L\gamma T\sigma^2}{n}\\
& + \left(\frac{T\gamma^2LD_2}{2} + \frac{\left(4L^2 + 3L^3D_2\gamma^2\right)D_1T\gamma^2}{1-3D_1L^2\gamma^2}\right)\sigma^2 
 + \frac{\left(4L^2 + 3L^3D_2\gamma^2\right)3D_1T\gamma^2}{1-3D_1L^2\gamma^2}\zeta^2, \numberthis \label{bound_theo_2}
\end{align*}}
where $\mu :=  \max_{i\in\{2,\cdots ,n\}}|\lambda_i-1|$, and {\small
\begin{alignat*}{2}
&D_1 :=  \frac{2\alpha^2}{1-\rho^2}\left(\frac{2\mu^2(1+2\alpha^2)}{(1-\rho)^2 - 4\mu^2\alpha^2}+1\right) + \frac{1}{(1-\rho)^2},
&\quad &
D_2 :=  2\alpha^2\left(\frac{2\mu^2(1+2\alpha^2)}{(1-\rho)^2 - 4\mu^2\alpha^2}+1\right)\\
&D_3 :=  \frac{\left(4L^2 + 3L^3D_2\gamma^2\right)3D_1\gamma^2}{1-3D_1L^2\gamma^2} + \frac{3LD_2\gamma^2}{2},
&\quad &
D_4 :=  \left(1-L\gamma\right).
\end{alignat*}}
\end{theorem}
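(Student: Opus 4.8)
The structural observation I would start from is that Algorithm~\ref{alg2} is exactly D-PSGD perturbed by a conditionally mean-zero, variance-controlled noise --- this is why the ``difference'' trick repairs the naive scheme, in which the quantization error instead enters as an un-cancelled bias and accumulates. Since every neighbor of node $j$ and node $j$ itself start their (replica of the) model at $\bm{x}_1$ and thereafter add the \emph{same} broadcast increment $\bm{C}(\bm{z}_t^{(j)})$, we have $\hat{\bm{x}}_t^{(j)}=\bm{x}_t^{(j)}$ for all $t$ and $j$. Writing $Z_t:=X_t(W-I)-\gamma G(X_t;\xi_t)$ and $Q_t:=Z_t-\bm{C}(Z_t)$, the iteration collapses to $X_{t+1}=X_tW-\gamma G(X_t;\xi_t)-Q_t$. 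By the unbiased-compression assumption, $\mathbb{E}[Q_t\mid\mathcal{F}_t,\xi_t]=0$ (with $\mathcal{F}_t$ the history through step $t$), and $\|Q_t\|_F^2\le\alpha^2\|Z_t\|_F^2\le 2\alpha^2\mu^2\|R_t\|_F^2+2\alpha^2\gamma^2\|G(X_t;\xi_t)\|_F^2$, where $R_t:=X_t(I-\tfrac{1}{n}\bm{1}\bm{1}^{\top})$ is the consensus residual; here I use $X_t(W-I)=R_t(W-I)$ and $\|R_t(W-I)\|_F\le\mu\|R_t\|_F$, both consequences of $W\bm{1}=\bm{1}$ and $W=W^{\top}$.

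\emph{Step 1: descent on the averaged iterate.} Right-multiplying by $\bm{1}/n$ gives $\overline{X}_{t+1}=\overline{X}_t-\gamma\overline{g}_t-\overline{q}_t$ with $\overline{g}_t:=\tfrac{1}{n}G(X_t;\xi_t)\bm{1}$ and $\overline{q}_t:=\tfrac{1}{n}Q_t\bm{1}$. Applying $L$-smoothness of $f$ and taking expectations, I would kill the cross terms using $\mathbb{E}[\overline{q}_t\mid\mathcal{F}_t,\xi_t]=0$; bound $\mathbb{E}\|\overline{g}_t-\overline{\nabla f}(X_t)\|^2\le\sigma^2/n$ by bounded variance and cross-node independence; bound $\|\nabla f(\overline{X}_t)-\overline{\nabla f}(X_t)\|^2\le\tfrac{L^2}{n}\|R_t\|_F^2$ by $L$-smoothness, Jensen and the $\zeta^2$ bound; bound $\mathbb{E}\|\overline{q}_t\|^2\le\tfrac{\alpha^2}{n}\mathbb{E}\|Z_t\|_F^2$; and split the first-order term via the polarization identity. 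Re-expanding $\mathbb{E}\|Z_t\|_F^2$ and $\mathbb{E}\|G(X_t;\xi_t)\|_F^2\le n\sigma^2+3L^2\mathbb{E}\|R_t\|_F^2+3n\zeta^2+3n\mathbb{E}\|\nabla f(\overline{X}_t)\|^2$ yields a one-step inequality whose right side, besides $\mathbb{E}f(\overline{X}_t)$ and the negative terms $-\tfrac{\gamma}{2}\mathbb{E}\|\nabla f(\overline{X}_t)\|^2$ and $-\tfrac{\gamma}{2}(1-L\gamma)\mathbb{E}\|\overline{\nabla f}(X_t)\|^2$ (the second coefficient being the source of $D_4$), carries an explicit positive multiple of $\mathbb{E}\|R_t\|_F^2$, the term $\tfrac{L\gamma^2\sigma^2}{2n}$, and further higher-order-in-$\gamma$ multiples of $\sigma^2$, $\zeta^2$ and $\mathbb{E}\|\nabla f(\overline{X}_t)\|^2$.

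\emph{Step 2: accumulated consensus error.} Projecting off $\bm{1}$ (using $WJ=J$ for $J:=\tfrac{1}{n}\bm{1}\bm{1}^{\top}$) gives $R_{t+1}=R_tW-\gamma G(X_t;\xi_t)(I-J)-Q_t(I-J)$, $R_1=0$. Taking squared Frobenius norms and expectations: the $Q_t$ cross term vanishes (it pairs with an $(\mathcal{F}_t,\xi_t)$-measurable matrix); $\|R_tW\|_F\le\rho\|R_t\|_F$ by the spectral gap; and I would split the remaining cross term by Young's inequality with a free parameter $\eta$ tuned so that $(1+\eta)\rho^2+2\alpha^2\mu^2<1$, which is possible precisely because the hypothesis $(1-\rho)^2-4\mu^2\alpha^2>0$ holds. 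Inserting the $\|Q_t\|_F^2$ and $\|G_t\|_F^2$ bounds produces a recursion $e_{t+1}\le c\,e_t+\gamma^2 b_t$ with $e_t:=\mathbb{E}\|R_t\|_F^2$, $c<1$, and $b_t$ a fixed nonnegative combination of $n\sigma^2$, $n\zeta^2$ and $n\mathbb{E}\|\nabla f(\overline{X}_t)\|^2$. Summing from $e_1=0$ (or, equivalently, resubstituting $\sum_t e_t$ into itself) gives $\sum_{t=1}^{T}\mathbb{E}\|R_t\|_F^2\le\tfrac{\gamma^2}{1-c}\sum_{t=1}^{T}b_t$; carrying the geometric factor and the constants $\rho,\mu,\alpha,\eta$ through is exactly what manufactures $D_1$ and $D_2$ --- the denominator $(1-\rho)^2-4\mu^2\alpha^2$ appearing in them being the convergence condition for this series.

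\emph{Step 3: assembly, and the main obstacle.} I would sum the Step~1 inequality over $t=1,\dots,T$, telescope $\sum_t(\mathbb{E}f(\overline{X}_t)-\mathbb{E}f(\overline{X}_{t+1}))\le f(\overline{X}_1)-f^*=f(0)-f^*$ (with $\overline{X}_1=\bm{x}_1$ at the origin), substitute the Step~2 bound for $\sum_t\mathbb{E}\|R_t\|_F^2$, multiply through by $2/\gamma$, and move the positive multiples of $\mathbb{E}\|\nabla f(\overline{X}_t)\|^2$ --- coming from the descent term, from $\mathbb{E}\|\overline{q}_t\|^2$, and from the consensus feedback --- to the left; their total, subtracted from $1$, is precisely $1-D_3$, while the leftover $\sigma^2$ and $\zeta^2$ contributions reassemble into the right side of~\eqref{bound_theo_2}. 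The second stepsize hypothesis $1-3D_1L^2\gamma^2>0$ is exactly what makes this substitution legitimate, i.e.\ keeps the coefficient of $\sum_t\mathbb{E}\|R_t\|_F^2$ (hence $D_1$, $D_3$) finite and correctly signed. Everything but Step~2 follows the standard D-PSGD template; the genuinely new difficulty --- and the reason the spectral condition $(1-\rho)^2>4\mu^2\alpha^2$ must be imposed --- is that the compression noise $Q_t$ feeds back into the consensus residual with a factor $2\alpha^2\mu^2$ that does not vanish as $\gamma\to0$, so the consensus recursion contracts only when the quantization is not too aggressive relative to the network's spectral gap; disentangling clean, simultaneously valid constants from this coupled descent/consensus pair is the bulk of the remaining bookkeeping.
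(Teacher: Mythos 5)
Your plan is correct and, at the top level, follows the same template as the paper: a descent inequality for the averaged iterate (your Step 1 is the paper's Lemma~\ref{lemma:boundfplus}, including the vanishing of the compression cross term, the $\sigma^2/n$ variance term, the polarization split, and the $\frac{\gamma}{2}(1-L\gamma)$ coefficient that becomes $D_4$), the gradient-norm bound $\mathbb{E}\|G(X_t;\xi_t)\|_F^2\le n\sigma^2+3L^2\mathbb{E}\|R_t\|_F^2+3n\zeta^2+3n\mathbb{E}\|\nabla f(\overline{X}_t)\|^2$ (Lemma~\ref{lemma3}), and the final telescoping/rearrangement under $1-3D_1L^2\gamma^2>0$. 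Your observation $\hat{\bm{x}}_t^{(j)}=\bm{x}_t^{(j)}$ and the resulting reformulation $X_{t+1}=X_tW-\gamma G(X_t;\xi_t)\pm Q_t$ with $\|Q_t\|_F^2\le 2\alpha^2\mu^2\|R_t\|_F^2+2\alpha^2\gamma^2\|G\|_F^2$ is exactly the paper's setup (its bound \eqref{alg2:Q_t^2}, since $\sum_{i\ge 2}\|\bm{y}_t^{(i)}\|^2=\|R_t\|_F^2$). Where you genuinely diverge is Step 2: the paper never forms a one-step contraction for $\mathbb{E}\|R_t\|_F^2$; it unrolls the iteration, passes to the eigenbasis of $W$ (Lemma~\ref{lemma_bound_substract_mean}), uses the summation Lemma~\ref{lemma1} to get $\sum_t\|R_t\|_F^2\le\frac{2}{1-\rho^2}\sum_t\|Q_t\|_F^2+\frac{2}{(1-\rho)^2}\sum_t\gamma^2\|G_t\|_F^2$, and then closes a \emph{global} self-bounding inequality on $\sum_t\|Q_t\|_F^2$ via the $\bm{y}_t^{(i)}$-recursion, which is where the factor $\frac{2\mu^2(1+2\alpha^2)}{(1-\rho)^2-4\mu^2\alpha^2}+1$ and hence $D_1,D_2$ come from. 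Your per-step recursion with a Young parameter $\eta$ is more elementary and does contract: you need $(1+\eta)\rho^2+2\alpha^2\mu^2<1$, i.e.\ $\rho^2+2\alpha^2\mu^2<1$, which is indeed implied by (though strictly weaker than) $(1-\rho)^2>4\mu^2\alpha^2$, so your feasibility claim is fine in the needed direction. The one caveat is your assertion that this route ``manufactures $D_1$ and $D_2$'': it manufactures constants of the same form, namely $\frac{(1+\eta^{-1})+2\alpha^2}{1-(1+\eta)\rho^2-2\alpha^2\mu^2}$ in place of $2D_1$, which do not coincide with the theorem's stated $D_1,D_2$ (and hence $D_3$) for any fixed $\eta$; so as written you would prove a bound of the same structure, valid under an even weaker spectral condition, rather than inequality~\eqref{bound_theo_2} verbatim. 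To land on the paper's exact constants you would need its two-scale split ($\frac{2}{1-\rho^2}$ on the noise part, $\frac{2}{(1-\rho)^2}$ on the gradient part) and its global closure of the $Q_t$ bound; otherwise your argument stands on its own as a slightly more elementary variant with $\eta$-dependent constants.
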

To make the result more clear, we appropriately choose the steplength in the following:

\begin{corollary} \label{cor:convergence_alg2}
Choose {\small $\gamma = \left(6\sqrt{D_1}L + 6\sqrt{D_2L}+\frac{\sigma}{\sqrt{n}}T^{\frac{1}{2}} + \zeta^{\frac{2}{3}}T^{\frac{1}{3}}\right)^{-1}$}
in Algorithm~\ref{alg2}. If $\alpha$ is small enough that satisfies $(1-\rho)^2-4\mu^2\alpha^2>0$, then we have{\small
\begin{align*}
\frac{1}{T}\sum_{t=1}^T\mathbb{E}\|\nabla f(\overline{X}_t)\|^2 \lesssim & \frac{\sigma}{\sqrt{nT}} + \frac{\zeta^{\frac{2}{3}}}{T^{\frac{2}{3}}}+ \frac{1}{T}.
\end{align*}
}
where $D_1$, $D_2$ follow to same definition in Theorem~\ref{theo_2} and we treat $f(0)- f^*$, $L$, and $\rho$ constants.
\end{corollary}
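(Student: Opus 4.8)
The plan is to substitute the prescribed step size $\gamma = \bigl(6\sqrt{D_1}L + 6\sqrt{D_2 L} + \tfrac{\sigma}{\sqrt n}T^{1/2} + \zeta^{2/3}T^{1/3}\bigr)^{-1}$ into the bound \eqref{bound_theo_2} of Theorem~\ref{theo_2}, divide through by $T$, and match each resulting term against one of $\tfrac{\sigma}{\sqrt{nT}}$, $\tfrac{\zeta^{2/3}}{T^{2/3}}$, $\tfrac{1}{T}$. The only properties of this $\gamma$ that I will use are the four one-sided estimates it satisfies by construction, namely $\gamma \le \tfrac{1}{6\sqrt{D_1}L}$, $\gamma \le \tfrac{1}{6\sqrt{D_2 L}}$, $\gamma \le \tfrac{\sqrt n}{\sigma\sqrt T}$, $\gamma \le \tfrac{1}{\zeta^{2/3}T^{1/3}}$, together with the matching upper bound $\gamma^{-1} \le 6\sqrt{D_1}L + 6\sqrt{D_2 L} + \tfrac{\sigma}{\sqrt n}T^{1/2} + \zeta^{2/3}T^{1/3}$. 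Since $(1-\rho)^2-4\mu^2\alpha^2>0$ is assumed, $D_1,D_2$ (hence $D_3,D_4$) are finite positive quantities depending only on $\rho,\mu,\alpha$, all treated as constants.

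First I would check that the hypothesis $1-3D_1L^2\gamma^2>0$ of Theorem~\ref{theo_2} is automatic: from $\gamma \le \tfrac{1}{6\sqrt{D_1}L}$ we get $3D_1L^2\gamma^2 \le \tfrac{1}{12}$, so $1-3D_1L^2\gamma^2 \ge \tfrac{11}{12}$ and every factor $\tfrac{1}{1-3D_1L^2\gamma^2}$ can be replaced by an absolute constant. Likewise $\gamma \le \tfrac{1}{6\sqrt{D_2 L}}$ gives $L^3D_2\gamma^2 \le \tfrac{L^2}{36}$, so $4L^2+3L^3D_2\gamma^2 = \Theta(L^2)$; combined with $L^2D_1\gamma^2 \le \tfrac1{36}$ and $LD_2\gamma^2 \le \tfrac1{36}$ this shows $D_3$, being a sum of two terms of exactly these shapes, is bounded by an absolute constant strictly less than $1$, so $1-D_3$ is a positive constant. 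Also $\gamma^{-1}\ge 6\sqrt{D_1}L\ge L$ gives $L\gamma\le1$, so $D_4=1-L\gamma\ge0$ and the term $D_4\,\mathbb{E}\|\overline{\nabla f}(X_t)\|^2$ on the left of \eqref{bound_theo_2} is nonnegative and may be dropped; after dividing by $T(1-D_3)$ it then suffices to show $\tfrac1T\,(\text{RHS of }\eqref{bound_theo_2}) \lesssim \tfrac{\sigma}{\sqrt{nT}}+\tfrac{\zeta^{2/3}}{T^{2/3}}+\tfrac1T$.

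It remains to bound the four pieces of the right-hand side after division by $T$. The first, $\tfrac{2(f(0)-f^*)}{\gamma T}$, expands through $\gamma^{-1}\le 6\sqrt{D_1}L+6\sqrt{D_2L}+\tfrac{\sigma}{\sqrt n}T^{1/2}+\zeta^{2/3}T^{1/3}$ into a constant multiple of $\tfrac1T + \tfrac{\sigma}{\sqrt{nT}} + \tfrac{\zeta^{2/3}}{T^{2/3}}$, producing all three target terms at once. The second, $\tfrac{L\gamma\sigma^2}{n}$, is $\lesssim \tfrac{\sigma}{\sqrt{nT}}$ by $\gamma\le\tfrac{\sqrt n}{\sigma\sqrt T}$. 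The $\zeta^2$-term is $\lesssim L^2D_1\gamma^2\zeta^2$ by the constant bounds above, and $\gamma\le\tfrac{1}{\zeta^{2/3}T^{1/3}}$ gives $\gamma^2\zeta^2\le\tfrac{\zeta^{2/3}}{T^{2/3}}$, matching the target. Finally the two $\gamma^2\sigma^2$-terms are $\lesssim L^2 D_1\gamma^2\sigma^2$ and, using $\gamma\le\tfrac{\sqrt n}{\sigma\sqrt T}$, are $O(n/T)$, which is dominated by $\tfrac{\sigma}{\sqrt{nT}}$ in the large-$T$ (linear-speedup) regime $T\gtrsim n^3/\sigma^2$ — the regime in which such a rate is informative — and otherwise collapses into the stated $\tfrac1T$ term. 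Assembling these gives the claim.

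The work here is bookkeeping rather than conceptual; the one place that needs care is keeping the chain of estimates for the composite constants consistent — in particular verifying that $D_3$ stays bounded away from $1$ even though $D_3$ itself depends on $\gamma$ (through $D_2\gamma^2$ and through the $1-3D_1L^2\gamma^2$ denominator) — because all the term-matching in the previous paragraph relies on $1-D_3$, $1-3D_1L^2\gamma^2$, and $4L^2+3L^3D_2\gamma^2$ being $\Theta(1)$, $\Theta(1)$, and $\Theta(L^2)$ respectively.
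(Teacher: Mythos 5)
Your proposal is correct and follows essentially the same route as the paper's own proof: substitute the prescribed $\gamma$ into the bound of Theorem~\ref{theo_2}, verify $3D_1L^2\gamma^2\le\tfrac1{12}$ and $3LD_2\gamma^2\le\tfrac1{12}$ so that $D_3\le\tfrac12$ and $D_4\ge0$, drop the nonnegative $\overline{\nabla f}(X_t)$ term, divide by $T$, and match the remaining pieces against $\tfrac{\sigma}{\sqrt{nT}}$, $\tfrac{\zeta^{2/3}}{T^{2/3}}$, and $\tfrac1T$. Your explicit handling of the $\gamma^2\sigma^2$ terms is in fact more candid than the paper's, which carries the analogous quantity $\frac{n\sigma^2}{nD_1L^2+\sigma^2T}\bigl(5D_1L^2+\frac{LD_2}{2}\bigr)$ in \eqref{alg2_pro_coro_1} and then absorbs it into the final display without comment, so the same $O(n/T)$ caveat you flag (benign only when $n$ is treated as bounded or $T\gtrsim n^3/\sigma^2$) applies equally to the paper's argument.
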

The leading term of the convergence rate is {\small$O \left(1/\sqrt{Tn}\right)$}, and we also proved the convergence rate for {\small$\mathbb{E}\left[\sum_{i=1}^n\left\|\overline{X}_t-\bm{x}_t^{(i)}\right\|^2\right]$} (see \eqref{finalcoro_1} in Supplementary). We shall see the tightness of our result in the following discussion.

\paragraph{Linear speedup} Since the leading term of the convergence rate is {\small$O \left(1/\sqrt{Tn}\right)$} when $T$ is large, which is consistent with the convergence rate of C-PSGD, this indicates that we would achieve a linear speed up with respect to the number of nodes.

\paragraph{Consistence with D-PSGD} Setting $\alpha = 0$ to match the scenario of D-PSGD, ECD-PSGD admits the rate {\small$O \left(\frac{\sigma}{\sqrt{nT}} + \frac{ \zeta^{\frac{2}{3}}}{T^{\frac{2}{3}}}\right)$}, that is slightly better the rate of D-PSGD proved in \citet{Lian_adsgd} {\small$O \left(\frac{\sigma}{\sqrt{nT}} + \frac{n^{\frac{2}{3}} \zeta^{\frac{2}{3}}}{T^{\frac{2}{3}}}\right) $}. {\rc The non-leading terms' dependence of the spectral gap $(1-\rho)$ is also consistent with the result in D-PSDG.}

\subsection{Extrapolation compression approach}

{\rc From Theorem~\ref{theo_2}, we can see that there is an upper bound for the compressing level $\alpha$ in DCD-PSGD. Moreover, since the spectral gap $(1-\rho)$ would decrease with the growth of the amount of the workers, so DCD-PSGD will fail to work under a very aggressive compression. So in this section, we propose another approach, namely ECD-PSGD, to remove the restriction of the compressing degree, with a little sacrifice on the computation efficiency.}

For ECD-PSGD, we make the following assumption that the noise brought by compression is bounded.
\begin{assumption}
  \label{ass:alg1} 
    (\textbf{Bounded compression noise}) We assume the noise due to compression is unbiased and its variance is bounded, that is, $\forall \bm{z}\in \mathbb{R}^n$
      \begin{align*}
\mathbb{E}\|\bm{C}(\bm{z}) - \bm{z}\|^2\leq \tilde{\sigma}^2/2, \quad  \forall \bm{z}
  \end{align*}
\end{assumption}
Instead of sending the local model $\bm{x}^{(i)}_t$ directly to neighbors, we send a $z$-value that is extrapolated from $\bm{x}^{(i)}_t$ and $\bm{x}^{(i)}_{t-1}$ at each iteration. Each node (say, node $i$) estimates its neighbor's values $\bm{x}^{(j)}_t$ from compressed $z$-value at $t$-th iteration. This procedure could ensure diminishing estimate error, in particular, {\small$\mathbb{E}\|\tilde{\bm{x}}^{(j)}_t  - \bm{x}^{(j)}_t\|^2 \leq \mathcal{O}\left(t^{-1}\right)$.}

At $t$th iteration, node $i$ performs the following steps to estimate $\bm{x}^{(j)}_t$ by $\tilde{\bm{x}}^{(j)}_t$:

\begin{itemize}
\setlength{\itemsep}{1.5pt}
\setlength{\parsep}{2pt}
\setlength{\parskip}{1pt}
\item The node $j$, computes the $z$-value that is obtained through extrapolation{\small
\begin{align*}
\bm{z}_{t}^{(j)}=\left(1-0.5t\right)\bm{x}_{t-1}^{(j)}+0.5t{\bm{x}}_{t}^{(j)},\numberthis \label{alg1_noisecontrol1}
\end{align*}}
\item Compress $\bm{z}_t^{(j)}$ and send it to its neighbors, {\rc say node $i$. Node $i$ computes $\tilde{\bm{x}}_{t}^{(j)}$ using}{\small
\begin{align*}
\tilde{\bm{x}}_{t}^{(j)} = \left(1-2t^{-1}\right)\tilde{\bm{x}}^{(j)}_{t-1}+2t^{-1}\bm{C}(\bm{z}_t^{(j)}). \numberthis \label{alg1_noisecontrol2}
\end{align*}}
\end{itemize}
Using Lemma~\ref{lemma3} (see Supplemental Materials), if the compression noise ${\rc \bm{q}}^{(j)}_t:= \bm{z}_t^{(i)} - \bm{C}(\bm{z}_t^{(i)})$ {\nrc is globally bounded variance} by $\tilde{\sigma}^2/2$, we will have{\small
\begin{align*}
\mathbb{E}(\|\tilde{\bm{x}}^{(j)}_t - \bm{x}^{(j)}_t\|^2) \leq \tilde{\sigma}^2/t.
\end{align*}}
Using this way to estimate the neighbors' local models leads to the following equivalent updating form
\begin{align*}
X_{t+1} = & \tilde{X}_tW - \gamma_t G(X_t; \xi_t)
=  X_t  W + \underbrace{{Q}_tW}_{\text{diminished estimate error}} - \gamma_t G(X_t; \xi_t).
\end{align*}}
The full extrapolation compression D-PSGD (ECD-PSGD) algorithm is summarized in Algorithm~\ref{alg1}.

Below we will show that EDC-PSGD algorithm would admit the same convergence rate and the same computation complexity as D-PSGD.

\begin{theorem}[Convergence of Algorithm \ref{alg1}] \label{theo_1}
  \label{thm:conv-bounded-variance}
  Under Assumptions~\ref{ass:global} and ~\ref{ass:alg1}, choosing $\gamma_t$ in Algorithm~\ref{alg1} to be constant $\gamma$ {satisfying} {\small$1-6C_1L^2\gamma^2>0$},  we have the following convergence rate for
  Algorithm~\ref{alg1} {\small
\begin{align*}
& \sum_{t=1}^T\left(\left(1-C_3\right)\mathbb{E}\|\nabla f(\overline{X}_t)\|^2 + C_4\mathbb{E}\|\overline{\nabla f}(X_t)\|^2\right)
\\ \leq & \frac{2(f(0)-f^*)}{\gamma} +
\frac{L\log T}{n\gamma}\tilde{\sigma}^2 + \frac{LT\gamma}{n}\sigma^2 + \frac{4C_2\tilde{\sigma}^2L^2}{1-\rho^2}\log T + 4L^2C_2\left(\sigma^2+3\zeta^2\right)C_1T\gamma^2. \numberthis \label{eq_theo_1}
\end{align*}}
where {\small
$C_1 :=  \frac{1}{(1-\rho)^2}$, 
$C_2 :=  \frac{1}{1-6\rho^{-2}C_1L^2\gamma^2}$,
$C_3 :=  12L^2C_2C_1\gamma^2$, and
$C_4 :=  1-L\gamma$. }
\end{theorem}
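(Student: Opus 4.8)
The plan is to carry out the standard two-track analysis for decentralized SGD, the new ingredient being control of the estimate error $Q_t := \tilde{X}_t - X_t$ supplied by Lemma~\ref{lemma3}. Working in the matrix form $X_{t+1} = X_t W + Q_t W - \gamma\,G(X_t;\xi_t)$, I would track (i) the averaged iterate $\overline{X}_t := X_t\bm{1}/n$ and (ii) the consensus residual $\sum_t \mathbb{E}\|X_t - \overline{X}_t\bm{1}^\top\|_F^2 = \sum_t\sum_i\mathbb{E}\|\bm{x}^{(i)}_t - \overline{X}_t\|^2$. These are coupled: the descent bound on $f(\overline{X}_t)$ contains the consensus residual through $\|\nabla f(\overline{X}_t) - \overline{\nabla f}(X_t)\|^2$, and the consensus residual contains $\sum_t\mathbb{E}\|\nabla f(\overline{X}_t)\|^2$ through the sizes of the stochastic gradients, so the final step is to solve the resulting coupled inequality; this is where the constants $C_1,\dots,C_4$ and the hypothesis $1-6C_1L^2\gamma^2>0$ come from. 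Throughout I would use the two corollaries of Lemma~\ref{lemma3}: $\mathbb{E}\|Q_t\|_F^2 \le n\tilde{\sigma}^2/t$, and — using that the per-node compressions are independent and unbiased — the sharper $\mathbb{E}\|\overline{Q}_t\|^2 \le \tilde{\sigma}^2/(nt)$; the harmonic sum $\sum_{t\le T}1/t = O(\log T)$ then produces the $\log T$ factors in \eqref{eq_theo_1}.

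For the descent track, since $W$ is doubly stochastic one has $\overline{X}_{t+1} = \overline{X}_t + \overline{Q}_t - \gamma\overline{G}_t$ with $\overline{G}_t := G(X_t;\xi_t)\bm{1}/n$ and $\mathbb{E}[\overline{G}_t\mid\mathcal{F}_t] = \overline{\nabla f}(X_t)$. Applying $L$-smoothness of $f$ at $\overline{X}_{t+1}$, taking conditional expectations, splitting $-\gamma\langle\nabla f(\overline{X}_t),\overline{\nabla f}(X_t)\rangle$ with the polarization identity, and using $\|\nabla f(\overline{X}_t)-\overline{\nabla f}(X_t)\|^2 \le \tfrac{L^2}{n}\sum_i\|\overline{X}_t-\bm{x}^{(i)}_t\|^2$, the terms still to be absorbed are the quadratic $\tfrac{L}{2}\mathbb{E}\|\overline{Q}_t-\gamma\overline{G}_t\|^2 \le L\mathbb{E}\|\overline{Q}_t\|^2 + L\gamma^2\mathbb{E}\|\overline{G}_t\|^2$ and the cross term $\mathbb{E}\langle\nabla f(\overline{X}_t),\overline{Q}_t\rangle$, which I would bound using $\mathbb{E}\|\overline{Q}_t\|^2 \le \tilde{\sigma}^2/(nt)$ and $\mathbb{E}\|\overline{G}_t\|^2 \le \sigma^2/n + \|\overline{\nabla f}(X_t)\|^2$. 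Summing over $t$, telescoping the $f$-difference, dividing by $\gamma/2$, and moving the residual $L\gamma\|\overline{\nabla f}(X_t)\|^2$ to the left to form $C_4 = 1-L\gamma$ yields
\[
\sum_{t=1}^T\Big(\mathbb{E}\|\nabla f(\overline{X}_t)\|^2 + C_4\,\mathbb{E}\|\overline{\nabla f}(X_t)\|^2\Big) \le \frac{2(f(0)-f^*)}{\gamma} + \frac{L\log T}{n\gamma}\tilde{\sigma}^2 + \frac{LT\gamma}{n}\sigma^2 + \frac{L^2}{n}\sum_{t=1}^T\mathbb{E}\|X_t-\overline{X}_t\bm{1}^\top\|_F^2,
\]
with the consensus residual still to be estimated. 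For that track, unroll the matrix recursion from $X_1 = \bm{x}_1\bm{1}^\top$ and multiply by $P := I - \tfrac1n\bm{1}\bm{1}^\top$; since $X_1P = 0$, $W$ commutes with $\tfrac1n\bm{1}\bm{1}^\top$, and $\|(W-\tfrac1n\bm{1}\bm{1}^\top)^k\|_{\mathrm{op}} \le \rho^k$, one gets $X_{t+1}P = \sum_{s=1}^t (Q_sW - \gamma G_s)(W-\tfrac1n\bm{1}\bm{1}^\top)^{t-s}$. Triangle inequality plus Cauchy--Schwarz against the weights $\rho^{t-s}$ gives $\mathbb{E}\|X_{t+1}P\|_F^2 \lesssim \tfrac{1}{1-\rho}\sum_{s\le t}\rho^{t-s}\big(\mathbb{E}\|Q_s\|_F^2 + \gamma^2\mathbb{E}\|G_s\|_F^2\big)$ (this route avoids claiming cross terms vanish, at the price of the $\rho^{-2}$-type slack that surfaces in $C_1,C_2$); summing over $t$ introduces $C_1 = (1-\rho)^{-2}$. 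Substituting $\mathbb{E}\|Q_s\|_F^2 \le n\tilde{\sigma}^2/s$ and the smoothness split $\mathbb{E}\|G_s\|_F^2 \le n\sigma^2 + 3n\zeta^2 + 3n\mathbb{E}\|\nabla f(\overline{X}_s)\|^2 + 3L^2\mathbb{E}\|X_sP\|_F^2$ makes the inequality self-referential in $\sum_s\mathbb{E}\|X_sP\|_F^2$; solving it under $1-6C_1L^2\gamma^2>0$ introduces $C_2$ and yields
\[
\frac{L^2}{n}\sum_{s=1}^T\mathbb{E}\|X_sP\|_F^2 \lesssim \frac{C_2\tilde{\sigma}^2L^2}{1-\rho^2}\log T + C_1C_2L^2(\sigma^2+3\zeta^2)T\gamma^2 + C_1C_2L^2\gamma^2\sum_{s=1}^T\mathbb{E}\|\nabla f(\overline{X}_s)\|^2,
\]
so plugging into the previous display and moving the last sum to the left to form $1-C_3$ with $C_3 = 12L^2C_1C_2\gamma^2$ recovers \eqref{eq_theo_1}, the explicit constants being obtained by retaining all factors exactly through these steps.

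The step I expect to be the main obstacle is the control of the estimate error $Q_t$, which is the whole point of the extrapolation construction. One must first derive the one-step recursion $\tilde{\bm{x}}^{(j)}_t - \bm{x}^{(j)}_t = (1-2t^{-1})(\tilde{\bm{x}}^{(j)}_{t-1} - \bm{x}^{(j)}_{t-1}) + 2t^{-1}\big(\bm{C}(\bm{z}^{(j)}_t) - \bm{z}^{(j)}_t\big)$ — this is precisely why the extrapolation coefficients $1-0.5t,\ 0.5t$ and the averaging coefficients $1-2t^{-1},\ 2t^{-1}$ are chosen as they are — and then solve it via the substitution $\bm{u}_t := t(t-1)(\tilde{\bm{x}}^{(j)}_t - \bm{x}^{(j)}_t)$, which telescopes to $\tilde{\bm{x}}^{(j)}_t - \bm{x}^{(j)}_t = \tfrac{2}{t(t-1)}\sum_{s=2}^t (s-1)\big(\bm{C}(\bm{z}^{(j)}_s) - \bm{z}^{(j)}_s\big)$; this exhibits the error as a reweighting of independent, conditionally mean-zero compression noises, which under Assumption~\ref{ass:alg1} gives the $O(1/t)$ variance of Lemma~\ref{lemma3} and, crucially, makes $\sum_t\mathbb{E}\|Q_t\|_F^2$ only logarithmic rather than linear in $T$. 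The remaining delicacy is that $Q_t$ is not itself a martingale difference (its conditional mean is $(1-2t^{-1})Q_{t-1}$), so each time it is paired with the gradient one must first peel off the genuinely fresh, conditionally mean-zero compression noise before invoking the Lemma~\ref{lemma3} variance bound on the residual; once that is done, what is left is routine smoothness/polarization/Cauchy--Schwarz bookkeeping tuned to land on the precise constants $C_1,\dots,C_4$.
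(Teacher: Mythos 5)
Your overall architecture is the same as the paper's: a descent bound along the averaged iterate $\overline{X}_{t+1}=\overline{X}_t+\overline{Q}_t-\gamma\overline{G}_t$ (the paper's Lemma~\ref{lemma:boundfplus}), a consensus bound obtained by unrolling $X_tP$ against the spectral gap (Lemma~\ref{lemma_bound_all_X_ave}), the $O(\tilde{\sigma}^2/t)$ control of the extrapolation error and the split of $\mathbb{E}\|G\|_F^2$ (Lemma~\ref{lemma3}), and finally the resolution of the coupled inequality under $1-6C_1L^2\gamma^2>0$, which is where $C_1,\dots,C_4$ arise. Your derivation of the error recursion and the telescoped representation of $\tilde{\bm{x}}^{(j)}_t-\bm{x}^{(j)}_t$ is correct, and giving up the cross-term cancellation in the consensus track only costs you a spectral-gap constant ($(1-\rho)^{-2}$ in place of $(1-\rho^2)^{-1}$ on the $\tilde{\sigma}^2\log T$ term), which is a harmless deviation from \eqref{eq_theo_1}.

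The genuine gap is in the descent track, in how you treat $\mathbb{E}\langle\nabla f(\overline{X}_t),\overline{Q}_t\rangle$. The paper drops this term outright by invoking unbiasedness of the compression noise (its step ``$\mathbb{E}_{Q_t}\overline{Q}_t=\bm{0}$''), so the only price paid for $\overline{Q}_t$ is $\tfrac{L}{2}\mathbb{E}\|\overline{Q}_t\|^2\le \tfrac{L\tilde{\sigma}^2}{2nt}$, which after dividing by $\gamma/2$ and summing gives exactly the $\tfrac{L\log T}{n\gamma}\tilde{\sigma}^2$ term of \eqref{eq_theo_1}. You instead propose to \emph{bound} the cross term using $\mathbb{E}\|\overline{Q}_t\|^2\le\tilde{\sigma}^2/(nt)$; any Young/Cauchy--Schwarz split of $\langle\nabla f(\overline{X}_t),\overline{Q}_t\rangle$ then produces, after the normalization by $\gamma/2$, a term of order $\tfrac{\tilde{\sigma}^2\log T}{n\gamma^2}$ (plus a constant fraction of $\sum_t\mathbb{E}\|\nabla f(\overline{X}_t)\|^2$). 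Since $L\gamma<1$, this exceeds the theorem's $\tfrac{L\tilde{\sigma}^2\log T}{n\gamma}$, so \eqref{eq_theo_1} is not recovered; worse, with the stepsize of Corollary~\ref{cor:convergence} ($\gamma^{-1}\approx\sigma\sqrt{T/n}$) it contributes $\Theta\bigl(\sigma^2\tilde{\sigma}^2\log T/n^2\bigr)$ to $\tfrac1T\sum_t\mathbb{E}\|\nabla f(\overline{X}_t)\|^2$, which does not vanish as $T\to\infty$, destroying the $1/\sqrt{nT}$ rate. Your proposed fix --- peeling off the fresh noise $2t^{-1}\bm{q}_t$ --- does not close this, because the residual $(1-2t^{-1})\overline{Q}_{t-1}$ is still correlated with $\nabla f(\overline{X}_t)$ (the update $X_{t}= \tilde{X}_{t-1}W-\gamma G_{t-1}$ makes $X_t$ depend on $Q_{t-1}$), so no orthogonality is available for the residual and you are back to the same Cauchy--Schwarz penalty. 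To land on \eqref{eq_theo_1} you must either make the cross term vanish as the paper does, or run a genuinely finer argument, e.g.\ use your representation $\overline{Q}_t=\tfrac{2}{t(t-1)}\sum_{s=2}^t(s-1)\overline{\bm{q}}_s$, pair each $\overline{\bm{q}}_s$ with $\nabla f(\overline{X}_s)$ (which is orthogonal to it), and control the drift $\nabla f(\overline{X}_t)-\nabla f(\overline{X}_s)$ via smoothness --- a step you have not carried out and which is not ``routine bookkeeping.''
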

To make the result more clear, we choose the steplength in the following:
\begin{corollary} \label{cor:convergence}
In Algorithm~\ref{alg1} choose the steplength {\small$\gamma=\left(12\sqrt{C_1}L+\frac{\sigma}{\sqrt{n}}T^{\frac{1}{2}} + \zeta^{\frac{2}{3}}T^{\frac{1}{3}}\right)^{-1}$}.  Then it admits the following convergence rate (with $f(0)- f^*$, $L$, and $\rho$ treated as constants). {\small
\begin{align*}
\frac{1}{T}\sum_{t=1}^T\mathbb{E}\|\nabla f(\overline{X}_t)\|^2
\lesssim & \frac{\sigma(1+\frac{\tilde{\sigma}^2\log T}{n})}{\sqrt{nT}} + \frac{\zeta^{\frac{2}{3}}(1+\frac{\tilde{\sigma}^2\log T}{n})}{T^{\frac{2}{3}}} + \frac{1}{T}  + \frac{\tilde{\sigma}^2\log{T}}{T}, \numberthis
\label{eq:cor:alg1}
\end{align*}}
\end{corollary}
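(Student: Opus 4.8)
The plan is to derive Corollary~\ref{cor:convergence} directly from Theorem~\ref{theo_1} by substituting the prescribed step size $\gamma = \left(12\sqrt{C_1}L + \tfrac{\sigma}{\sqrt n}T^{1/2} + \zeta^{2/3}T^{1/3}\right)^{-1}$ into the bound \eqref{eq_theo_1} and then estimating the right-hand side term by term. The first task is to check admissibility of this $\gamma$, i.e. that $1-6C_1L^2\gamma^2>0$: since $\gamma\le\frac{1}{12\sqrt{C_1}L}$ we get $C_1L^2\gamma^2\le\frac{1}{144}$, hence $6C_1L^2\gamma^2\le\frac{1}{24}$, so the hypothesis of Theorem~\ref{theo_1} holds, and (with $\rho$ treated as a fixed constant) $C_1$ and $C_2$ are bounded by absolute constants. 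The same estimate gives $C_3=12L^2C_2C_1\gamma^2\le\frac{C_2}{12}$, which is bounded away from $1$, so $1-C_3\ge c_0>0$; and $C_4=1-L\gamma\ge 1-\frac{1}{12}>0$ (using $C_1\ge1$), so the term $C_4\,\mathbb{E}\|\overline{\nabla f}(X_t)\|^2$ on the left of \eqref{eq_theo_1} is nonnegative and may be dropped. Dividing through by $(1-C_3)T$ then leaves, with the implied constant depending only on $L$, $\rho$, $f(0)-f^*$,
\begin{align*}
\frac{1}{T}\sum_{t=1}^T\mathbb{E}\|\nabla f(\overline{X}_t)\|^2 \;\lesssim\; \frac{1}{\gamma T} + \frac{\tilde{\sigma}^2\log T}{n\gamma T} + \frac{\gamma\sigma^2}{n} + \frac{\tilde{\sigma}^2\log T}{T} + \gamma^2(\sigma^2+\zeta^2).
\end{align*}

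The remaining work is bookkeeping that uses the three elementary consequences of the choice of $\gamma$: $\gamma^{-1}\ge 12\sqrt{C_1}L$, $\gamma^{-1}\ge \sigma T^{1/2}/\sqrt n$, and $\gamma^{-1}\ge \zeta^{2/3}T^{1/3}$ (equivalently $\gamma\le\frac{1}{12\sqrt{C_1}L}$, $\gamma\le\frac{\sqrt n}{\sigma\sqrt T}$, $\gamma\le\zeta^{-2/3}T^{-1/3}$). From $\frac{1}{\gamma}\le 12\sqrt{C_1}L+\frac{\sigma\sqrt T}{\sqrt n}+\zeta^{2/3}T^{1/3}$ one gets $\frac{1}{\gamma T}\lesssim \frac{1}{T}+\frac{\sigma}{\sqrt{nT}}+\frac{\zeta^{2/3}}{T^{2/3}}$, and multiplying this by $\frac{\tilde{\sigma}^2\log T}{n}$ handles the second term, producing $\frac{\tilde{\sigma}^2\log T}{nT}+\frac{\sigma\tilde{\sigma}^2\log T}{n\sqrt{nT}}+\frac{\zeta^{2/3}\tilde{\sigma}^2\log T}{nT^{2/3}}$. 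The third term is controlled by $\gamma\le\frac{\sqrt n}{\sigma\sqrt T}$, giving $\frac{\gamma\sigma^2}{n}\le\frac{\sigma}{\sqrt{nT}}$; the $\frac{\tilde{\sigma}^2\log T}{T}$ term is already in the claimed form; and in $\gamma^2(\sigma^2+\zeta^2)$ one uses $\gamma\le\zeta^{-2/3}T^{-1/3}$ for the $\zeta^2$ part (yielding $\frac{\zeta^{2/3}}{T^{2/3}}$) and the combination of the $\sigma$-bound with the constant bound on $\gamma$ for the $\sigma^2$ part. Collecting the pieces into the four groups $\frac{\sigma}{\sqrt{nT}}\big(1+\frac{\tilde{\sigma}^2\log T}{n}\big)$, $\frac{\zeta^{2/3}}{T^{2/3}}\big(1+\frac{\tilde{\sigma}^2\log T}{n}\big)$, $\frac{1}{T}$, and $\frac{\tilde{\sigma}^2\log T}{T}$ (the last absorbing $\frac{\tilde{\sigma}^2\log T}{nT}$) yields \eqref{eq:cor:alg1}.

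The one step that is not purely mechanical is the term $\gamma^2\sigma^2$ coming from $4L^2C_2(\sigma^2+3\zeta^2)C_1T\gamma^2$ in \eqref{eq_theo_1}: bounding it via $\gamma\le\frac{\sqrt n}{\sigma\sqrt T}$ together with the constant bound $\gamma\le\frac{1}{12\sqrt{C_1}L}$ leaves a residual of order $\sigma\sqrt n/\sqrt T$, i.e. of order $n/T$ after the remaining factors, which is not literally one of the four groups in \eqref{eq:cor:alg1}. Exactly as in Corollary~\ref{cor:convergence_alg2}, this is absorbed into the leading $\frac{\sigma}{\sqrt{nT}}$ term, which is legitimate in the large-$T$ regime $T\gtrsim n^3/\sigma^2$ where the linear-speedup statement is meaningful. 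I expect this absorption, together with the routine check that $C_1$, $C_2$, $1-C_3$ remain uniformly bounded under the chosen $\gamma$, to be the only subtle points; everything else is substitution into \eqref{eq_theo_1} followed by the term-by-term estimates above.
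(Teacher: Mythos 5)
Your proposal is correct and follows essentially the same route as the paper's own proof: substitute the prescribed $\gamma$ into \eqref{eq_theo_1}, check $C_2<2$, $C_3\le\tfrac12$ and $C_4\ge 0$ so that the $\mathbb{E}\|\overline{\nabla f}(X_t)\|^2$ term can be dropped and $(1-C_3)$ replaced by $\tfrac12$, and then bound each remaining term using $\gamma^{-1}\ge 12\sqrt{C_1}L$, $\gamma^{-1}\ge \sigma T^{1/2}/\sqrt{n}$ and $\gamma^{-1}\ge \zeta^{2/3}T^{1/3}$. The one subtlety you flag, the residual $\gamma^2\sigma^2$ contribution of order $\sigma\sqrt{n}/\sqrt{T}$, is treated no more rigorously in the paper, which writes it explicitly as $\sigma^2\frac{8nL^2C_1}{72nC_1L^2+\sigma^2T}$ and then silently absorbs it into the $\sigma/\sqrt{nT}$ term, so your explicit large-$T$ absorption matches (and is more transparent than) the paper's own step.
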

This result suggests the algorithm converges roughly in the rate {\small$O({1 / \sqrt{nT}})$}, and we also proved the convergence rate for {\small$\mathbb{E}\left[\sum_{i=1}^n\left\|\overline{X}_t-\bm{x}_t^{(i)}\right\|^2\right]$} (see \eqref{final_coro2} in Supplementary). The followed analysis will bring more detailed interpretation to show the tightness of our result.

\paragraph{Linear speedup} Since the leading term of the convergence rate is {\small$O (1/\sqrt{nT})$} when $T$ is large, which is consistent with the convergence rate of C-PSGD, this indicates that we would achieve a linear speed up with respect to the number of nodes.

\paragraph{Consistence with D-PSGD} Setting $\tilde{\sigma} = 0$ to match the scenario of D-PSGD, ECD-PSGD admits the rate {\small$O \left(\frac{\sigma}{\sqrt{nT}} + \frac{\zeta^{\frac{2}{3}}}{T^{\frac{2}{3}}}\right)$}, that is slightly better the rate of D-PSGD proved in \citet{Lian_adsgd} {\small$O \left(\frac{\sigma}{\sqrt{nT}} + \frac{n^{\frac{1}{3}} \zeta^{\frac{2}{3}}}{T^{\frac{2}{3}}}\right) $}. {\rc The non-leading terms' dependence of the spectral gap $(1-\rho)$ is also consistent with the result in D-PSDG.}

\paragraph{Comparison between DCD-PSGD and ECD-PSGD}
On one side, in term of the convergence rate, ECD-PSGD is slightly worse than DCD-PSGD due to additional terms {\small$\left( \frac{\sigma\tilde{\sigma}^2\log{T}}{n\sqrt{nT}} + \frac{\zeta^{\frac{2}{3}}\tilde{\sigma}^2\log{T}}{nT^{\frac{2}{3}}} + \frac{\tilde{\sigma}^2\log{T}}{T}\right)$} that suggests that if $\tilde{\sigma}$ is relatively large than $\sigma$, the additional terms dominate the convergence rate.
On the other side, DCD-PSGD does not allow too aggressive compression or quantization and may lead to diverge due to {\small$\alpha \leq \frac{1-\rho}{2\sqrt{2}\mu}$}, while ECD-PSGD is quite robust to aggressive compression or quantization.

 
\begin{figure*}[t]
\centering
\includegraphics[scale=0.14]{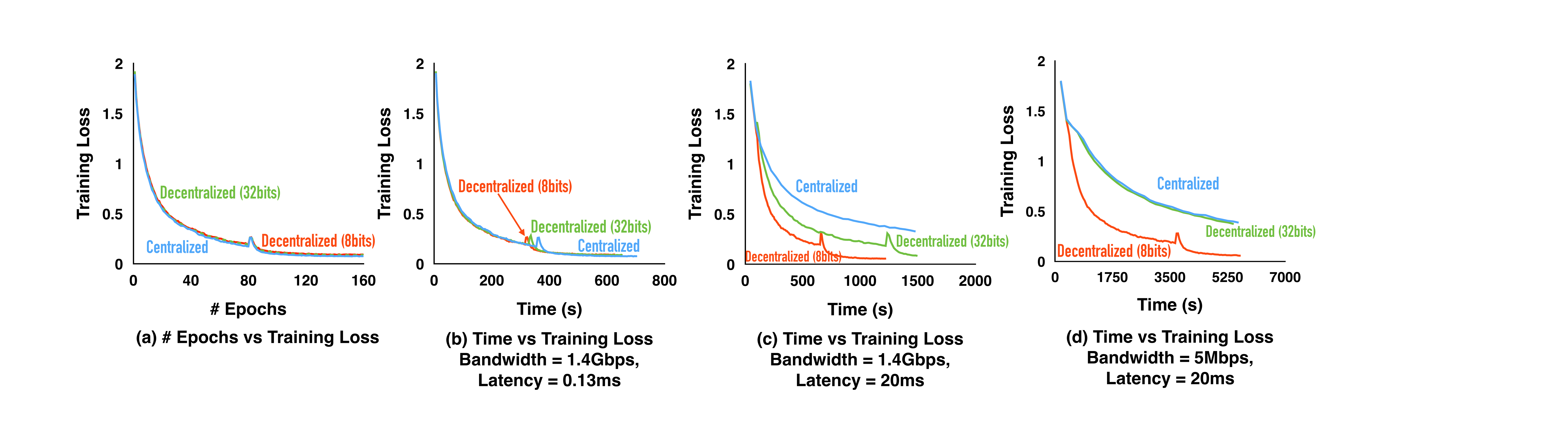}
\caption{Performance Comparison between Decentralized and AllReduce implementations.}
\label{Fig:TrainingLoss}
\end{figure*}

\section{Experiments}

In this section we evaluate two decentralized algorithms by comparing 
with an Allreduce implementation of centralized SGD. We run experiments 
under diverse network conditions and show that, decentralized algorithms with low precision can speed up training without hurting 
convergence.

\subsection{Experimental Setup}

We choose the image classification task as a benchmark to evaluate our theory. We train ResNet-20~\citep{he2016deep} on CIFAR-10 dataset which has 50,000 images for training and 10,000 images for testing. Two proposed algorithms are implemented in Microsoft CNTK and compablack with CNTK's original implementation of distributed SGD:

\begin{figure*}[t]
\centering
\includegraphics[scale=0.14]{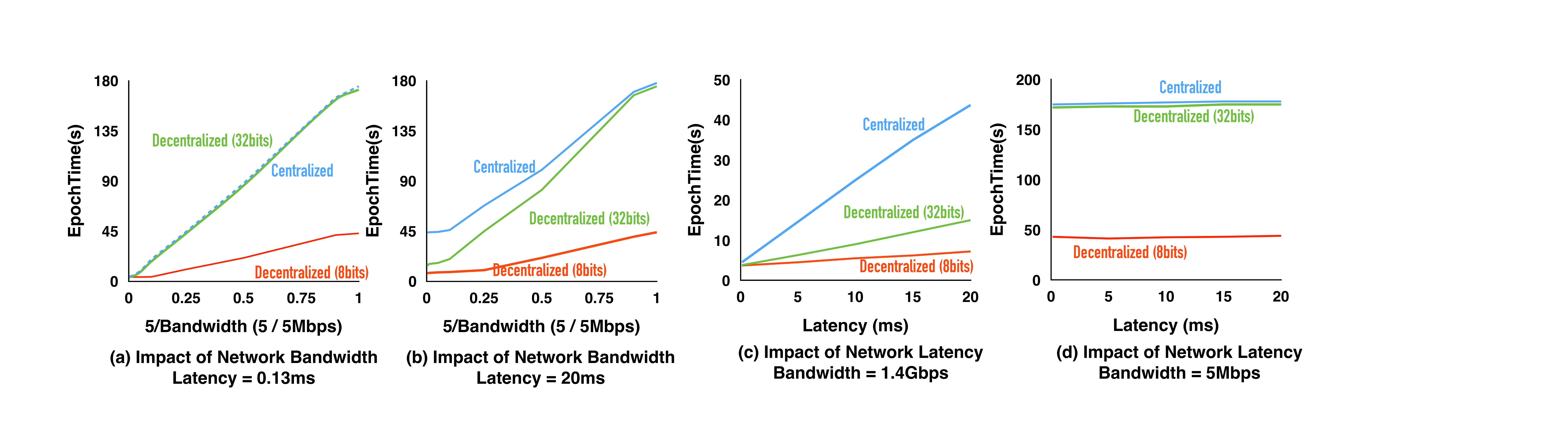}
\caption{Performance Comparison in Diverse Network Conditions.}
\label{Fig:EpochTime}
\end{figure*}

\begin{itemize}[fullwidth]
\item{\bf{Centralized:}} This implementation is based on MPI Allreduce primitive with full precision (32 bits). It is the standard training method for multiple nodes in CNTK.
\item{\bf{Decentralized\_32bits/8bits:}} The implementation of the proposed decentralized approach with OpenMPI. The full precision is 32 bits, and the compressed precision is 8 bits.  
\item In this paper, we omit the comparison with quantized centralized
training because the difference between Decentralized 8bits
and Centralized 8bits would be similar to the original 
decentralized training paper~\cite{Lian_dsgd} -- when the network
latency is high, decentralized algorithm outperforms centralized
algorithm in terms of the time for each epoch.  
\end{itemize}

We run experiments on 8 Amazon $p2.xlarge$ EC2 instances, each of which has one Nvidia K80 GPU. We use each GPU as a node. In decentralized cases, 8 nodes are connected as a ring topology, which means each node just communicates with its two neighbors. The batch size for each node is same as the default configuration in CNTK. We also tune learning rate for each variant.

\subsection{Convergence and Run Time Performance}

We first study the convergence of our algorithms. Figure~\ref{Fig:TrainingLoss}(a) shows the convergence w.r.t \# epochs of centralized and decentralized cases. 
We only show ECD-PSGD in the figure (and call it Decentralized) because
DCD-PSGD has almost identical convergence behavior in this experiment.
We can see that with our algorithms, decentralization and compression 
would not hurt the convergence rate.

We then compare the runtime performance. Figure \ref{Fig:TrainingLoss}(b, c, d) demonstrates how training loss decreases with the run time under different network conditions. We use $tc$ command to change bandwidth and latency of the underlying network. By default, 1.4 Gbps bandwidth and 0.13 ms latency is the best network condition we can get in this cluster. On this occasion, all implementations have a very similar runtime performance because communication is not the bottleneck for system. When the latency is high, as shown in \ref{Fig:TrainingLoss}(c), decentralized algorithms in both low and full precision can outperform the Allreduce method because of fewer number of communications. However, in low bandwidth case, training time is mainly dominated by the amount of communication data, so low precision method can be obviously faster than these full precision methods.

\subsection{Speedup in Diverse Network Conditions}

To better understand the influence of bandwidth and latency on speedup, we compare the time of one epoch under various of network conditions. Figure \ref{Fig:EpochTime}(a, b) shows the trend of epoch time with bandwidth decreasing from 1.4 Gbps to 5 Mbps. When the latency is low (Figure \ref{Fig:EpochTime}(a)), low precision algorithm is faster than its full precision counterpart because it only needs to exchange around one fourth of full precision method's data amount. Note that although in a decentralized way, full precision case has no advantage over Allreduce in this situation, because they exchange exactly the same amount of data. When it comes to high latency shown in Figure \ref{Fig:EpochTime}(b), both full and low precision cases are much better than Allreduce in the beginning. But also, full precision method gets worse dramatically with the decline of bandwidth.

Figure \ref{Fig:EpochTime}(c, d) shows how latency influences the epoch time under good and bad bandwidth conditions. When bandwidth is not the bottleneck (Figure \ref{Fig:EpochTime}(c)), decentralized approaches with both full and low precision have similar epoch time because they have same number of communications. As is expected, Allreduce is slower in this case. When bandwidth is very low (Figure \ref{Fig:EpochTime}(d)), only decentralized algorithm with low precision can achieve best performance among all implementations.

\begin{figure}
\centering
\includegraphics[scale=0.15]{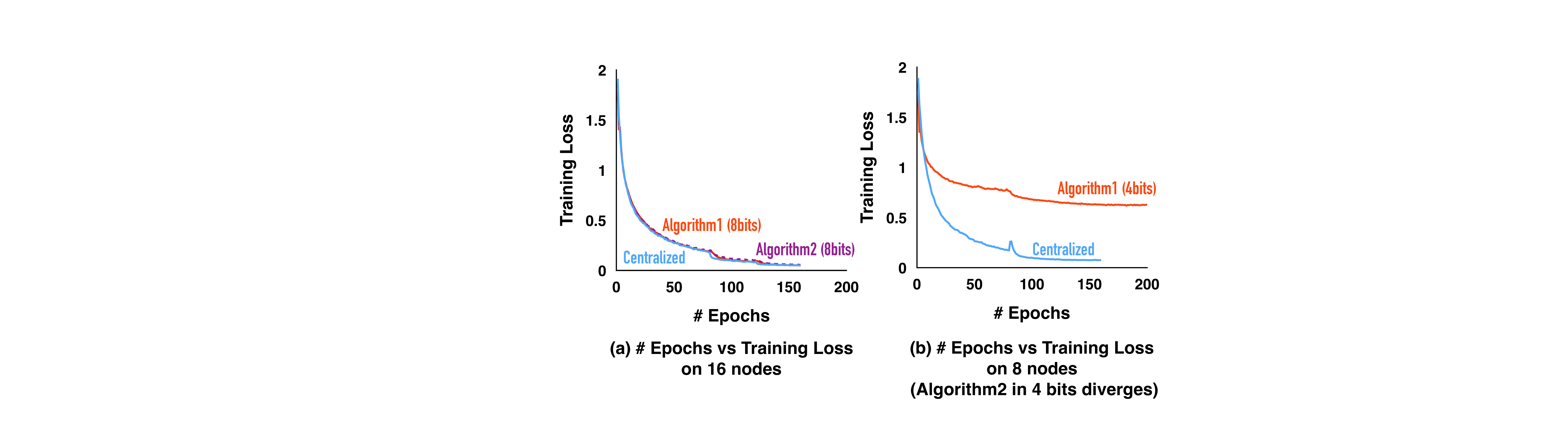}
\caption{Comparison of Alg. 1 and Alg. 2 }
\label{Fig:4bits}
\end{figure}

\subsection{Discussion}

Our previous experiments validate the efficiency of the decentralized algorithms on 8 nodes with 8 bits. However, we wonder if we can scale it to more nodes or compress the exchanged data even more aggressively. We firstly conducted experiments on 16 nodes with 8 bits as before. According to Figure \ref{Fig:4bits}(a), Alg. 1 and Alg. 2 on 16 nodes can still achieve basically same convergence rate as Allreduce, which shows the scalability of our algorithms. However, they can not be comparable to Allreduce with 4 bits, as is shown in \ref{Fig:4bits}(b). What is noteworthy is that these two compression approaches have quite different behaviors in 4 bits. For Alg. 1, although it converges much slower than Allreduce, its training loss keeps reducing. However, Alg. 2 just diverges in the beginning of training. This observation is consistent with our theoretical analysis.

\section{Conclusion}

In this paper, we studied the problem of combining two tricks
of training distributed stochastic gradient descent under
imperfect network conditions: quantization and decentralization.
We developed two novel algorithms or quantized, decentralized
training, analyze the theoretical property of both algorithms,
and empirically study their performance in a various settings
of network conditions. We found that when the underlying communication
networks has {\em both} high latency and low bandwidth, 
quantized, decentralized algorithm outperforms other strategies
significantly.

\newpage
\bibliographystyle{abbrvnat}
\bibliography{zipmlbib}


\newpage
\onecolumn
\appendix

\begin{center}
{\Huge \bf
Supplemental Materials: Proofs
}
\end{center}

For the convenience of analysis, let us reformulate the updating rule of both proposed algorithms. Both algorithms can be generalized into the following form
\begin{align*}
X_{t+1} =  X_tW-\gamma_tG(X_t;\xi_t) +Q_t, \numberthis \label{general_eq}
\end{align*}
where $Q_t$ is the noise caused by compression.

More specifically, for ECD-PSGD, from \eqref{alg1_noisecontrol1} and \eqref{alg1_noisecontrol2},
we have
\begin{align*}
Q_t = & \tilde{X}_t - X_t,
\end{align*}
where $\tilde{X}_t $ is the estimation of $X_t$ computed using extrapolation. For DCD-PSGD, from \eqref{alg2_globalevolution}, we have
\begin{align*}
\Delta X_t := &X_tW-\gamma F(X_t;\xi_t) -X_t\\
Q_t = & \bm{C}\left(\Delta X_t \right) - \Delta X_t,
\end{align*}
where $\bm{C}(\cdot)$ is the compression operator.

\paragraph{Proof organization} In section~\ref{general}, we first provide the properties of updating rule \eqref{general_eq}. Sections~\ref{sec2} and~\ref{sec3} will next specify $Q_t$ and show the upper bounds for $\|Q_t\|^2$ in Algorithms~\ref{alg1} and~\ref{alg2} respectively.

\paragraph{Notations}
We define some additional notations throughout the following proof
\begin{align*}
\overline{X}_t:= & X_t\frac{\bm{1}}{n}=\frac{1}{n}\sum_{i=1}^n\bm{x}_t^{(i)},\\
\overline{Q}_t:= & Q_t\frac{\bm{1}}{n}=\frac{1}{n}\sum_{i=1}^n\bm{q}_t^{(i)},\\
\overline{G}(X_t;\xi_t):= & G(X_t,\xi_t)\frac{\bm{1}}{n}=\frac{1}{n}\sum_{i=1}^n\nabla F_i(\bm{x}_t^{(i)};\xi_t^{(i)}).
\end{align*}

\section{General bound with compression noise} \label{general}
In this section, to see the influence of the compression more clearly, we are going to prove two general bounds (see see Lemma~\ref{lemma_bound_all_X_ave} and Lemma~\ref{lemma:boundfplus}) for compressed D-PSGD that has the same updating rule like \eqref{general_eq}. Those bounds are very helpful for the following proof of our algorithms. 

The most challenging part of a decentralized algorithm, unlike the centralized algorithm, is that we need to ensure the local model on each node to converge to {\rc the average value $\overline{X}_t$}. So we start with an analysis of the quantity $\left\|\overline{X}_t-\bm{x}_t^{(i)}\right\|^2$ and its influence on the final convergence rate. For both ECD-PSGD and DCD-PSGD, we are going to prove that
\begin{align*}
\sum_{t=1}^{T}\sum_{i=1}^{n}\mathbb{E}\left\|\overline{X}_t-\bm{x}_t^{(i)}\right\|^2
\leq & \frac{2}{1-\rho^2}\sum_{t=1}^{T}\|Q_t\|^2_F + \frac{2}{(1-\rho)^2}\sum_{t=1}^{T}\gamma_t^2\|G(X_{t};\xi_{t})\|^2_F,
\end{align*}
and
\begin{align*}
\mathbb{E}\|\nabla f(\overline{X}_t)\|^2  +  (1-L\gamma_t)\mathbb{E}\|\overline{\nabla f}(X_t)\|^2  \leq & \frac{2}{\gamma_t}\left(\mathbb{E}f(\overline{X}_t)  -  \mathbb{E}f(\overline{X}_{t+1})\right) +  
	\frac{L^2}{n}\mathbb{E}\sum_{i=1}^n\left\|\overline{X}_t-\bm{x}_t^{(i)}\right\|^2 \\
		& + \frac{L}{\gamma_t}\mathbb{E}\|\overline{Q}_t\|^2  +   \frac{L\gamma_t}{n}\sigma^2.
\end{align*}

From the above two inequalities, we can see that the extra noise term decodes the convergence efficiency of $\bm{x}_t^{(i)}$ to the average $\overline{X}_t$.

The proof of the general bound for \eqref{general_eq} is divided into two parts. In subsection~\ref{secA1}, we provide a new perspective in understanding decentralization, which can be very helpful for the simplicity of our following proof. In subsection~\ref{secA2}, we give the detail proof for the general bound.

\subsection{A more intuitive way to understand decentralization}\label{secA1}
To have a better understanding of how decentralized algorithms work, and how can we ensure a consensus from all local variables on each node. We provide a new perspective to understand decentralization using coordinate transformation, which can simplify our analysis in the following.

The confusion matrix $W$ satisfies $W = \sum_{i=1}^{n}\lambda_i\bm{v}^i\left(\bm{v}^i\right)^{\top}$ is doubly stochastic, so we can decompose it into $W = P\Lambda P^{\top}$, where $P = \left(\bm{v}^1,\bm{v}^2,\cdots,\bm{v}^n\right)$ that satisfies $P^{\top}P=PP^{\top}=I$. Without the loss of generality, we can assume $\lambda_1\geq \lambda_2 \geq \cdots, \geq \lambda_{n}$. Then we have the following equalities:
\begin{align*}
X_{t+1} = & X_tW-\gamma_tG(X_t;\xi_t) +Q_t ,\\
X_{t+1} = & X_tP\Lambda P^{\top}-\gamma_tG(X_t;\xi_t) +Q_t ,\\ 
X_{t+1}P = & X_tP\Lambda - G(X_t;\xi_t)P + Q_tP.
\end{align*}
Consider the coordinate transformation using $P$ as the base change matrix, and
denote $Y_t = X_tP$, $H(X_t;\xi_t) = G(X_t;\xi_t)P$, $R_t = Q_tP$. Then the above equation can be rewritten as
\begin{align*}
Y_{t+1} = & Y_t\Lambda -H(X_t;\xi_t) + R_t,\numberthis\label{Y_t}.
\end{align*}
Since $\Lambda$ is a diagonal matrix, so we use $\bm{y}_t^{(i)}$, $\bm{h}_t^{(i)}$, $\bm{r}_t^{(i)}$ to indicate the $i$-th column of $Y_t$, $H(X_t;\xi_t)$, $R_t$. Then \eqref{Y_t} becomes
\begin{align*}
\bm{y}_{t+1}^{(i)} = & \lambda_i\bm{y}_{t}^{(i)} - \bm{h}_t^{(i)} + \bm{r}_t^{(i)}, \quad \forall i \in \{1,\cdots,n\} \numberthis\label{y_t}.
\end{align*}
\eqref{y_t} offers us a much intuitive way to analysis the algorithm. Since all eigenvalues of $W$, except $\lambda_1$, satisfies $|\lambda_i|<1$, so the corresponding $\bm{y}_t^{(i)}$ would ``decay to zero'' due to the scaling factor $\lambda_i$. 

Moreover, since the eigenvector corresponding to $\lambda_1$ is $\frac{1}{\sqrt{n}}(1,1,\cdots,1)$, then we have $\bm{y}_t^{(1)} = \overline{X}_t\sqrt{n}$. So, if $t \to \infty$, intuitively we can set $\bm{y}_t^{(i)}\to \bm{0}$ for $i \neq 1$, then $Y_t\to (\overline{X}_t\sqrt{n},0,\cdots,0)$ and $X_t\to \overline{X}_t\frac{\bm{1}}{n} $. This whole process shows how the confusion works under a coordinate transformation. 

\subsection{Analysis for the general updating form in \eqref{general_eq} }\label{secA2}
\begin{lemma}\label{lemma_bound_substract_mean}
For any matrix $X_t\in \mathbb{R}^{N\times n}$, decompose the confusion matrix $W$ as $W = \sum_{i=1}^n \lambda_i\bm{v}^{(i)}\left(\bm{v}^{(i)}\right)^{\top} = P\Lambda P^{\top}$, where $P = (\bm{v}^{(1)},\bm{v}^{(2)},\cdots,\bm{v}^{(n)})\in \mathbb{R}^{N\times n}$, $\bm{v}^{(i)}$ is the normalized eigenvector of $\lambda_i$, and $\Lambda$ is a diagonal matrix with $\lambda_i$ be its $i$th element. We have
\begin{align*}
\sum_{i=1}^n\left\| X_tW^t\bm{e}^{(i)}-X_t\frac{\bm{1}_n}{n} \right\|^2 =& \left\| X_tW^t-X_t\bm{v}^{(1)}\left(\bm{v}^{(1)}\right)^{\top}\right\|^2_F \leq
\left\| \rho^{t}X_t\right\|^2_F,
\end{align*}
where $\rho$ follows the defination in Theorem~\ref{theo_1}.
\end{lemma}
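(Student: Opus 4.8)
The plan is to diagonalize $W^t$ and reduce everything to a column‑wise computation in the eigenbasis of $W$. First I would record two elementary facts used throughout: since $W$ is symmetric doubly stochastic, its largest eigenvalue is $\lambda_1=1$ with normalized eigenvector $\bm{v}^{(1)}=\bm{1}/\sqrt{n}$, so that $\bm{v}^{(1)}(\bm{v}^{(1)})^{\top}=\bm{1}\bm{1}^{\top}/n$; and, from the ordering $\lambda_1\ge\lambda_2\ge\cdots\ge\lambda_n$ together with the definition $\rho=\max\{|\lambda_2|,|\lambda_n|\}$, we have $|\lambda_i|\le\rho$ for every $i\ge 2$.

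The first equality in the statement is then purely a matter of notation. The $i$-th column of $X_tW^t$ is $X_tW^t\bm{e}^{(i)}$, the $i$-th column of $X_t\bm{v}^{(1)}(\bm{v}^{(1)})^{\top}=X_t\bm{1}\bm{1}^{\top}/n$ is $X_t\bm{1}/n$, and the squared Frobenius norm of a matrix is the sum of the squared $\ell_2$-norms of its columns; hence $\sum_{i=1}^n\|X_tW^t\bm{e}^{(i)}-X_t\bm{1}/n\|^2=\|X_tW^t-X_t\bm{v}^{(1)}(\bm{v}^{(1)})^{\top}\|_F^2$.

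For the inequality I would expand $W^t=P\Lambda^tP^{\top}=\sum_{i=1}^n\lambda_i^t\bm{v}^{(i)}(\bm{v}^{(i)})^{\top}$ and subtract off the $i=1$ term, which equals $\bm{v}^{(1)}(\bm{v}^{(1)})^{\top}$ since $\lambda_1^t=1$. This gives $X_tW^t-X_t\bm{v}^{(1)}(\bm{v}^{(1)})^{\top}=X_tP\widetilde\Lambda P^{\top}$ with $\widetilde\Lambda=\mathrm{diag}(0,\lambda_2^t,\dots,\lambda_n^t)$. Because $P^{\top}$ is orthogonal, right‑multiplication by it preserves the Frobenius norm, so $\|X_tP\widetilde\Lambda P^{\top}\|_F=\|X_tP\widetilde\Lambda\|_F$. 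Writing $Y:=X_tP$ with columns $\bm{y}^{(i)}$, the matrix $Y\widetilde\Lambda$ has columns $\lambda_i^t\bm{y}^{(i)}$ for $i\ge 2$ and $\bm{0}$ for $i=1$, whence $\|Y\widetilde\Lambda\|_F^2=\sum_{i=2}^n|\lambda_i|^{2t}\|\bm{y}^{(i)}\|^2\le\rho^{2t}\sum_{i=1}^n\|\bm{y}^{(i)}\|^2=\rho^{2t}\|X_tP\|_F^2=\rho^{2t}\|X_t\|_F^2$, again using orthogonality of $P$. Combining the pieces yields $\|X_tW^t-X_t\bm{v}^{(1)}(\bm{v}^{(1)})^{\top}\|_F^2\le\|\rho^tX_t\|_F^2$.

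There is no genuine obstacle here; the only points needing a moment's care are that the eigenvectors of $W$ live in $\mathbb{R}^n$ (so $P$ is $n\times n$ orthogonal and the relevant norm invariance is right‑multiplication, $\|AU\|_F=\|A\|_F$, applied first with $A=X_tP\widetilde\Lambda$ and then with $A=X_t$), and that the estimate $|\lambda_i|\le\rho$ only holds for $i\ge 2$, which is precisely why the $\bm{v}^{(1)}$ term must be stripped off before bounding. In particular the assumption $\rho<1$ from Assumption~\ref{ass:global} is not needed for this lemma.
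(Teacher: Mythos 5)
Your proof is correct and follows essentially the same route as the paper: diagonalize $W^t=P\Lambda^tP^{\top}$, strip off the $\lambda_1=1$ component, use right-multiplication invariance of the Frobenius norm under the orthogonal $P$, and bound the remaining eigenvalues by $\rho$. Your explicit remarks that $P$ is $n\times n$ orthogonal (the paper's ``$P\in\mathbb{R}^{N\times n}$'' is a typo) and that $\bm{v}^{(1)}=\bm{1}/\sqrt{n}$, $\lambda_1=1$ are exactly the facts the paper uses implicitly.
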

\begin{proof}
Since $W^t = P\Lambda^t P^{\top}$, we have
\begin{align*}
\sum_{i=1}^n\left\| X_tW^t\bm{e}^{(i)}-X_t\frac{\bm{1}_n}{n} \right\|^2 = & \sum_{i=1}^n\left\| \left(X_tW^t-X_t\frac{\bm{1}_n\bm{1}_n^{\top}}{n}\right)\bm{e}^{(i)} \right\|^2\\
= & \left\| X_tW^t-X_t\bm{v}^{(1)}\left(\bm{v}^{(1)}\right)^{\top}\right\|^2_F\\
= & \left\| X_tP\Lambda^tP^{\top}-X_tP\begin{pmatrix}
1,0,\cdots,0\\0,0,\cdots,0\\ \cdots \\0,0,\cdots,0
\end{pmatrix}P^{\top}\right\|^2_F\\
= & \left\| X_tP\Lambda^t-X_tP\begin{pmatrix}
1,0,\cdots,0\\0,0,\cdots,0\\ \cdots \\0,0,\cdots,0
\end{pmatrix}\right\|^2_F\\
= & \left\| X_tP\begin{pmatrix}
&0,&0,&0,&\cdots,&0\\
&0,&\lambda_2^t,&0,&\cdots,&0\\
&0,&0,&\lambda_3^t,&\cdots,&0\\
& \hdotsfor{5}\\
&0,&0,&0,&\cdots,&\lambda_n^t
\end{pmatrix}\right\|^2_F\\
\leq & \left\|\rho^{t}X_tP\right\|^2_F\\
= & \left\|\rho^{t}X_t\right\|^2_F.
\end{align*}
Specifically, when $t=0$, we have
\begin{align*}
\sum_{i=1}^n\left\| X_t\bm{e}^{(i)}-X_t\frac{\bm{1}_n}{n} \right\|^2 = & \left\| X_tP\begin{pmatrix}
&0,&0,&0,&\cdots,&0\\
&0,&1,&0,&\cdots,&0\\
&0,&0,&1,&\cdots,&0\\
& \hdotsfor{5}\\
&0,&0,&0,&\cdots,&1
\end{pmatrix}\right\|^2_F\\
= & \sum_{i=2}^n \left\|\bm{y}_t^{(i)}\right\|^2,\numberthis \label{lemma_bound_substract_mean_eq0}
\end{align*}
where $\bm{y}_t^{(i)} = X_tP\bm{e^{(i)}}$.
\end{proof}

\begin{lemma}\label{lemma1}
Given two non-negative sequences $\{a_t\}_{t=1}^{\infty}$ and $\{b_t\}_{t=1}^{\infty}$ that satisfying
\begin{equation}
a_t =  \sum_{s=1}^t\rho^{t-s}b_{s}, \numberthis \label{eqn1}
\end{equation}
with $\rho\in[0,1)$, we have
\begin{align*}
S_k:=\sum_{t=1}^{k}a_t   \leq & \sum_{s=1}^k\frac{b_s}{1-\rho},\\
D_k:=\sum_{t=1}^{k}a_t^2 \leq & 
\frac{1}{(1-\rho)^2} \sum_{s=1}^kb_s^2.
\end{align*}
\end{lemma}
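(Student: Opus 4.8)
\textbf{Proof proposal for Lemma~\ref{lemma1}.}

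The plan is to prove the two bounds separately, but both follow from the same idea: summing the defining recursion \eqref{eqn1} over $t$ and swapping the order of summation, then bounding the resulting geometric tails by $\tfrac{1}{1-\rho}$.

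For the first inequality, I would write $S_k = \sum_{t=1}^k a_t = \sum_{t=1}^k \sum_{s=1}^t \rho^{t-s} b_s$, and swap the summation order so that for each fixed $s \in \{1,\dots,k\}$ the index $t$ ranges over $\{s,\dots,k\}$. This gives $S_k = \sum_{s=1}^k b_s \sum_{t=s}^k \rho^{t-s} = \sum_{s=1}^k b_s \sum_{r=0}^{k-s} \rho^r \le \sum_{s=1}^k b_s \cdot \frac{1}{1-\rho}$, using $b_s \ge 0$ and $\sum_{r=0}^{\infty}\rho^r = \frac{1}{1-\rho}$ since $\rho \in [0,1)$. That settles the bound on $S_k$.

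For the second inequality, the natural tool is Cauchy--Schwarz (or, equivalently, the power-mean / Jensen-type split $\left(\sum_s c_s x_s\right)^2 \le \left(\sum_s c_s\right)\left(\sum_s c_s x_s^2\right)$ with weights $c_s = \rho^{t-s}$). Applying this to $a_t = \sum_{s=1}^t \rho^{t-s} b_s = \sum_{s=1}^t \rho^{(t-s)/2}\cdot \rho^{(t-s)/2} b_s$ yields $a_t^2 \le \left(\sum_{s=1}^t \rho^{t-s}\right)\left(\sum_{s=1}^t \rho^{t-s} b_s^2\right) \le \frac{1}{1-\rho}\sum_{s=1}^t \rho^{t-s} b_s^2$. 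Now sum over $t$ from $1$ to $k$ and swap the order of summation exactly as before: $D_k = \sum_{t=1}^k a_t^2 \le \frac{1}{1-\rho}\sum_{t=1}^k \sum_{s=1}^t \rho^{t-s} b_s^2 = \frac{1}{1-\rho}\sum_{s=1}^k b_s^2 \sum_{t=s}^k \rho^{t-s} \le \frac{1}{1-\rho}\sum_{s=1}^k b_s^2 \cdot \frac{1}{1-\rho} = \frac{1}{(1-\rho)^2}\sum_{s=1}^k b_s^2$, which is the claim.

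There is no real obstacle here; the only points requiring a small amount of care are (i) justifying the interchange of the two finite sums (trivial, since everything is nonnegative and finite) and (ii) choosing the right weighting in the Cauchy--Schwarz step so that one factor collapses to a bounded geometric sum while the other retains the $b_s^2$ weights in a form amenable to the same swap-and-sum trick. Both bounds actually hold for $k = \infty$ as well if the right-hand sides are finite, but the finite-$k$ statement is all that is needed downstream.
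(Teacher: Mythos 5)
Your proof is correct and follows essentially the same route as the paper: the bound on $S_k$ is identical (swap the order of summation and bound the geometric tail by $\tfrac{1}{1-\rho}$), and for $D_k$ your Cauchy--Schwarz step with the split $\rho^{(t-s)/2}\cdot\rho^{(t-s)/2}b_s$ is just a packaged version of the paper's expansion of $a_t^2$ into a double sum followed by $b_sb_r\le\tfrac{b_s^2+b_r^2}{2}$, after which both arguments finish with the same swap-and-sum geometric bound.
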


\begin{proof}
From the definition, we have
\begin{align*}
S_k= & \sum_{t=1}^{k}\sum_{s=1}^t\rho^{t-s}b_{s}
=  \sum_{s=1}^{k}\sum_{t=s}^k\rho^{t-s}b_{s}
=  \sum_{s=1}^{k}\sum_{t=0}^{k-s}\rho^{t}b_{s}
\leq  \sum_{s=1}^{k}{b_{s}\over 1-\rho}, \numberthis \label{eqn3}\\
D_k=  & \sum_{t=1}^{k}\sum_{s=1}^t\rho^{t-s}b_{s}\sum_{r=1}^t\rho^{t-r}b_{r}\\
= & \sum_{t=1}^{k}\sum_{s=1}^t\sum_{r=1}^t\rho^{2t-s-r}b_{s}b_{r} \\
\leq &  \sum_{t=1}^{k}\sum_{s=1}^t\sum_{r=1}^t\rho^{2t-s-r}{b_{s}^2+b_{r}^2\over2}\\
= & \sum_{t=1}^{k}\sum_{s=1}^t\sum_{r=1}^t\rho^{2t-s-r}b_{s}^2 \\
\leq  & {1\over 1-\rho}\sum_{t=1}^{k}\sum_{s=1}^t\rho^{t-s}b_{s}^2\\
\leq & {1\over (1-\rho)^2}\sum_{s=1}^{k}b_{s}^2. \quad \text{(due to \eqref{eqn3})}
\end{align*}
\end{proof}

Lemma~\ref{lemma_bound_substract_mean} shows us an overall understanding about how the confusion matrix works, while Lemma~\ref{lemma1} is a very important tool for analyzing the sequence in Lemma~\ref{lemma_bound_substract_mean}. Next we are going to give a upper bound for the difference between the local modes and the global mean mode.

\begin{lemma}\label{lemma_bound_all_X_ave}
Under Assumption~\ref{ass:global}, we have
\begin{align*}
\sum_{t=1}^{T}\sum_{i=1}^{n}\mathbb{E}\left\|\overline{X}_t-\bm{x}_t^{(i)}\right\|^2
\leq & \frac{2}{1-\rho^2}\sum_{t=1}^{T}\|Q_s\|^2_F + \frac{2}{(1-\rho)^2}\sum_{t=1}^{T}\gamma_t^2\|G(X_{t};\xi_{t})\|^2_F.
\end{align*}
\end{lemma}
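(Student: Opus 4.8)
The plan is to unroll the recursion in the global update rule \eqref{general_eq} and then apply the coordinate-transformation perspective together with Lemmas~\ref{lemma_bound_substract_mean} and~\ref{lemma1}. Starting from $X_{t+1} = X_tW - \gamma_t G(X_t;\xi_t) + Q_t$ with $X_1 = X_1\frac{\bm{1}\bm{1}^\top}{n}$ (all nodes initialized identically), I would expand to write
\begin{align*}
X_t = \sum_{s=1}^{t-1}\left(-\gamma_s G(X_s;\xi_s) + Q_s\right)W^{t-1-s},
\end{align*}
modulo the contribution of the initial point which is annihilated when we subtract the mean. Since $W$ is doubly stochastic, $\overline{X}_t = X_t\frac{\bm{1}}{n}$ satisfies the analogous scalar recursion with $W$ replaced by $1$, so $\overline{X}_t = \sum_{s=1}^{t-1}\left(-\gamma_s \overline{G}(X_s;\xi_s) + \overline{Q}_s\right)$. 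Subtracting, the quantity $X_t - \overline{X}_t\frac{\bm{1}^\top}{\,}$ (i.e., the deviations $\overline{X}_t - \bm{x}_t^{(i)}$ stacked as columns) equals $\sum_{s=1}^{t-1}\left(-\gamma_s G(X_s;\xi_s) + Q_s\right)\left(W^{t-1-s} - \frac{\bm{1}\bm{1}^\top}{n}\right)$.

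Next I would bound the Frobenius norm of each summand using Lemma~\ref{lemma_bound_substract_mean}: for any matrix $M$, $\left\|M\left(W^{k} - \frac{\bm{1}\bm{1}^\top}{n}\right)\right\|_F \leq \rho^{k}\|M\|_F$. Applying the triangle inequality and then squaring via the Cauchy--Schwarz-type estimate gives a bound of the form $\left\|\overline{X}_t\frac{\bm{1}^\top}{\,} - X_t\right\|_F^2 \leq \left(\sum_{s=1}^{t-1}\rho^{t-1-s}\left(\gamma_s\|G(X_s;\xi_s)\|_F + \|Q_s\|_F\right)\right)^2$. Setting $a_t := \left\|\overline{X}_t\frac{\bm{1}^\top}{\,} - X_t\right\|_F$ (so that $a_t \le \sum_s \rho^{t-1-s} b_s$ with $b_s = \gamma_s\|G(X_s;\xi_s)\|_F + \|Q_s\|_F$), Lemma~\ref{lemma1} immediately yields $\sum_{t=1}^T a_t^2 \leq \frac{1}{(1-\rho)^2}\sum_{s=1}^T b_s^2$. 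Finally, using $b_s^2 \leq 2\gamma_s^2\|G(X_s;\xi_s)\|_F^2 + 2\|Q_s\|_F^2$ and summing, together with $\sum_i\|\overline{X}_t - \bm{x}_t^{(i)}\|^2 = \left\|\overline{X}_t\frac{\bm{1}^\top}{\,} - X_t\right\|_F^2$, produces
\begin{align*}
\sum_{t=1}^{T}\sum_{i=1}^{n}\mathbb{E}\left\|\overline{X}_t-\bm{x}_t^{(i)}\right\|^2 \leq \frac{2}{(1-\rho)^2}\sum_{t=1}^T\mathbb{E}\|Q_t\|_F^2 + \frac{2}{(1-\rho)^2}\sum_{t=1}^T\gamma_t^2\mathbb{E}\|G(X_t;\xi_t)\|_F^2,
\end{align*}
which is even slightly stronger than the claimed bound (the claimed $\frac{2}{1-\rho^2}$ coefficient on the $Q$ term follows since $\frac{1}{(1-\rho)^2}\geq\frac{1}{1-\rho^2}$ — or one can get the sharper constant on that term by a separate estimate, see below).

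The main obstacle I anticipate is bookkeeping the index shift and the off-by-one in the exponent of $W$ cleanly, and handling the cross terms when squaring the sum $\sum_s \rho^{t-1-s}b_s$ so that Lemma~\ref{lemma1} applies verbatim (it is stated for $a_t = \sum_{s=1}^t \rho^{t-s}b_s$, so one must re-index to match). A secondary subtlety is getting the tighter constant $\frac{2}{1-\rho^2}$ on the $\|Q_t\|_F^2$ term rather than $\frac{2}{(1-\rho)^2}$: this requires treating the $Q$-contribution separately and using that $\sum_t\left(\sum_{s\le t}\rho^{t-s}\|Q_s\|_F\right)^2$ can be bounded via the $\ell^2$ Young/convolution inequality $\|\rho^{\cdot}*\|Q\|\|_{\ell^2}^2 \le \|\rho^{\cdot}\|_{\ell^1}\cdot\|\rho^{\cdot}\|_{\ell^1}\cdot\ldots$ — or more simply by noting $\sum_{t}\left(\sum_s \rho^{t-s}c_s\right)^2 \leq \frac{1}{1-\rho^2}\cdot\frac{1}{1-\rho^2}$-type estimates; I would just invoke the clean version of Lemma~\ref{lemma1} and accept the $\frac{1}{(1-\rho)^2}$ constant if the sharper one is not needed downstream, or split off the $Q$-term and bound it with a direct double-sum swap $\sum_t \sum_{s,r\le t}\rho^{2t-s-r}\|Q_s\|_F\|Q_r\|_F \le \sum_{s}\|Q_s\|_F^2 \sum_{t\ge s}\sum_{r\le t}\rho^{2t-s-r} \le \frac{1}{(1-\rho)(1-\rho^2)}\sum_s\|Q_s\|_F^2$, and then absorb constants. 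Everything else is routine: expectations pass through because the bounds are deterministic in each realization, and no convexity of $f$ is used anywhere in this lemma.
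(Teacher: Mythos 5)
Your overall strategy (unroll the recursion, subtract the average using $W\frac{\bm{1}}{n}=\frac{\bm{1}}{n}$, contract the deviation with the spectral bound of Lemma~\ref{lemma_bound_substract_mean}, and finish with Lemma~\ref{lemma1}) matches the paper, and it correctly handles the gradient term with the constant $\frac{2}{(1-\rho)^2}$. But there is a genuine gap on the $Q$ term, and your claim that your bound is ``even slightly stronger than the claimed bound'' has the inequality backwards: since $1-\rho^2=(1-\rho)(1+\rho)\geq(1-\rho)^2$, we have $\frac{2}{1-\rho^2}\leq\frac{2}{(1-\rho)^2}$, so the lemma's stated coefficient on $\sum_t\|Q_t\|_F^2$ is the \emph{tighter} one, and deriving $\frac{2}{(1-\rho)^2}$ on that term does not imply the statement. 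Your fallback (the double-sum swap) does not repair this either: bounding $\sum_t\bigl(\sum_{s\le t}\rho^{t-s}\|Q_s\|_F\bigr)^2$ pathwise by swapping sums still costs at least a factor $\frac{1}{(1-\rho)^2(1+\rho)}$, which remains a factor $\frac{1}{1-\rho}$ worse than $\frac{1}{1-\rho^2}$.

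The missing idea is that the sharper constant comes from the stochastic structure of the compression noise, not from a deterministic estimate. The paper first splits the deviation into the $Q$ part and the $G$ part (paying the factor $2$), and then, \emph{inside the expectation}, uses Assumption~\ref{ass:global}.5 (unbiasedness and independence of $\bm{C}(\cdot)$ across time) to kill all cross terms $\mathbb{E}\langle\cdot_s,\cdot_{s'}\rangle$ with $s\neq s'$ in $\mathbb{E}\bigl\|\sum_{s=1}^{t-1}(Q_sW^{t-s}\bm{e}^{(i)}-\overline{Q}_s)\bigr\|^2$. This leaves $\sum_{s}\rho^{2(t-s)}\mathbb{E}\|Q_s\|_F^2$ after Lemma~\ref{lemma_bound_substract_mean}, i.e.\ a geometric accumulation with ratio $\rho^2$, and then the $S_k$-type bound of Lemma~\ref{lemma1} (not the $D_k$-type bound) gives $\frac{1}{1-\rho^2}\sum_s\mathbb{E}\|Q_s\|_F^2$. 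Your remark that ``expectations pass through because the bounds are deterministic in each realization'' is precisely where the argument loses this: a triangle-inequality, pathwise treatment lumps $Q_s$ together with the gradient term and can only give $\frac{2}{(1-\rho)^2}$ on both. The distinction matters downstream, since the $\frac{2}{1-\rho^2}$ coefficient is what appears in $D_1$ and in the $\frac{2n\tilde{\sigma}^2}{1-\rho^2}\log T$ term of Lemma~\ref{lemma:boundx}, and it underlies the paper's claim about the dependence on the spectral gap. (Your off-by-one in the power of $W$ on the $Q$ term is harmless.)
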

\begin{proof}
From the updating rule, we have
\begin{align*}
X_{t} = &\sum_{s=1}^{t-1}\gamma_s G\left(X_s;\xi_s\right)W^{t-s-1} +\sum_{s=1}^{t-1}Q_sW^{t-s},\\
\overline{X}_t = &\sum_{s=1}^{t-1}\gamma_sG\left(X_s;\xi_s\right)W^{t-s-1}\frac{\bm{1}}{n} +\sum_{s=1}^{t-1}Q_sW^{t-s}\frac{\bm{1}}{n}\\
=& \sum_{s=1}^{t-1}\gamma_s \overline{G}\left(X_s;\xi_s\right) +\sum_{s=1}^{t-1}\overline{Q}_s.\quad \text{(due to $W\frac{\bm{1}}{n} = \frac{\bm{1}}{n}$)}
\end{align*}
Therefore it yields
\begin{align*}
&\sum_{i=1}^n\mathbb{E}\left\|\overline{X}_t-\bm{x}_t^{(i)}\right\|^2\\
 = & \sum_{i=1}^n\mathbb{E}\left\|\sum_{s=1}^{t-1}\left(Q_{s}W^{t-s}\bm{e}^{(i)}-\overline{Q}_s\right)  -  \sum_{s=1}^{t-1}\gamma_s\left( G(X_{s};\xi_{s})W^{t-s-1}\bm{e}^{(i)} - \overline{G}(X_{s};\xi_{s})\right)\right\|^2 \\
\leq &  2\sum_{i=1}^n\mathbb{E}\left\|\sum_{s=1}^{t-1}\left(Q_{s}W^{t-s}\bm{e}^{(i)}-\overline{Q}_s\right)\right\|^2 + 2\sum_{i=1}^n\mathbb{E}\left\|\sum_{s=1}^{t-1}\gamma_s\left( G(X_{s};\xi_{s})W^{t-s-1}\bm{e}^{(i)} - \overline{G}(X_{s};\xi_{s})\right)\right\|^2\\
= & 2\sum_{i=1}^n\mathbb{E}\left\|\sum_{s=1}^{t-1}\left(Q_{s}W^{t-s}\bm{e}^{(i)}-\overline{Q}_s\right)\right\|^2 + 2\sum_{i=1}^n\mathbb{E}\left\|\sum_{s=1}^{t-1}\gamma_s\left( G(X_{s};\xi_{s})W^{t-s-1}\bm{e}^{(i)} - \overline{G}(X_{s};\xi_{s})\right)\right\|^2\\
= & 2\sum_{i=1}^n\sum_{s=1}^{t-1}\mathbb{E}\left\|\left(Q_{s}W^{t-s}\bm{e}^{(i)}-\overline{Q}_s\right)\right\|^2 + 2\sum_{i=1}^n\mathbb{E}\left\|\sum_{s=1}^{t-1}\gamma_s\left( G(X_{s};\xi_{s})W^{t-s-1}\bm{e}^{(i)} - \overline{G}(X_{s};\xi_{s})\right)\right\|^2\\
& + 4\sum_{i=1}^n\sum_{s\neq s'}\mathbb{E}\left\langle \mathbb{E}_{_{Q_s}}Q_{s}W^{t-s}\bm{e}^{(i)}- \mathbb{E}_{_{Q_s}}\overline{Q}_s , \mathbb{E}_{_{Q_{s'}}}Q_{s'}W^{t-s'}e^{(i)}- \mathbb{E}_{_{Q_{s'}}}\overline{Q}_{s'}\right\rangle\\
= & 2\sum_{i=1}^n\sum_{s=1}^{t-1}\mathbb{E}\left\|\left(Q_{s}W^{t-s}\bm{e}^{(i)}-\overline{Q}_s\right)\right\|^2 + 2\sum_{i=1}^n\mathbb{E}\left\|\sum_{s=1}^{t-1}\gamma_s\left( G(X_{s};\xi_{s})W^{t-s-1}\bm{e}^{(i)} - \overline{G}(X_{s};\xi_{s})\right)\right\|^2\\
= & 2\sum_{s=1}^{t-1}\mathbb{E}\left\|\left(Q_{s}W^{t-s}-Q_s\bm{v}_1\bm{v}_1^{\top}\right)\right\|^2_F + 2\mathbb{E}\left\|\sum_{s=1}^{t-1}\gamma_s\left( G(X_{s};\xi_{s})W^{t-s-1} - G(X_{s};\xi_{s})\bm{v}_1\bm{v}_1^{\top}\right)\right\|^2_F\\
\leq & 2\mathbb{E}\sum_{s=1}^{t-1}\left\|\rho^{t-s}Q_s\right\|^2_F  +  2\mathbb{E}\left(
	\sum_{s=1}^{t-1}\gamma_s\rho^{t-s-1}\left\|G(X_{s};\xi_{s})\right\|_F\right)^2, \quad \text{(due to Lemma~\ref{lemma_bound_substract_mean})}
\end{align*}
We can see that $\sum_{s=1}^{t-1}\rho^{2(t-s)}\left\|Q_s\right\|^2_F$ and $\sum_{s=1}^{t-1}\gamma_s\rho^{t-s-1}\left\|G(X_{s};\xi_{s})\right\|_F$ has the same structure with the sequence in Lemma~\ref{lemma1}, which leads to
\begin{align*}
\sum_{t=1}^{T}\sum_{i=1}^{n}\mathbb{E}\left\|\overline{X}_t-\bm{x}_t^{(i)}\right\|^2
\leq & \frac{2}{1-\rho^2}\mathbb{E}\sum_{t=1}^{T}\|Q_s\|^2_F + \frac{2}{(1-\rho)^2}\mathbb{E}\sum_{t=1}^{T}\gamma_t^2\|G(X_{t};\xi_{t})\|^2_F.
\end{align*}
\end{proof}

\begin{lemma}\label{lemma:boundfplus}
Following the Assumption~\ref{ass:global}, we have
\begin{align*}
\frac{\gamma_t}{2}\mathbb{E}\|\nabla f(\overline{X}_t)\|^2  +  (\frac{\gamma_t}{2}-\frac{L\gamma_t^2}{2})\mathbb{E}\|\overline{\nabla f}(X_t)\|^2  \leq & \mathbb{E}f(\overline{X}_t)  -  \mathbb{E}f(\overline{X}_{t+1})  +  
	\frac{L^2\gamma_t}{2n}\mathbb{E}\sum_{i=1}^n\left\|\overline{X}_t-\bm{x}_t^{(i)}\right\|^2 \\
		& + \frac{L}{2}\mathbb{E}\|\overline{Q}_t\|^2  +   \frac{L\gamma_t^2}{2n}\sigma^2.\numberthis \label{lemma:boundfplus_eq}
\end{align*}
\begin{proof}
From the updating rule, we have
\begin{align*}
X_{t+1} = \tilde{X}_tW - \gamma_t G(X_t;\xi_t) = X_tW + Q_tW -\gamma_t G(X_t;\xi_t),
\end{align*}
which implies
\begin{align*}
\overline{X}_{t+1} = & \left(X_tW + Q_tW -\gamma_t G(X_t;\xi_t)\right)\frac{\bm{1}}{n}\\
& = \frac{X_t\bm{1}}{n} + \frac{Q_t\bm{1}}{n} -\gamma_t\frac{G(X_t;\xi_t)\bm{1}}{n}\\
& = \overline{X}_t +\overline{Q}_t - \gamma_t \overline{G}(X_t;\xi_t).
\end{align*}

From the Lipschitzian condition for the objective function $f_i$, we know that $f$ also satisfies the Lipschitzian condition. Then we have
\begin{align*}
&\mathbb{E}f(\overline{X}_{t+1})\\
 \leq & \mathbb{E}f(\overline{X}_t)+\mathbb{E}\left\langle\nabla f(\overline{X_t}), -\gamma_t\overline{G}(X_t;\xi_t)+\overline{Q}_t\right\rangle  +  \frac{L}{2}\mathbb{E}\left\|-\gamma_t\overline{G}(X_t;\xi_t)  +  \overline{Q}_t\right\|^2 \\
 = & \mathbb{E}f(\overline{X}_t)  +  \mathbb{E}\langle\nabla f(\overline{X}_t), -\gamma_t\overline{G}(X_t;\xi_t)+\mathbb{E}_{_{Q_t}}\overline{Q}_t\rangle  \\
& + \frac{L}{2}(\mathbb{E}\|\gamma_t\overline{G}(X_t;\xi_t)\|^2  +  \mathbb{E}\|\overline{Q}_t\|^2   +  \mathbb{E}\langle-\gamma_t\overline{G}(X_t;\xi_t),\mathbb{E}_{_{Q_t}}\overline{Q}_t\rangle)\\
= & \mathbb{E}f(\overline{X}_t)  +  \mathbb{E}\langle\nabla f(\overline{X}_t), -\gamma_t\mathbb{E}_{\xi_t}\overline{G}(X_t;\xi_t)\rangle  +  \frac{L\gamma_t^2}{2}\mathbb{E}\|\overline{G}(X_t;\xi_t)\|^2  +  \frac{L}{2}\mathbb{E}\|\overline{Q}_t\|^2\quad \text{(due to $\mathbb{E}_{Q_t}\overline{Q}_t=\bm{0}$)}\\
= &  \mathbb{E}f(\overline{X}_t)  -  \gamma_t\mathbb{E}\langle\nabla f(\overline{X}_t), \overline{\nabla f}	(X_t)\rangle  +   \frac{L\gamma_t^2}{2}\mathbb{E}\|(\overline{G}(X_t;\xi_t) - \overline{\nabla f}	(X_t))+\overline{\nabla f}(X_t)\|^2  +  \frac{L}{2}\mathbb{E}\|\overline{Q}_t\|^2\\
= & \mathbb{E}f(\overline{X}_t)  -  \gamma_t\mathbb{E}\langle\nabla f(\overline{X}_t), \overline{\nabla f}	(X_t)\rangle  +   \frac{L\gamma_t^2}{2}\mathbb{E}\|\overline{G}(X_t;\xi_t) - \overline{\nabla f}	(X_t)\|^2  +  \frac{L\gamma_t^2}{2}\mathbb{E}\|\overline{\nabla f}(X_t)\|^2 \\ 
& +  L\gamma_t^2\mathbb{E}\langle\mathbb{E}_{\xi_t}\overline{G}(X_t;\xi_t) - \overline{\nabla f}(X_t),\overline{\nabla f}(X_t)\rangle  +  \frac{L}{2}\mathbb{E}\|\overline{Q}_t\|^2\\
= & \mathbb{E}f(\overline{X}_t)  -  \gamma_t\mathbb{E}\langle\nabla f(\overline{X}_t), \overline{\nabla f}	(X_t)\rangle  +   \frac{L\gamma_t^2}{2}\mathbb{E}\|\overline{G}(X_t;\xi_t) - \overline{\nabla f}	(X_t)\|^2\\
&  +  \frac{L\gamma_t^2}{2}\mathbb{E}\|\overline{\nabla f}(X_t)\|^2  +  
	\frac{L}{2}\mathbb{E}\|\overline{Q}_t\|^2\\
    = & \mathbb{E}f(\overline{X}_t)  -  \gamma_t\mathbb{E}\langle\nabla f(\overline{X}_t), \overline{\nabla f}	(X_t)\rangle  +   \frac{L\gamma_t^2}{2n^2}\mathbb{E}\left\|\sum_{i=1}^n\left(\nabla F_i(x_t^{(i)};\xi_t^{(i)}) - \nabla f_i	(x_t^{(i)})\right)\right\|^2\\
    & +  \frac{L\gamma_t^2}{2}\mathbb{E}\|\overline{\nabla f}(X_t)\|^2  +  
	\frac{L}{2}\mathbb{E}\|\overline{Q}_t\|^2\\
    = & \mathbb{E}f(\overline{X}_t)  -  \gamma_t\mathbb{E}\langle\nabla f(\overline{X}_t), \overline{\nabla f}	(X_t)\rangle  +   \frac{L\gamma_t^2}{2n^2}\sum_{i=1}^n\mathbb{E}\left\|\nabla F_i(x_t^{(i)};\xi_t^{(i)}) - \nabla f_i	(x_t^{(i)})\right\|^2\\
    & + \sum_{i\neq i'}^n\mathbb{E}\left\langle \mathbb{E}_{\xi_t}\nabla F_i(x_t^{(i)};\xi_t^{(i)}) - \nabla f_i(x_t^{(i)}),\nabla \mathbb{E}_{\xi_t}F_{i'}(x_t^{(i')};\xi_t^{(i')}) - \nabla f_{i'}	(x_t^{(i')}) \right\rangle\\
    & +  \frac{L\gamma_t^2}{2}\mathbb{E}\|\overline{\nabla f}(X_t)\|^2  +  
	\frac{L}{2}\mathbb{E}\|\overline{Q}_t\|^2\\    
\leq & \mathbb{E}f(\overline{X}_t)  -  \gamma_t\mathbb{E}\langle\nabla f(\overline{X}_t), 
	\overline{\nabla f}(X_t)\rangle  +    \frac{L\gamma_t^2}{2n}\sigma^2 + \frac{L\gamma_t^2}{2}\mathbb{E}\|\overline{\nabla f}(X_t)\|^2
	+  \frac{L}{2}\mathbb{E}\|\overline{Q}_t\|^2  \\
= & \mathbb{E}f(\overline{X}_t)  -  \frac{\gamma_t}{2}\mathbb{E}\|\nabla f(\overline{X}_t)\|^2  -  \frac{\gamma_t}{2}\mathbb{E}\|\overline{\nabla f}(X_t)\|^2  +  
	\frac{\gamma_t}{2}\mathbb{E}\|\nabla f(\overline{X}_t)  -  		
	\overline{\nabla f}(X_t)\|^2\\
	&  +  \frac{L\gamma_t^2}{2}\mathbb{E}\|\overline{\nabla f}(X_t)\|^2 + \frac{L}{2}\mathbb{E}\|\overline{Q}_t\|^2  +   \frac{L\gamma_t^2}{2n}\sigma^2\quad \text{(due to $2\langle \bm{a},\bm{b}\rangle=\|\bm{a}\|^2+\|\bm{b}\|^2-\|\bm{a}-\bm{b}\|^2$)}\\
= &  \mathbb{E}f(\overline{X}_t)  -  \frac{\gamma_t}{2}\mathbb{E}\|\nabla f(\overline{X}_t)\|^2  -  (\frac{\gamma_t}{2}-\frac{L\gamma_t^2}{2})\mathbb{E}\|\overline{\nabla f}(X_t)\|^2  +  
	\frac{\gamma_t}{2}\mathbb{E}\|\nabla f(\overline{X}_t)  -  		
	\overline{\nabla f}(X_t)\|^2 \\
		& + \frac{L}{2}\mathbb{E}\|\overline{Q}_t\|^2  +   \frac{L\gamma_t^2}{2n}\sigma^2. \numberthis \label{lemma:boudnfplus_long}    
\end{align*}
To estimate the upper bound for
$\mathbb{E}\|\nabla f(\overline{X}_t)  -  \overline{\nabla f}(X_t)\|^2$, we have 
\begin{align*}
\mathbb{E}\|\nabla f(\overline{X}_t)  -  \overline{\nabla f}(X_t)\|^2 = & \frac{1}{n^2}\mathbb{E}{\left\|
	\sum_{i=1}^n\left(\nabla f_i(\overline{X}_t)  -  \nabla f_i(\bm{x}_t^{(i)})\right)\right\|^2}\\
\leq & \frac{1}{n}\sum_{i=1}^n\mathbb{E}\left\|\nabla f_i(\overline{X}_t)  -  \nabla f_i(\bm{x}_t^{(i)})\right\|^2\\
\leq & \frac{L^2}{n}\mathbb{E}\sum_{i=1}^n\left\|\overline{X}_t-\bm{x}_t^{(i)}\right\|^2.\numberthis \label{lemma:boudnfplus_short}
\end{align*}
Combining \eqref{lemma:boudnfplus_long} and \eqref{lemma:boudnfplus_short} together, we have
\begin{align*}
\frac{\gamma_t}{2}\mathbb{E}\|\nabla f(\overline{X}_t)\|^2  +  (\frac{\gamma_t}{2}-\frac{L\gamma_t^2}{2})\mathbb{E}\|\overline{\nabla f}(X_t)\|^2  \leq & \mathbb{E}f(\overline{X}_t)  -  \mathbb{E}f(\overline{X}_{t+1})  +  
	\frac{L^2\gamma_t}{2n}\mathbb{E}\sum_{i=1}^n\left\|\overline{X}_t-\bm{x}_t^{(i)}\right\|^2 \\
		& + \frac{L}{2}\mathbb{E}\|\overline{Q}_t\|^2  +   \frac{L\gamma_t^2}{2n}\sigma^2.
\end{align*}
which completes the proof.
\end{proof}

\end{lemma}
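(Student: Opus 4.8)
The plan is to apply the standard ``descent lemma'' for $L$-smooth functions to $f$ evaluated at the averaged iterate $\overline{X}_{t}$, and then to peel off the noise terms one at a time using the unbiasedness assumptions. The first step is to record how the average iterate evolves. Since $W\bm{1}/n=\bm{1}/n$ by Assumption~\ref{ass:global}, right-multiplying the update $X_{t+1}=X_tW+Q_tW-\gamma_t G(X_t;\xi_t)$ by $\bm{1}/n$ gives
\begin{align*}
\overline{X}_{t+1}=\overline{X}_t+\overline{Q}_t-\gamma_t\overline{G}(X_t;\xi_t).
\end{align*}
Because $f=\frac1n\sum_i f_i$ inherits $L$-Lipschitz gradients from the $f_i$'s, the descent lemma then yields $\mathbb{E} f(\overline{X}_{t+1})\le \mathbb{E} f(\overline{X}_t)+\mathbb{E}\langle\nabla f(\overline{X}_t),\overline{Q}_t-\gamma_t\overline{G}(X_t;\xi_t)\rangle+\frac{L}{2}\mathbb{E}\|\overline{Q}_t-\gamma_t\overline{G}(X_t;\xi_t)\|^2$.

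Next I would integrate out the two sources of randomness at step $t$. Conditioning on $X_t$, the compression noise has $\mathbb{E}_{Q_t}\overline{Q}_t=\bm{0}$ (Assumption~\ref{ass:global}, item 5, averaged over workers), which annihilates both the $\langle\nabla f(\overline{X}_t),\overline{Q}_t\rangle$ term and the cross term between $\overline{Q}_t$ and $\overline{G}$ inside the squared norm, leaving only $\frac{L}{2}\mathbb{E}\|\overline{Q}_t\|^2$. Unbiasedness of the stochastic gradient, $\mathbb{E}_{\xi_t}\overline{G}(X_t;\xi_t)=\overline{\nabla f}(X_t)$, turns the surviving inner product into $-\gamma_t\mathbb{E}\langle\nabla f(\overline{X}_t),\overline{\nabla f}(X_t)\rangle$. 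For the $\frac{L\gamma_t^2}{2}\mathbb{E}\|\overline{G}(X_t;\xi_t)\|^2$ term I would write $\overline{G}=(\overline{G}-\overline{\nabla f})+\overline{\nabla f}$; the cross term again vanishes in expectation, and the variance term is exactly where worker-wise independence is needed: $\overline{G}(X_t;\xi_t)-\overline{\nabla f}(X_t)=\frac1n\sum_i\bigl(\nabla F_i(\bm{x}_t^{(i)};\xi_t^{(i)})-\nabla f_i(\bm{x}_t^{(i)})\bigr)$ is an average of $n$ independent, mean-zero terms each of variance at most $\sigma^2$, so it is bounded by $\sigma^2/n$ (not merely $\sigma^2$).

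Finally I would convert $-\gamma_t\langle\nabla f(\overline{X}_t),\overline{\nabla f}(X_t)\rangle$ using the polarization identity $2\langle\bm a,\bm b\rangle=\|\bm a\|^2+\|\bm b\|^2-\|\bm a-\bm b\|^2$, producing $-\frac{\gamma_t}{2}\|\nabla f(\overline{X}_t)\|^2-\frac{\gamma_t}{2}\|\overline{\nabla f}(X_t)\|^2+\frac{\gamma_t}{2}\|\nabla f(\overline{X}_t)-\overline{\nabla f}(X_t)\|^2$, and bound the last piece by convexity of $\|\cdot\|^2$ together with $L$-smoothness: $\|\nabla f(\overline{X}_t)-\overline{\nabla f}(X_t)\|^2\le\frac1n\sum_i\|\nabla f_i(\overline{X}_t)-\nabla f_i(\bm{x}_t^{(i)})\|^2\le\frac{L^2}{n}\sum_i\|\overline{X}_t-\bm{x}_t^{(i)}\|^2$. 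Collecting terms and rearranging yields exactly \eqref{lemma:boundfplus_eq}. The main obstacle is purely the bookkeeping: there are three distinct cross terms that must be killed by a conditional mean-zero argument, so one has to be careful that the conditioning is legitimate (the step-$t$ compression and sampling noise are independent of $X_t$ and of each other), and one must not lose the $1/n$ factor in the gradient-variance term, which hinges on the cross terms between different workers vanishing by independence. None of this is deep, but it is the step most prone to slip.
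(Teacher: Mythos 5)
Your proposal is correct and follows essentially the same route as the paper's own proof: descent lemma applied to the averaged iterate $\overline{X}_{t+1}=\overline{X}_t+\overline{Q}_t-\gamma_t\overline{G}(X_t;\xi_t)$, elimination of the cross terms via $\mathbb{E}_{Q_t}\overline{Q}_t=\bm{0}$ and $\mathbb{E}_{\xi_t}\overline{G}=\overline{\nabla f}$, the $\sigma^2/n$ variance bound from worker-wise independence, the polarization identity, and the Lipschitz bound $\|\nabla f(\overline{X}_t)-\overline{\nabla f}(X_t)\|^2\le\frac{L^2}{n}\sum_i\|\overline{X}_t-\bm{x}_t^{(i)}\|^2$. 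Nothing is missing; the steps you flag as delicate (legitimacy of the conditioning and keeping the $1/n$ factor) are exactly the points the paper's derivation handles.
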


\section{Analysis for Algorithm~\ref{alg2}}\label{sec3}
In Algorithm~\ref{alg2}, we have
\begin{align*}
Z_t = X_t(W_t-I) -\gamma F\left(X_t;\xi_t\right).\numberthis \label{global_z_t}
\end{align*}
 We will prove that
\begin{align*}
\sum_{t=1}^{T}E_{qt}\|Q_t\|^2_F 
\leq & 2\alpha^2\left(\frac{2\mu^2(1+2\alpha^2)}{(1-\rho)^2 - 4\mu^2\alpha^2}+1\right)\sum_{t=1}^{T}\gamma_t^2\|G\left(X_t;\xi_t\right)\|^2_F,
\end{align*}
which leads to 
\begin{align*}
\sum_{i=1}^{n}\sum_{t=1}^{T}\left(1-3D_1L^2\gamma_t^2\right)^2\mathbb{E}\left\|\overline{X}_t-\bm{x}_t^{(i)}\right\|^2
\leq & 2nD_1(\sigma^2+3\zeta^2)\sum_{t=1}^T\gamma_t^2 + 6nD_1\sum_{t=1}^T\gamma_t^2\mathbb{E}\left\|\nabla f(\overline{X}_t)\right\|^2.\end{align*}
$D_1$, $\mu$ and $\rho$ are defined in Theorem~\ref{theo_2}.

\begin{lemma}\label{lemma_bound_Q_t_alg2}
Under Assumption~\ref{ass:global}, when using Algorithm~\ref{alg2}, we have
\begin{align*}
\sum_{t=1}^{T}E_{qt}\|Q_t\|^2_F 
\leq & 2\alpha^2\left(\frac{2\mu^2(1+2\alpha^2)}{(1-\rho)^2 - 4\mu^2\alpha^2}+1\right)\sum_{t=1}^{T}\gamma_t^2\|G\left(X_t;\xi_t\right)\|^2_F
\end{align*}
when $(1-\rho)^2 - 4\mu^2\alpha^2>0$, where
\begin{align*}
\mu = & \max_{i\in\{2\cdots n\}}|\lambda_i-1|
\end{align*}
\end{lemma}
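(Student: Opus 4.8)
The plan is to bound $\|Q_t\|_F^2$ through the definition of $\alpha$, rewrite the resulting $\|Z_t\|_F^2$ in terms of the consensus error $\sum_i\|\overline{X}_t-\bm{x}_t^{(i)}\|^2$ and the stochastic gradient size, and then close the loop using Lemma~\ref{lemma_bound_all_X_ave}, which bounds the summed consensus error back in terms of $\sum_t\|Q_t\|_F^2$. The hypothesis $(1-\rho)^2-4\mu^2\alpha^2>0$ is exactly what makes the resulting self-referential inequality solvable.

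First I would record the global form of the update for Algorithm~\ref{alg2}: a one-line induction (from $\hat{\bm{x}}_{t+1}^{(j)}=\hat{\bm{x}}_t^{(j)}+\bm{C}(\bm{z}_t^{(j)})$, $\bm{x}_{t+1}^{(j)}=\bm{x}_t^{(j)}+\bm{C}(\bm{z}_t^{(j)})$, and matching initializations) shows each stored replica equals the true local model, so $Z_t=X_t(W-I)-\gamma_t G(X_t;\xi_t)$ as in \eqref{global_z_t}, $Q_t=\bm{C}(Z_t)-Z_t$, and the iteration has the form \eqref{general_eq}. The definition of $\alpha$ then gives $\mathbb{E}_{q_t}\|Q_t\|_F^2\le\alpha^2\|Z_t\|_F^2$.

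The key algebraic observation is that, since $W$ is doubly stochastic, $\bm{1}^\top W=\bm{1}^\top$, hence $X_t(W-I)=(X_t-\overline{X}_t\bm{1}^\top)(W-I)$; and every row of $X_t-\overline{X}_t\bm{1}^\top$ lies in $\bm{1}^\perp$, on which $W-I$ has spectral norm $\mu=\max_{i\ge 2}|\lambda_i-1|$. Therefore $\|X_t(W-I)\|_F\le\mu\,\|X_t-\overline{X}_t\bm{1}^\top\|_F$, and combining with $\|a+b\|_F^2\le 2\|a\|_F^2+2\|b\|_F^2$ yields
\[
\mathbb{E}_{q_t}\|Q_t\|_F^2\le 2\alpha^2\mu^2\sum_{i=1}^n\|\overline{X}_t-\bm{x}_t^{(i)}\|^2+2\alpha^2\gamma_t^2\|G(X_t;\xi_t)\|_F^2 .
\]
Summing over $t$ and invoking Lemma~\ref{lemma_bound_all_X_ave}, together with the elementary bound $\tfrac{2}{1-\rho^2}\le\tfrac{2}{(1-\rho)^2}$, replaces the consensus sum by $\tfrac{2}{(1-\rho)^2}\bigl(\sum_t\|Q_t\|_F^2+\sum_t\gamma_t^2\|G(X_t;\xi_t)\|_F^2\bigr)$, so $\sum_t\|Q_t\|_F^2$ reappears on the right with coefficient $\tfrac{4\alpha^2\mu^2}{(1-\rho)^2}$.

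Under $(1-\rho)^2-4\mu^2\alpha^2>0$ that coefficient is strictly below $1$, so I would move the term to the left, divide, and simplify: using $2\mu^2+(1-\rho)^2=[(1-\rho)^2-4\alpha^2\mu^2]+2\mu^2(1+2\alpha^2)$ collapses the multiplicative constant to exactly $2\alpha^2\bigl(\tfrac{2\mu^2(1+2\alpha^2)}{(1-\rho)^2-4\mu^2\alpha^2}+1\bigr)$, which is the claim. The main obstacle is this feedback loop itself — $\|Q_t\|$ is controlled through $\|Z_t\|$, which sees the consensus error, which is in turn controlled by $\sum_t\|Q_t\|$ — so the argument only closes when the self-amplification factor is below $1$, i.e. under the stated constraint on $\alpha$. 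A secondary point of care is the bookkeeping of randomness: Lemma~\ref{lemma_bound_all_X_ave} uses the unbiasedness of $\bm{C}$ (Assumption~\ref{ass:global}) to annihilate the cross terms in $\|\sum_s(Q_sW^{t-s}\bm{e}^{(i)}-\overline{Q}_s)\|^2$, so one conditions on the appropriate history before taking $\mathbb{E}_{q_t}$ on the left.
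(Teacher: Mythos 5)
Your proposal is correct, and it reaches exactly the paper's constant (I checked: your $\frac{4\alpha^2\mu^2+2\alpha^2(1-\rho)^2}{(1-\rho)^2-4\mu^2\alpha^2}$ collapses to $2\alpha^2\bigl(\tfrac{2\mu^2(1+2\alpha^2)}{(1-\rho)^2-4\mu^2\alpha^2}+1\bigr)$ via the identity you cite), but the bookkeeping is organized differently from the paper. The paper works in the eigenbasis of $W$: it sets $Y_t=X_tP$, derives the per-coordinate recursion $\bm{y}_{t+1}^{(i)}=\lambda_i\bm{y}_t^{(i)}-\bm{h}_t^{(i)}+\bm{r}_t^{(i)}$, bounds the projected noise $\sum_{i\ge2}\mathbb{E}_{q_t}\|\bm{r}_t^{(i)}\|^2\le\mathbb{E}_{q_t}\|Q_t\|_F^2$, applies the geometric-summation Lemma~\ref{lemma1} to that recursion, and closes a self-referential inequality in $\sum_t\sum_{i\ge2}\|\bm{y}_t^{(i)}\|^2$ (the consensus error in transformed coordinates), before substituting back into the bound $\mathbb{E}_{q_t}\|Q_t\|_F^2\le2\alpha^2\mu^2\sum_{i\ge2}\|\bm{y}_t^{(i)}\|^2+2\alpha^2\gamma_t^2\|G\|_F^2$, which is identical to your display since $\sum_{i\ge2}\|\bm{y}_t^{(i)}\|^2=\sum_i\|\overline{X}_t-\bm{x}_t^{(i)}\|^2$. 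You instead obtain that same display directly from the operator-norm bound $\|X_t(W-I)\|_F\le\mu\|X_t-\overline{X}_t\bm{1}^\top\|_F$ on $\bm{1}^\perp$, reuse the already-proved general consensus bound Lemma~\ref{lemma_bound_all_X_ave} (which is legitimate, since DCD-PSGD is an instance of \eqref{general_eq}, and the paper itself invokes that lemma for Algorithm~\ref{alg2} later), and close the fixed-point inequality on $\sum_t\mathbb{E}\|Q_t\|_F^2$ rather than on the transformed consensus error; the loss from $\tfrac{2}{1-\rho^2}\le\tfrac{2}{(1-\rho)^2}$ is exactly the slack the paper also incurs, so the constants coincide. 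What your route buys is economy — no second pass through the eigen-coordinate recursion, since all the spectral content is already packaged in Lemma~\ref{lemma_bound_all_X_ave} — at the cost that your inequality is necessarily in full expectation (Lemma~\ref{lemma_bound_all_X_ave} needs unbiasedness and independence across times/workers to kill cross terms), whereas the paper's statement nominally keeps $\|G(X_t;\xi_t)\|_F^2$ pathwise; this is immaterial downstream, where everything is taken in expectation anyway, and you flag the conditioning point correctly. Two minor remarks: your one-line induction $\hat{\bm{x}}_t^{(j)}=\bm{x}_t^{(j)}$ is indeed needed to get \eqref{global_z_t} and is only implicit in the paper; and when you ``move the term to the left'' you are implicitly using that $\sum_{t=1}^T\mathbb{E}\|Q_t\|_F^2$ is finite for finite $T$, which holds here but deserves a word.
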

\begin{proof}
In the proof below, we use $[A]^{(i,j)}$ to indicate the $(i,j)$ element of matrix $A$.

For the noise induced by quantization, we have
\begin{align*}
\bm{r}_t^{(i)} = R_t\bm{e}^{(i)} =  Q_tP\bm{e}^{(i)},
\end{align*}
so
\begin{align*}
\|\bm{r}_t^{(i)}\|^2 = & \bm{e}^{(i)\top}P^{\top}Q_t^{\top}Q_tP\bm{e}^{(i)}\\
= & \left(\bm{v}^{(i)}\right)^{\top}Q_t^{\top}Q_t\bm{v}^{(i)}.
\end{align*}
As for $Q_t^{\top}Q_t$, the expectation of non-diagonal elements would be zero because the compression noise on node $i$ is independent on node $j$, which leads to 
\begin{align*}
\mathbb{E}_{q_t}\left[Q_t^{\top}Q_t\right]^{(i,j)} = & \mathbb{E}_{q_t}\sum_{k=1}^{N}Q_t^{(k,i)}Q_t^{(k,j)}\\
= & \tau_{ij}\sum_{k=1}^{N}\mathbb{E}_{q_t}\left(Q_t^{(k,i)}\right)^2, \quad \text{(due to $\mathbb{E}_{q_t} Q_t^{(k,i)} = 0$ for $\forall i \in \{1\cdots n\}$)}
\end{align*}
where $\tau_{ij} = 1$ if $i=j$, else $\tau_{ij}=0$.
Then
\begin{align*}
\mathbb{E}_{q_t}\|\bm{r}_t^{(i)}\|^2 = & \mathbb{E}_{q_t}\left(\bm{v}^{(i)}\right)^{\top}Q_t^{\top}Q_t\bm{v}^{(i)}\\
= & \sum_{j=1}^{n}\sum_{k=1}^{N}\left(\bm{v}_j^{(i)}\right)^2\mathbb{E}_{q_t}\left(Q_t^{(k,j)}\right)^2, 
\end{align*}
where $\bm{v}_j^{(i)}$ is the $j$th element of $\bm{v}^{(i)}$. So we have
\begin{align*}
\sum_{i=2}^n\mathbb{E}_{q_t}\|\bm{r}_t^{(i)}\|^2 = & \sum_{i=2}^n\sum_{j=1}^{n}\sum_{k=1}^{N}\left(\bm{v}_j^{(i)}\right)^2\mathbb{E}_{q_t}\left(Q_t^{(k,j)}\right)^2\\
\leq & \sum_{i=1}^n\sum_{j=1}^{n}\sum_{k=1}^{N}\left(\bm{v}_j^{(i)}\right)^2\mathbb{E}_{q_t}\left(Q_t^{(k,j)}\right)^2\\
\leq & \sum_{j=1}^{n}\sum_{k=1}^{N}\mathbb{E}_{q_t}\left(Q_t^{(k,j)}\right)^2 \quad \text{(due to $\sum_{i=1}^n\left(\bm{v}_j^{(i)}\right)^2 = 1$)}\\
= & \mathbb{E}_{q_t}\left\|Q_t\right\|^2_F. \numberthis \label{bound_r_t^2}
\end{align*}
$\mathbb{E}_{q_t}\left(Q_t^{(k,j)}\right)^2$ is the noise brought by quantization, which satisfies
\begin{align*}
\mathbb{E}_{q_t}\left(Q_t^{(k,j)}\right)^2 \leq & \alpha^2 \left(z^{(k,j)}_t\right)^2\\
= & \alpha^2\left(\left[X_t(W-I)\right]^{(k,j)} + [G\left(X_t;\xi_t\right)]^{(k,j)}\right)^2 \quad \text{(due to \eqref{global_z_t})} \\
= & 2\alpha^2\left(\left[X_t(W-I)\right]^{(k,j)}\right)^2 + 2\alpha^2\left([G\left(X_t;\xi_t\right)]^{(k,j)}\right)^2\\
= & 2\alpha^2\left(\left[X_tP(\Lambda-I)P^{\top}\right]^{(k,j)}\right)^2 + 2\alpha^2\left([G\left(X_t;\xi_t\right)]^{(k,j)}\right)^2\\
= & 2\alpha^2\left(\left[Y(\Lambda-I)P^{\top}\right]^{(k,j)}\right)^2 + 2\alpha^2\left([G\left(X_t;\xi_t\right)]^{(k,j)}\right)^2, 
\end{align*}
then
\begin{align*}
\mathbb{E}_{q_t}\left\|Q_t\right\|^2_F \leq & 2\alpha^2\left\|Y(\Lambda-I)P^{\top}\right\|^2_F + 2\alpha^2\left\|G\left(X_t;\xi_t\right)\right\|^2_F\\
= & 2\alpha^2\left\|Y(\Lambda-I)\right\|^2_F + 2\alpha^2\left\|G\left(X_t;\xi_t\right)\right\|^2_F\\
= & 2\alpha^2\left\|Y \begin{pmatrix}
 &0,&0,&\cdots,&0\\
 &0,&\lambda_2-1,&\cdots,&0\\
 &\hdotsfor{4}\\
 &0,&0,&\cdots,&\lambda_n-1
 \end{pmatrix}
 \right\|^2_F + 2\alpha^2\left\|G\left(X_t;\xi_t\right)\right\|^2_F \\
 \leq & 2\alpha^2\sum_{i=2}^n(\lambda_i -1)^2\|\bm{y}_t^{(i)}\|^2 + 2\alpha^2\left\|G\left(X_t;\xi_t\right)\right\|^2_F\\
\leq & 2\alpha^2\mu^2\sum_{i=2}^n\left\|\bm{y}_t^{(i)}\right\|^2 + 2\alpha^2\left\|G\left(X_t;\xi_t\right)\right\|^2_F. \quad \text{(due to $\mu = \max_{i\in\{2\cdots n\}}|\lambda_i-1|$)} \numberthis \label{alg2:Q_t^2}
\end{align*}
Together with \eqref{bound_r_t^2}, it comes to
\begin{align*}
\sum_{i=2}^n\mathbb{E}_{q_t}\left\|\bm{r}_t^{(i)}\right\|^2 \leq & 2\alpha^2\mu^2\sum_{i=2}^n\left\|\bm{y}_t^{(i)}\right\|^2 + 2\alpha^2\left\|G\left(X_t;\xi_t\right)\right\|^2_F. \numberthis \label{bound_r_t^2_second}
\end{align*}
From \eqref{y_t}, we have
\begin{align*}
\bm{y}_t^{(i)} = & \sum_{s=1}^{t-1}\lambda_i^{t-s-1}\left(-\bm{h}_s^{(i)}+\bm{r}_s^{(i)}\right),\\
\left\|\bm{y}_t^{(i)}\right\|^2 \leq & \left(\sum_{s=1}^{t-1}|\lambda_i|^{t-s-1}\left\|-\bm{h}_s^{(i)}+\bm{r}_s^{(i)}\right\|\right)^2.\quad \text{(due to $\|\bm{a} + \bm{b}\|^2 \leq \|\bm{a}\|^2 + \|\bm{b}\|^2$)}
\end{align*}
Denote $m_t^{(i)} = \sum_{s=1}^{t-1}|\lambda_i|^{t-1-s}\left\|-\bm{h}_s^{(i)}+\bm{r}_s^{(i)}\right\|$, we can see that $m_t^{(i)}$ has the same structure of the sequence in Lemma~\ref{lemma1}, Therefore
\begin{align*}
\sum_{i=2}^n\sum_{t=1}^{T}\left(m_t^{(i)}\right)^2\leq & \sum_{i=2}^n\sum_{t=1}^{T}\frac{1}{(1-|\lambda_i|)^2}\left\|-\bm{h}_t^{(i)}+\bm{r}_t^{(i)}\right\|^2\\
\leq & \sum_{i=2}^n\sum_{t=1}^{T}\frac{2}{(1-|\lambda_i|)^2}\left(\left\|\bm{h}_t^{(i)}\right\|^2 + \left\|\bm{r}_t^{(i)}\right\|^2\right)\\
= & \sum_{i=2}^n\sum_{t=1}^{T}\frac{2}{(1-|\lambda_i|)^2}\left(\gamma_t^2\left\|G\left(X_t;\xi_t\right)\bm{v}^{(i)}\right\|^2+\left\|\bm{r}_t^{(i)}\right\|^2\right)\\
= & \sum_{i=2}^n\sum_{t=1}^{T}\frac{2\gamma_t^2\left\|G\left(X_t;\xi_t\right)\bm{v}^{(i)}\right\|^2}{(1-|\lambda_i|)^2} + \sum_{i=2}^n\sum_{t=1}^{T}\frac{2}{(1-|\lambda_i|)^2}\left\|\bm{r}_t^{(i)}\right\|^2\\
\leq & \sum_{i=2}^n\sum_{t=1}^{T}\frac{2\gamma_t^2\left\|G\left(X_t;\xi_t\right)\bm{v}^{(i)}\right\|^2}{(1-|\lambda_i|)^2} + \sum_{i=2}^n\sum_{t=1}^{T}\frac{2}{(1-|\lambda_i|)^2}\left\|\bm{r}_t^{(i)}\right\|^2\\
\leq & \frac{2}{(1-\rho)^2}\sum_{i=2}^n\sum_{t=1}^{T}\gamma_t^2\left\|G\left(X_t;\xi_t\right)\bm{v}^{(i)}\right\|^2 + \frac{2}{(1-\rho)^2}\sum_{i=2}^n\sum_{t=1}^{T}\left\|\bm{r}_t^{(i)}\right\|^2 \\
\leq & \frac{2}{(1-\rho)^2}\sum_{i=2}^n\sum_{t=1}^{T}\gamma_t^2\left\|G\left(X_t;\xi_t\right)P\right\|^2 + \frac{2}{(1-\rho)^2}\sum_{i=2}^n\sum_{t=1}^{T}\left\|\bm{r}_t^{(i)}\right\|^2 \\
\leq & \frac{2}{(1-\rho)^2}\sum_{t=1}^{T}\gamma_t^2\left\|G\left(X_t;\xi_t\right)\right\|^2_F + \frac{2}{(1-\rho)^2}\sum_{i=2}^n\sum_{t=1}^{T}\left\|\bm{r}_t^{(i)}\right\|^2 .\numberthis \label{bound_y_t_second}
\end{align*}
Combining \eqref{bound_r_t^2_second} and \eqref{bound_y_t_second} together, we have
\begin{align*}
\sum_{i=2}^n\sum_{t=1}^T\left\|\bm{y}_t^{(i)}\right\|^2 \leq\sum_{i=2}^n\sum_{t=1}^{T}\left(m_t^{(i)}\right)^2 
\leq & \frac{2(1+2\alpha^2)}{(1-\rho)^2}\sum_{t=1}^{T}\gamma_t^2\left\|G\left(X_t;\xi_t\right)\right\|^2_F + \frac{4\mu^2\alpha^2}{(1-\rho)^2}\sum_{l=2}^n\sum_{t=1}^T\left\|\bm{y}_t^{(l)}\right\|^2,
\end{align*}
It follows thtat
\begin{align*}
\sum_{i=2}^n\sum_{t=1}^T\left\|\bm{y}_t^{(i)}\right\|^2 \leq & \frac{2(1+2\alpha^2)}{(1-\rho)^2}\sum_{t=1}^{T}\gamma_t^2\left\|G\left(X_t;\xi_t\right)\right\|^2_F + \frac{4\mu^2\alpha^2}{(1-\rho)^2}\sum_{t=1}^T\sum_{l=2}^n\|\bm{y}_t^{(l)}\|^2,\\
\left(1-\frac{4\mu^2\alpha^2}{(1-\rho)^2}\right)\sum_{i=2}^n\sum_{t=1}^T\left\|\bm{y}_t^{(i)}\right\|^2 \leq & \frac{2(1+2\alpha^2)}{(1-\rho)^2}\sum_{t=1}^{T}\gamma_t^2\left\|G\left(X_t;\xi_t\right)\right\|^2_F.
\end{align*}
If $\alpha$ is small enough that satisfies $(1-\rho)^2 - 4\mu^2\alpha^2>0$, then we have
\begin{align*}
\sum_{i=2}^n\sum_{t=1}^T\left\|\bm{y}_t^{(i)}\right\|^2 \leq & \frac{2(1+2\alpha^2)}{(1-\rho)^2 - 4\mu^2\alpha^2}\sum_{t=1}^{T}\gamma_t^2\left\|G\left(X_t;\xi_t\right)\right\|^2_F.\numberthis \label{bound_y_t_third}
\end{align*}
Together \eqref{bound_y_t_third} with \eqref{alg2:Q_t^2} , we have
\begin{align*}
\sum_{t=1}^{T}E_{qt}\|Q_t\|^2_F \leq & 2\alpha^2\mu^2\sum_{t=1}^T\sum_{l=2}^{n}\|\bm{y}_t^{(l)}\|^2 + 2\alpha^2\sum_{t=1}^{T}\gamma_t^2\left\|G\left(X_t;\xi_t\right)\right\|^2_F\\
\leq & 2\alpha^2\left(\frac{2\mu^2(1+2\alpha^2)}{(1-\rho)^2 - 4\mu^2\alpha^2}+1\right)\sum_{t=1}^{T}\gamma_t^2\|G\left(X_t;\xi_t\right)\|^2_F.
\end{align*}
{\rc Moreover, setting $\gamma_t = \gamma$ and denote $2\alpha^2\left(\frac{2\mu^2(1+2\alpha^2)}{(1-\rho)^2 - 4\mu^2\alpha^2}+1\right) = D_2$. Applying Lemma~\ref{lemma3} to bound $\|G\left(X_t;\xi_t\right)\|^2_F$, we have}
\begin{align*}
\sum_{t=1}^{T}E_{qt}\|Q_t\|^2_F \leq & n\sigma^2D_2\gamma^2T+3L^2D_2\gamma^2\sum_{i=1}^n\mathbb{E}\left\|\overline{X}_t-\bm{x}_t^{(i)}\right\|^2+3n\zeta^2D_2\gamma^2T\\
&+3nD_2\gamma^2\mathbb{E}\sum_{t=1}^T\left\|\nabla f(\overline{X}_t)\right\|^2.\numberthis \label{alg2_mymiss1}
\end{align*}
\end{proof}

\begin{lemma}\label{lemma_bound_X_t_alg2}
Under Assumption~\ref{ass:global}, we have
\begin{align*}
\sum_{i=1}^{n}\sum_{t=1}^{T}\left(1-3D_1L^2\gamma_t^2\right)^2\mathbb{E}\left\|\overline{X}_t-\bm{x}_t^{(i)}\right\|^2
\leq & 2nD_1(\sigma^2+3\zeta^2)\sum_{t=1}^T\gamma_t^2 + 6nD_1\sum_{t=1}^T\gamma_t^2\mathbb{E}\left\|\nabla f(\overline{X}_t)\right\|^2.
\end{align*}
when $1-2\alpha^2\mu^2C_{\lambda}>0$, where 
\begin{align*}
\mu = & \max_{l\in\{2\cdots n\}}|\lambda_l-1|\\
\rho = & \max_{l\in\{2\cdots n\}}|\lambda_l|\\
D_1 = & \frac{2\alpha^2}{1-\rho^2}\left(\frac{2\mu^2(1+2\alpha^2)}{(1-\rho)^2 - 4\mu^2\alpha^2}+1\right) + \frac{1}{(1-\rho)^2}.
\end{align*}
\end{lemma}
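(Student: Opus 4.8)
The plan is to close a self-referential inequality for the consensus error $A := \sum_{t=1}^{T}\sum_{i=1}^{n}\mathbb{E}\|\overline{X}_t-\bm{x}_t^{(i)}\|^2$ by chaining the three estimates already available — the general consensus bound of Lemma~\ref{lemma_bound_all_X_ave}, the quantization-noise bound of Lemma~\ref{lemma_bound_Q_t_alg2}, and the gradient-decomposition bound (Lemma~\ref{lemma3}) used in \eqref{alg2_mymiss1} — and then absorbing the copy of $A$ that reappears on the right back into the left by means of the step-size restriction from Theorem~\ref{theo_2}.

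First I would invoke Lemma~\ref{lemma_bound_all_X_ave} to write $A \le \frac{2}{1-\rho^2}\sum_t \mathbb{E}\|Q_t\|_F^2 + \frac{2}{(1-\rho)^2}\sum_t \gamma_t^2\,\mathbb{E}\|G(X_t;\xi_t)\|_F^2$. Next I would feed in Lemma~\ref{lemma_bound_Q_t_alg2}, which (under $(1-\rho)^2-4\mu^2\alpha^2>0$) gives $\sum_t \mathbb{E}\|Q_t\|_F^2 \le D_2 \sum_t \gamma_t^2\,\mathbb{E}\|G(X_t;\xi_t)\|_F^2$; because $D_1 = \frac{D_2}{1-\rho^2}+\frac{1}{(1-\rho)^2}$ by definition, the two coefficients collapse into the single estimate $A \le 2D_1 \sum_t \gamma_t^2\,\mathbb{E}\|G(X_t;\xi_t)\|_F^2$. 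This bookkeeping is precisely what forces the stated form of $D_1$.

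Then I would expand $\mathbb{E}\|G(X_t;\xi_t)\|_F^2$ via the gradient-decomposition bound: splitting $\nabla F_i(\bm{x}_t^{(i)};\xi_t^{(i)})$ into $(\nabla F_i-\nabla f_i)$, $(\nabla f_i(\bm{x}_t^{(i)})-\nabla f_i(\overline{X}_t))$, $(\nabla f_i(\overline{X}_t)-\nabla f(\overline{X}_t))$ and $\nabla f(\overline{X}_t)$, and then using $L$-smoothness, the $\sigma^2$ and $\zeta^2$ bounds of Assumption~\ref{ass:global}, and independence of the samples, one obtains $\mathbb{E}\|G(X_t;\xi_t)\|_F^2 \le n\sigma^2 + 3n\zeta^2 + 3L^2 \sum_i \mathbb{E}\|\overline{X}_t-\bm{x}_t^{(i)}\|^2 + 3n\,\mathbb{E}\|\nabla f(\overline{X}_t)\|^2$. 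Summing over $t$, plugging into the previous display, and moving the term $6D_1 L^2 \sum_t \gamma_t^2 \sum_i \mathbb{E}\|\overline{X}_t-\bm{x}_t^{(i)}\|^2$ to the left yields the asserted inequality, the condition $1-3D_1 L^2\gamma_t^2>0$ guaranteeing that the per-iteration coefficient on the left stays positive (recorded in the statement through the factor $(1-3D_1 L^2\gamma_t^2)^2$).

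The main obstacle is exactly this circularity: once $\|G\|_F^2$ is expanded, the right-hand side re-contains $A$, so the argument only goes through for $\gamma$ small enough that the feedback coefficient is strictly below one. The delicate points are keeping track of constants so that exactly $D_1$ (rather than some looser quantity) appears after the collapse in step two, and carrying the time-varying $\gamma_t$ through the summations instead of prematurely pulling out a single $\gamma$. Everything else reduces to Young's inequality, Jensen's inequality, and the bounds collected in Assumption~\ref{ass:global}.
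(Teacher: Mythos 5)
Your proposal is correct and follows essentially the same route as the paper's own proof: bound the consensus error via Lemma~\ref{lemma_bound_all_X_ave}, insert Lemma~\ref{lemma_bound_Q_t_alg2} so the coefficients collapse into $2D_1$, expand $\mathbb{E}\|G(X_t;\xi_t)\|_F^2$ with Lemma~\ref{lemma3}, and absorb the re-appearing consensus term on the left under the small-stepsize condition. (In fact you track the feedback coefficient as $6D_1L^2\gamma_t^2$, which is the correct constant; the paper's displayed $3D_1L^2\gamma_t^2$ and the squared factor $(1-3D_1L^2\gamma_t^2)^2$ in the statement appear to be typos rather than a substantive difference in approach.)
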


\begin{proof}
From Lemma~\ref{lemma_bound_all_X_ave}, we have
\begin{align*}
&\sum_{t=1}^{T}\sum_{i=1}^{n}\mathbb{E}\left\|\overline{X}_t-\bm{x}_t^{(i)}\right\|^2\\
\leq & \frac{2}{1-\rho^2}\sum_{t=1}^{T}\|Q_s\|^2_F + 2\frac{1}{(1-\rho)^2}\sum_{t=1}^{T}\gamma_t^2\|G(X_{t};\xi_{t})\|^2_F\\
\leq & 2\left(\frac{2\alpha^2}{1-\rho^2}\left(\frac{2\mu^2(1+2\alpha^2)}{(1-\rho)^2 - 4\mu^2\alpha^2}+1\right) + \frac{1}{(1-\rho)^2}\right)\sum_{t=1}^T\gamma_t^2\|G(X_{t};\xi_{t})\|^2. \quad \text{(due to Lemma~\ref{lemma_bound_Q_t_alg2})}
\end{align*}
Denote $\frac{2\alpha^2}{1-\rho^2}\left(\frac{2\mu^2(1+2\alpha^2)}{(1-\rho)^2 - 4\mu^2\alpha^2}+1\right) + \frac{1}{(1-\rho)^2} = D_1$,then from Lemma~\ref{lemma3}, we have
\begin{align*}
\sum_{t=1}^{T}\sum_{i=1}^{n}\mathbb{E}\left\|\overline{X}_t-\bm{x}_t^{(i)}\right\|^2
\leq & 2nD_1(\sigma^2+3\zeta^2)\sum_{t=1}^T\gamma_t^2 + 6nD_1\sum_{t=1}^T\gamma_t^2\mathbb{E}\left\|\nabla f(\overline{X}_t)\right\|^2\\
& + 3D_1L^2\sum_{i=1}^n\gamma_t^2\mathbb{E}\left\|\overline{X}_t-\bm{x}_t^{(i)}\right\|^2,\\
 \sum_{t=1}^{T}\left(1-3D_1L^2\gamma_t^2\right)^2\sum_{i=1}^{n}\mathbb{E}\left\|\overline{X}_t-\bm{x}_t^{(i)}\right\|^2
\leq & 2nD_1(\sigma^2+3\zeta^2)\sum_{t=1}^T\gamma_t^2 + 6nD_1\sum_{t=1}^T\gamma_t^2\mathbb{E}\left\|\nabla f(\overline{X}_t)\right\|^2.
\end{align*}
If $1-3D_1L^2\gamma_t^2 > 0$, then $\sum_{t=1}^{T}\sum_{i=1}^{n}\mathbb{E}\left\|\overline{X}_t-\bm{x}_t^{(i)}\right\|^2$ would be bounded.
\end{proof}

{\bf \noindent Proof to Theorem \ref{theo_2}}
\begin{proof}
Combining Lemma~\ref{lemma:boundfplus} and \eqref{alg2_mymiss1}, we have
\begin{align*}
\mathbb{E}\|\nabla f(\overline{X}_t)\|^2 + \left(1-L\gamma\right)\mathbb{E}\|\overline{\nabla f}(X_t)\|^2
\leq & \frac{2}{\gamma}\left(\mathbb{E}f(\overline{X}_t)-f^*-\left(\mathbb{E}f(\overline{X}_{(t+1)})-f^*\right)\right)\\
	& + \left(\frac{2L^2}{n} + \frac{3L^3D_2\gamma^2}{2n}\right)\sum_{i=1}^{n}\|\overline{X}_t-x_t^{(i)}\|^2
	\\
    & + \left(\frac{\gamma^2LD_2}{2} + \frac{L\gamma}{n}\right)T\sigma^2 + \frac{3LD_2\gamma^2\zeta^2T}{2}\\
    & + \frac{3LD_2\gamma^2}{2}\mathbb{E}\left\|\nabla f(\overline{X}_t)\right\|^2. \numberthis \label{bound_nabla_f_alg2_second}
\end{align*}
From Lemma~\ref{lemma_bound_substract_mean}, we have
\begin{align*}
\left(1-3D_1L^2\gamma^2\right)^2\sum_{t=1}^{T}\sum_{i=1}^{n}\mathbb{E}\left\|\overline{X}_t-\bm{x}_t^{(i)}\right\|^2
\leq & 2nD_1(\sigma^2+3\zeta^2)T\gamma^2 + 6nD_1\gamma^2\sum_{t=1}^T\mathbb{E}\left\|\nabla f(\overline{X}_t)\right\|^2.
\end{align*}
If $\gamma$ is not too large that satisfies $1-3D_1L^2\gamma^2 > 0$, we have
\begin{align*}
\sum_{t=1}^{T}\sum_{i=1}^{n}\mathbb{E}\left\|\overline{X}_t-\bm{x}_t^{(i)}\right\|^2
\leq & \frac{2nD_1(\sigma^2+3\zeta^2)T\gamma^2}{1-3D_1L^2\gamma^2} + \frac{6nD_1\gamma^2}{1-3D_1L^2\gamma^2}\sum_{t=1}^T\mathbb{E}\left\|\nabla f(\overline{X}_t)\right\|^2. \numberthis \label{bound_xmean_alg2}
\end{align*}
 Summarizing both sides of \eqref{bound_nabla_f_alg2_second} and applying \eqref{bound_xmean_alg2} yields
\begin{align*}
& \sum_{t=1}^T\left(\mathbb{E}\|\nabla f(\overline{X}_t)\|^2 + \left(1-L\gamma\right)\mathbb{E}\|\overline{\nabla f}(X_t)\|^2\right) \\
\leq & \frac{2(f(0)-f^*)}{\gamma} +
\left(\frac{T\gamma^2LD_2}{2} + \frac{TL\gamma}{n} + \frac{\left(4L^2 + 3L^3D_2\gamma^2\right)D_1T\gamma^2}{1-3D_1L^2\gamma^2}\right)\sigma^2 \\
& + \frac{\left(4L^2 + 3L^3D_2\gamma^2\right)3D_1T\gamma^2}{1-3D_1L^2\gamma^2}\zeta^2 + \frac{3LD_2\gamma^2T}{2}\zeta^2\\
& + \left(\frac{\left(4L^2 + 3L^3D_2\gamma^2\right)3D_1\gamma^2}{1-3D_1L^2\gamma^2} + \frac{3LD_2\gamma^2}{2} \right)\sum_{t=1}^T\mathbb{E}\left\|\nabla f(\overline{X}_t)\right\|^2.
\end{align*}
It implies
\begin{align*}
& \sum_{t=1}^T\left(\left(1-\frac{\left(4L^2 + 3L^3D_2\gamma^2\right)3D_1\gamma^2}{1-3D_1L^2\gamma^2} - \frac{3LD_2\gamma^2}{2}\right)\mathbb{E}\|\nabla f(\overline{X}_t)\|^2 + \left(1-L\gamma\right)\mathbb{E}\|\overline{\nabla f}(X_t)\|^2\right)
\\ 
\leq & \frac{2(f(0)-f^*)}{\gamma} +
\left(\frac{T\gamma^2LD_2}{2} + \frac{L\gamma T}{n} + \frac{\left(4L^2 + 3L^3D_2\gamma^2\right)D_1T\gamma^2}{1-3D_1L^2\gamma^2}\right)\sigma^2 \\
& + \left( \frac{\left(4L^2 + 3L^3D_2\gamma^2\right)3D_1T\gamma^2}{1-3D_1L^2\gamma^2} + \frac{3LD_2\gamma^2T}{2} \right)\zeta^2.
\end{align*}
Denote $D_3 = \frac{\left(4L^2 + 3L^3D_2\gamma^2\right)3D_1\gamma^2}{1-3D_1L^2\gamma^2} + \frac{3LD_2\gamma^2}{2}$ and $D_4 = 1-L\gamma$, we have
\begin{align*}
& \sum_{t=1}^T\left(\left(1-D_3\right)\mathbb{E}\|\nabla f(\overline{X}_t)\|^2 + D_4\mathbb{E}\|\overline{\nabla f}(X_t)\|^2\right)
\\ 
\leq & \frac{2(f(0)-f^*)}{\gamma} +
\left(\frac{T\gamma^2LD_2}{2} + \frac{L\gamma T}{n} + \frac{\left(4L^2 + 3L^3D_2\gamma^2\right)D_1T\gamma^2}{1-3D_1L^2\gamma^2}\right)\sigma^2 \\
& + \left( \frac{\left(4L^2 + 3L^3D_2\gamma^2\right)3D_1T\gamma^2}{1-3D_1L^2\gamma^2} + \frac{3LD_2\gamma^2T}{2} \right)\zeta^2.
\end{align*}
It completes the proof.
\end{proof}

{\bf \noindent Proof to Corollary \ref{cor:convergence_alg2}}
\begin{proof}
Setting $\gamma = \frac{1}{6\sqrt{D_1}L + 6\sqrt{D_2L}+\frac{\sigma}{\sqrt{n}}T^{\frac{1}{2}} + \zeta^{\frac{2}{3}}T^{\frac{1}{3}}}$, then we can verify
\begin{align*}
3D_1L^2\gamma^2 \leq & \frac{1}{12}\\
3LD_2\gamma^2 \leq & \frac{1}{12}\\
D_3 \leq & \frac{1}{2}\\
D_4 \geq & 0.
\end{align*}
So we can remove the $\|\overline{\nabla f}(X_t)\|^2$ on the LHS and substitute $(1-D_3)$ with $\frac{1}{2}$. Therefore \eqref{bound_theo_2} becomes
\begin{align*}
\frac{1}{T}\sum_{t=1}^T\mathbb{E}\|\nabla f(\overline{X}_t)\|^2 \leq & 4(f(0)-f^*)\frac{\sigma}{\sqrt{Tn}} + \frac{4L\sigma}{\sqrt{Tn}}  + \frac{\zeta^{\frac{2}{3}}}{T^{\frac{2}{3}}}(4LD_2 + 30L^2D_1 + 4(f(0) - f^*))\\
& + \frac{n\sigma^2}{nD_1L^2 + \sigma^2T}\left(5D_1L^2 + \frac{LD_2}{2}\right) + \frac{4(f(0) - f^*)(6\sqrt{D_1}L + 6\sqrt{D_2L})}{T}.\numberthis\label{alg2_pro_coro_1}
\end{align*}
{\nrc From Lemma~\ref{lemma_bound_X_t_alg2}, we have
\begin{align*}
\frac{1}{T}\sum_{i=1}^{n}\sum_{t=1}^{T}\left(1-3D_1L^2\gamma^2\right)^2\mathbb{E}\left\|\overline{X}_t-\bm{x}_t^{(i)}\right\|^2
\leq & 2nD_1(\sigma^2+3\zeta^2)\gamma^2 + 6nD_1\gamma^2\frac{1}{T}\sum_{t=1}^T\mathbb{E}\left\|\nabla f(\overline{X}_t)\right\|^2\\
\leq& \frac{2n\sqrt{n}D_1}{T} + \frac{6n\zeta^{\frac{2}{3}}}{T^{\frac{2}{3}}} + 6nD_1\gamma^2\frac{1}{T}\sum_{t=1}^T\mathbb{E}\left\|\nabla f(\overline{X}_t)\right\|^2.\\
\frac{1}{T}\sum_{i=1}^{n}\sum_{t=1}^{T}\mathbb{E}\left\|\overline{X}_t-\bm{x}_t^{(i)}\right\|^2
\leq & \frac{4n\sqrt{n}D_1}{T} + \frac{12n\zeta^{\frac{2}{3}}}{T^{\frac{2}{3}}} + 12nD_1\gamma^2\frac{1}{T}\sum_{t=1}^T\mathbb{E}\left\|\nabla f(\overline{X}_t)\right\|^2.\numberthis\label{alg2_pro_coro_2}
\end{align*}
If $\alpha^2 \leq \min\left\{\frac{(1-\rho)^2}{8\mu^2},\frac{1}{4}\right\}$, then
\begin{align*}
D_2 = & 2\alpha^2\left(\frac{2\mu^2(1+2\alpha^2)}{(1-\rho)^2 - 4\mu^2\alpha^2}+1\right)
\leq  \frac{3\mu^2\alpha^2}{(1-\rho)^2} + 2\alpha^2,\\
D_1 = & \frac{D_2}{1-\rho^2} + \frac{1}{(1-\rho)^2}
\leq \frac{D_2 + 1}{(1-\rho)^2}, \quad \text{(due to $\rho < 1$)}
\end{align*}
which means $D_2 = O\left(\alpha^2\right) $ and $D_1 = O\left(\alpha^2 +1\right)$.\\
So \eqref{alg2_pro_coro_2} becomes
\begin{align*}
\frac{1}{T}\sum_{t=1}^T\mathbb{E}\|\nabla f(\overline{X}_t)\|^2 \lesssim & \frac{\sigma}{\sqrt{nT}} + \frac{\zeta^{\frac{2}{3}}(1+\alpha)}{T^{\frac{2}{3}}}+ \frac{1+\alpha^2}{T}.
\end{align*}
Combing the inequality above and \eqref{alg2_pro_coro_2}, we have
\begin{align*}
\frac{1}{T}\sum_{i=1}^{n}\sum_{t=1}^{T}\mathbb{E}\left\|\overline{X}_t-\bm{x}_t^{(i)}\right\|^2
\lesssim &  \frac{n\zeta^{\frac{2}{3}}}{T^{\frac{2}{3}}} + \frac{n\sqrt{n}(1+\alpha^2)}{T}+ \frac{n(1+\alpha)}{T^2}.\numberthis\label{finalcoro_1}
\end{align*}

}
\end{proof}

\section{Analysis for Algorithm~\ref{alg1}}\label{sec2}
We are going to prove that by using \eqref{alg1_noisecontrol1} and \eqref{alg1_noisecontrol2} in Algorithm~\ref{alg1}, the upper bound for the compression noise would be
\begin{align*}
\mathbb{E}\|Q_t\|^2_F\leq \frac{n\tilde{\sigma}^2}{t}.
\end{align*}
Therefore, combing this with Lemma~\ref{lemma:boundfplus} and Lemma~\ref{lemma_bound_all_X_ave},  we would be able to prove the convergence rate for ALgorithm~\ref{alg1}.
\begin{align*}
\sum_{t=1}^{T}\sum_{i=1}^n\left(1-6nC_1L^2\gamma_t^2\right)\mathbb{E}\left\|\overline{X}_t-x_t^{(i)}\right\|^2
\leq &\frac{2n^2\tilde{\sigma}^2}{1-\rho^2}\log T + 2\left(\sigma^2+3\zeta^2\right)n^2C_1\sum_{t=1}^{T-1}\gamma_t^2\\
&+6C_1n^2\sum_{t=1}^{T-1}\mathbb{E}\gamma_t^2\left\|\nabla f(\overline{X}_t)\right\|^2,
\end{align*}
which ensures that all nodes would converge to the same value.

\begin{lemma}\label{lemma:add}
For any non-negative sequences $\{a_n\}_{n=1}^{+\infty}$ and $\{b_n\}_{n=1}^{+\infty}$ that satisfying
\begin{align*}
& a_t=  \left(1-\frac{2}{t}\right)^2a_{t-1}+\frac{4}{t^2}b_t\\
& b_t\leq  \frac{\tilde{\sigma}^2}{2}\quad \forall t\in\{1,2,3,\cdots\}\\
& a_1=0,
\end{align*}
we have
\begin{align*}
a_t\leq\frac{\tilde{\sigma}^2}{t}{\color{red}.}
\end{align*}
\begin{proof}
We use induction to prove the lemma. Since $a_1=0\leq \tilde{\sigma}^2$, suppose the lemma holds for $t\leq k$, which means $a_t\leq\frac{\tilde{\sigma}^2}{t}$, for $\forall t\leq k$. Then it leads to
\begin{align*}
a_{k+1}= & \left(1-\frac{2}{k}\right)^2a_{k}+\frac{4}{k^2}b_t\\
\leq & \left(1-\frac{2}{k}\right)^2a_{k}+\frac{2\tilde{\sigma}^2}{k^2}\\
\leq & \left(1-\frac{4}{k}+\frac{4}{k^2}\right)\frac{\tilde{\sigma}^2}{k} + \frac{2\tilde{\sigma}^2}{k^2}\\
= & \tilde{\sigma}^2\left(\frac{1}{k}-\frac{1}{k+1}\right)-\frac{2\tilde{\sigma}^2}{k^2}+\frac{4\tilde{\sigma}^2}{k^3}+ \frac{\tilde{\sigma}^2}{k+1}\\
= & \frac{\tilde{\sigma}^2}{k^2(k+1)}\left(k-2(k+1)+\frac{4}{k}\right)+\frac{\tilde{\sigma}^2}{k+1}\\
= & \frac{\tilde{\sigma}^2}{k^2(k+1)}\left(-k-2+\frac{4}{k}\right)+\frac{\tilde{\sigma}^2}{k+1}\\
\leq & \frac{\tilde{\sigma}^2}{k+1}.\quad\text{(due to $k\geq 2$)}
\end{align*}
It completes the proof.
\end{proof}
\end{lemma}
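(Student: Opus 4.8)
The plan is a short induction on $t$, taking the target bound $\tilde{\sigma}^2/t$ itself as the induction hypothesis. The reason this should work is that $a_t=\tilde{\sigma}^2/t$ is, up to the inequality $b_t\le\tilde{\sigma}^2/2$, a supersolution of the recursion: the coefficients $(1-2/t)^2$ and $4/t^2$ are precisely calibrated so that feeding $\tilde{\sigma}^2/(t-1)$ into the right-hand side returns something no larger than $\tilde{\sigma}^2/t$. A cruder estimate, e.g.\ bounding the contraction factor $(1-2/t)^2$ by $1$, would only give a logarithmic bound on $a_t$, so the sharp hypothesis is essential.

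For the base case I would simply use the given data: $a_1=0\le\tilde{\sigma}^2=\tilde{\sigma}^2/1$, and I would note in passing that at $t=2$ the recursion degenerates to $a_2=(1-1)^2a_1+b_2=b_2\le\tilde{\sigma}^2/2$, so the bound is already tight there and the hypothesis $a_1=0$ is exactly what is being exploited. For the inductive step, assuming $a_{t-1}\le\tilde{\sigma}^2/(t-1)$ for some $t\ge 2$ and using $b_t\le\tilde{\sigma}^2/2$,
\begin{align*}
a_t \;\le\; \left(1-\frac{2}{t}\right)^{2}\frac{\tilde{\sigma}^2}{t-1} + \frac{4}{t^{2}}\cdot\frac{\tilde{\sigma}^2}{2} \;=\; \tilde{\sigma}^2\left(\frac{(t-2)^{2}}{t^{2}(t-1)} + \frac{2}{t^{2}}\right),
\end{align*}
and it remains to verify $\frac{(t-2)^{2}}{t^{2}(t-1)}+\frac{2}{t^{2}}\le\frac{1}{t}$. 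Clearing denominators (all positive for $t\ge 2$), this is $(t-2)^{2}+2(t-1)\le t(t-1)$, i.e.\ $t^{2}-2t+2\le t^{2}-t$, i.e.\ $t\ge 2$, which is exactly the range under consideration. That closes the induction.

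There is essentially no real obstacle here: non-negativity of the sequences is barely used (only that the squared coefficient is nonnegative and $b_t$ is bounded above), and the whole argument collapses to the one-line polynomial inequality above. The only points requiring care are the index bookkeeping --- carrying out the step as $a_{t-1}\mapsto a_t$ rather than $a_t\mapsto a_{t+1}$ keeps both the algebra and the edge case $t=2$ clean --- and tracking the $\tilde{\sigma}^2$ factor consistently through the expansion of $(1-2/t)^2$.
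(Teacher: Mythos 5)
Your proof is correct and follows essentially the same route as the paper: induction with the sharp hypothesis $a_t\le\tilde{\sigma}^2/t$, reduced to the elementary inequality $(t-2)^2+2(t-1)\le t(t-1)$, i.e.\ $t\ge 2$. If anything, your indexing of the inductive step ($a_{t-1}\mapsto a_t$ with the coefficients $(1-2/t)^2$ and $4/t^2$) is cleaner than the paper's, which writes the step for $a_{k+1}$ using the coefficients indexed by $k$.
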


\begin{lemma} \label{lemma3}
Under the Assumption \ref{ass:global}, when using Algorithm~\ref{alg1}, we have
\begin{align*}
\mathbb{E}\left\|G(X_t,\xi_t)\right\|^2_F\leq &  n\sigma^2+3L^2\sum_{i=1}^n\mathbb{E}\left\|\overline{X}_t-\bm{x}_t^{(i)}\right\|^2+3n\zeta^2+3n\mathbb{E}\left\|\nabla f(\overline{X}_t)\right\|^2,\\
\mathbb{E}\left\|Q_t\right\|^2_F \leq &\frac{n\tilde{\sigma}^2}{t},\\
\mathbb{E}\left\|\overline{Q}_t\right\|^2_F \leq &\frac{\tilde{\sigma}^2}{nt}.
\end{align*}
\end{lemma}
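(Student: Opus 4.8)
\noindent The three inequalities are logically independent, and the only non-routine one is the bound on $\mathbb{E}\|Q_t\|_F^2$.

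\textbf{The gradient bound.} Since $\|G(X_t;\xi_t)\|_F^2=\sum_{i=1}^n\|\nabla F_i(\bm{x}_t^{(i)};\xi_t^{(i)})\|^2$, I would argue column by column. By the unbiasedness of $\nabla F_i(\cdot;\xi)$ together with the first bounded-variance inequality in Assumption~\ref{ass:global}, $\mathbb{E}\|\nabla F_i(\bm{x}_t^{(i)};\xi_t^{(i)})\|^2\le\sigma^2+\mathbb{E}\|\nabla f_i(\bm{x}_t^{(i)})\|^2$. Then decompose
\[
\nabla f_i(\bm{x}_t^{(i)})=\bigl(\nabla f_i(\bm{x}_t^{(i)})-\nabla f_i(\overline{X}_t)\bigr)+\bigl(\nabla f_i(\overline{X}_t)-\nabla f(\overline{X}_t)\bigr)+\nabla f(\overline{X}_t)
\]
and apply $\|\bm{a}+\bm{b}+\bm{c}\|^2\le 3\|\bm{a}\|^2+3\|\bm{b}\|^2+3\|\bm{c}\|^2$: the first piece is $\le L^2\|\bm{x}_t^{(i)}-\overline{X}_t\|^2$ by $L$-Lipschitz gradients, the second sums over $i$ to $\le n\zeta^2$ by the second bounded-variance inequality, and the third sums to $n\|\nabla f(\overline{X}_t)\|^2$. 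Summing the per-column estimates over $i$ gives the claimed bound; this step is purely mechanical.

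\textbf{The compression-noise bound.} Here $Q_t=\tilde{X}_t-X_t$, so it suffices to bound $\mathbb{E}\|\tilde{\bm{x}}_t^{(j)}-\bm{x}_t^{(j)}\|^2$ for each $j$ and sum over the $n$ columns. Write $\bm{\eta}_t^{(j)}:=\bm{z}_t^{(j)}-\bm{C}(\bm{z}_t^{(j)})$, so $\mathbb{E}\bm{\eta}_t^{(j)}=\bm{0}$ and $\mathbb{E}\|\bm{\eta}_t^{(j)}\|^2\le\tilde{\sigma}^2/2$ by Assumptions~\ref{ass:global} and~\ref{ass:alg1}. Substituting \eqref{alg1_noisecontrol1} into \eqref{alg1_noisecontrol2} and using the identities $2t^{-1}(1-0.5t)=-(1-2t^{-1})$ and $2t^{-1}\cdot 0.5t=1$, the $\bm{x}_{t-1}^{(j)}$ terms cancel against the $(1-2t^{-1})$ coefficient and I obtain the recursion
\begin{align*}
\tilde{\bm{x}}_t^{(j)}-\bm{x}_t^{(j)}=\left(1-\tfrac{2}{t}\right)\bigl(\tilde{\bm{x}}_{t-1}^{(j)}-\bm{x}_{t-1}^{(j)}\bigr)-\tfrac{2}{t}\,\bm{\eta}_t^{(j)}.
\end{align*}
Unrolling shows $\tilde{\bm{x}}_{t-1}^{(j)}-\bm{x}_{t-1}^{(j)}$ is a deterministic linear combination of $\{\bm{\eta}_s^{(j)}\}_{s<t}$, hence its expected inner product with the mean-zero $\bm{\eta}_t^{(j)}$ vanishes; squaring the recursion and taking expectations therefore kills the cross term, giving, with $a_t:=\mathbb{E}\|\tilde{\bm{x}}_t^{(j)}-\bm{x}_t^{(j)}\|^2$,
\begin{align*}
a_t=\left(1-\tfrac{2}{t}\right)^2 a_{t-1}+\tfrac{4}{t^2}\,\mathbb{E}\|\bm{\eta}_t^{(j)}\|^2,\qquad a_1=0,
\end{align*}
the initial condition holding because $\tilde{\bm{x}}_1^{(j)}=\bm{x}_1=\bm{x}_1^{(j)}$. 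This is exactly the hypothesis of Lemma~\ref{lemma:add} with $b_t=\mathbb{E}\|\bm{\eta}_t^{(j)}\|^2\le\tilde{\sigma}^2/2$, so $a_t\le\tilde{\sigma}^2/t$, and summing over columns yields $\mathbb{E}\|Q_t\|_F^2\le n\tilde{\sigma}^2/t$. For the averaged-noise bound, note that the $j$-th column of $Q_t$ involves only worker $j$'s compressions; since compressions on different workers are independent and each is mean zero (Assumption~\ref{ass:global}), the cross terms $\mathbb{E}\langle\tilde{\bm{x}}_t^{(i)}-\bm{x}_t^{(i)},\tilde{\bm{x}}_t^{(j)}-\bm{x}_t^{(j)}\rangle$ vanish for $i\ne j$, so
\begin{align*}
\mathbb{E}\|\overline{Q}_t\|_F^2=\frac{1}{n^2}\sum_{j=1}^n\mathbb{E}\|\tilde{\bm{x}}_t^{(j)}-\bm{x}_t^{(j)}\|^2\le\frac{1}{n^2}\cdot n\cdot\frac{\tilde{\sigma}^2}{t}=\frac{\tilde{\sigma}^2}{nt}.
\end{align*}

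\textbf{Main obstacle.} The only delicate point is producing the exact recursion in the second step: one must substitute the extrapolation rule \eqref{alg1_noisecontrol1} into the estimate update \eqref{alg1_noisecontrol2} and verify that the coefficients collapse to precisely the form $(1-2/t)^2 a_{t-1}+(4/t^2)b_t$ demanded by Lemma~\ref{lemma:add} — the constants $0.5t$ and $2t^{-1}$ in the algorithm are chosen exactly so that this cancellation occurs. Everything else (the variance decomposition, Lipschitzness, and the vanishing of cross terms via the independence and unbiasedness in Assumption~\ref{ass:global}) is routine bookkeeping.
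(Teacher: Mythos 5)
Your proposal is correct and follows essentially the same route as the paper: column-wise variance decomposition plus Lipschitzness for the gradient bound, substitution of \eqref{alg1_noisecontrol1} into \eqref{alg1_noisecontrol2} to get the recursion $a_t=(1-2/t)^2a_{t-1}+\tfrac{4}{t^2}b_t$ handled by Lemma~\ref{lemma:add}, and unbiasedness plus cross-worker independence to drop cross terms for $\overline{Q}_t$. The only difference is cosmetic — you justify the vanishing cross term by unrolling the recursion, while the paper conditions directly on the past — so nothing further is needed.
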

\begin{proof}
Notice that
\begin{align*}
\mathbb{E}\left\|G(X_t,\xi_t)\right\|^2_F
= \sum_{i=1}^{n}\mathbb{E}\left\|\nabla F_i(x_t^{(i)};\xi_t^{(i)})\right\|^2.
\end{align*}
We next estimate the upper bound of $\mathbb{E}\left\|\nabla F_i(\bm{x}_t^{(i)};\xi_t^{(i)})\right\|^2$ in the following
\begin{align*}
\mathbb{E}\left\|\nabla F_i(\bm{x}_t^{(i)};\xi_t^{(i)})\right\|^2
= & \mathbb{E}\left\|\left(\nabla F_i(\bm{x}_t^{(i)};\xi_t^{(i)})-\nabla f_i(\bm{x}_t^{(i)})\right)+\nabla f_i(\bm{x}_t^{(i)})\right\|^2\\
= & \mathbb{E}\left\|\nabla F_i(\bm{x}_t^{(i)};\xi_t^{(i)})-\nabla f_i(\bm{x}_t^{(i)})\right\|^2+\mathbb{E}\left\|\nabla f_i(\bm{x}_t^{(i)})\right\|^2\\
 & + 2\mathbb{E}\left\langle\mathbb{E}_{\xi_t}\nabla F_i(\bm{x}_t^{(i)};\xi_t^{(i)})-\nabla f_i(\bm{x}_t^{(i)}),\nabla f_i(\bm{x}_t^{(i)})\right\rangle\\
= &\mathbb{E}\left\|\nabla F_i(\bm{x}_t^{(i)};\xi_t^{(i)})-\nabla f_i(\bm{x}_t^{(i)})\right\|^2+\mathbb{E}\left\|\nabla f_i(\bm{x}_t^{(i)})\right\|^2\\
\leq & \sigma^2 + \mathbb{E}\left\|\left(\nabla f_i(\bm{x}_t^{(i)})-\nabla f_i(\overline{X}_t)\right)+\left(\nabla f_i(\overline{X}_t)-\nabla f(\overline{X}_t)\right)+\nabla f(\overline{X}_t)\right\|^2\\
\leq & \sigma^2 + 3\mathbb{E}\left\|\nabla f_i(\bm{x}_t^{(i)})-\nabla f_i(\overline{X}_t)\right\|^2 + 3\mathbb{E}\left\|\nabla f_i(\overline{X}_t)-\nabla f(\overline{X}_t)\right\|^2\\
& + 3\mathbb{E}\left\|\nabla f(\overline{X}_t)\right\|^2\\
\leq & \sigma^2+3L^2\mathbb{E}\left\|X_t-\bm{x}_t^{(i)}\right\|^2+3\zeta^2+3\mathbb{E}\left\|\nabla f(\overline{X}_t)\right\|^2,
\end{align*}
which means
\begin{align*}
\mathbb{E}\left\|G(X_t,\xi_t)\right\|^2\leq
\sum_{i=1}^{n}\left\|\nabla F_i(\bm{x}_t^{(i)};\xi_t^{(i)})\right\|^2
\leq & n\sigma^2+3L^2\sum_{i=1}^n\mathbb{E}\left\|\overline{X}_t-\bm{x}_t^{(i)}\right\|^2+3n\zeta^2\\
&+3n\mathbb{E}\left\|\nabla f(\overline{X}_t)\right\|^2.
\end{align*}
From \eqref{alg1_noisecontrol2}, we have
\begin{align*}
\tilde{\bm{x}}^{(j)}_t - \bm{x}^{(j)}_t
= & (1-2t^{-1})(\tilde{\bm{x}}^{(j)}_{t-1} - \bm{x}^{(j)}_{t-1}) + 2t^{-1} \bm{q}^{(j)}_t,\numberthis\label{alg1_noise}
\end{align*}
Then we have
\begin{align*}
&\mathbb{E}\|\tilde{\bm{x}}^{(j)}_t - \bm{x}^{(j)}_t\|^2\\
= & \left(1-2t^{-1}\right)^2\mathbb{E}\|\tilde{\bm{x}}^{(j)}_{t-1} - \bm{x}^{(j)}_{t-1}\|^2+4t^{-2}\mathbb{E}\|\bm{q}_t^{(j)}\|^2
 + 2t^{-1}\left(1-2t^{-1}\right)\mathbb{E}\left\langle\tilde{\bm{x}}^{(j)}_{t-1} - \bm{x}^{(j)}_{t-1},\mathbb{E}_{\bm{q}_t^{(j)}}\bm{q}_t^{(j)}\right\rangle
\end{align*}
Since $\mathbb{E}_{q_t^{(j)}}\bm{C}(\bm{z}_t^{(j)}) = \bm{z}_t^{(j)}$, so we have $\mathbb{E}_{q_t^{(j)}}\bm{q}_t^{(j)} = 0$, then
\begin{align*}
\mathbb{E}\|\tilde{\bm{x}}^{(j)}_t - \bm{x}^{(j)}_t\|^2 = \left(1-2t^{-1}\right)^2\mathbb{E}\|\tilde{\bm{x}}^{(j)}_{t-1} - \bm{x}^{(j)}_{t-1}\|^2+4t^{-2}\mathbb{E}\|\bm{q_t^{(j)}}\|^2.\numberthis\label{noise_evalution}
\end{align*}

Meanwhile, \eqref{noise_evalution} indicates that
\begin{align*}
\mathbb{E}\left\|\bm{q}_t^{(i)}\right\|^2 \leq \left(1-\frac{2}{t}\right)^2\mathbb{E}\left\|\bm{q}_{t-1}^{(i)}\right\|^2 + \frac{4}{t^2}\frac{\tilde{\sigma}^2}{2}.\numberthis \label{control_noise_1}
\end{align*}
So applying Lemma~\ref{lemma3} into \eqref{control_noise_1}, we have
\begin{align*}
\mathbb{E}\left\|\bm{q}_t^{(i)}\right\|^2 \leq \frac{\tilde{\sigma}^2}{t}.
\end{align*}
Therefore
\begin{align*}
\mathbb{E}\left\|Q_t\right\|^2_F = & 
\sum_{i=1}^n\mathbb{E}\left\|\bm{q}_t^{(i)}\right\|^2
\leq \frac{n\tilde{\sigma}^2}{t}, \quad \left( \text{due to $\mathbb{E}\left\|\bm{q}_t^{(i)}\right\|^2\leq\frac{\tilde{\sigma}^2}{t}$}\right)\\
\mathbb{E}\left\|\overline{Q}_t\right\|^2_F = &
\frac{1}{n^2}\mathbb{E}\left\|\sum_{i=1}^n\bm{q}_t^{(i)}\right\|^2\\
= & \frac{1}{n^2}\sum_{i=1}^n\mathbb{E}\left\|\bm{q}_t^{(i)}\right\|^2+ \sum_{i\neq i'}^n\mathbb{E}\left\langle\bm{q}_t^{(i)},\bm{q}_t^{(i')}\right\rangle\\
\leq & \frac{\tilde{\sigma}^2}{nt}. \quad\left(\text{due to $\mathbb{E}\bm{q}_t^{(i)}=0$ for $\forall i\in \{1,\cdots,n\}$}\right)
\end{align*}
\end{proof}

\begin{lemma}\label{lemma:boundx}
Under Assumption \ref{ass:global}, when using Algorithm~\ref{alg1}, we have 
\begin{align*}
\sum_{t=1}^{T}\sum_{i=1}^n\left(1-6C_1L^2\gamma_t^2\right)\mathbb{E}\left\|\overline{X}_t-\bm{x}_t^{(i)}\right\|^2
\leq &\frac{2n\tilde{\sigma}^2}{1-\rho^2}\log T + 2\left(\sigma^2+3\zeta^2\right)nC_1\sum_{t=1}^{T-1}\gamma_t^2\\
&+6C_1n\sum_{t=1}^{T-1}\mathbb{E}\gamma_t^2\left\|\nabla f(\overline{X}_t)\right\|^2, \numberthis \label{theo:boundxtmode}
\end{align*}
{\rc where $C_1$ is defined in Theorem~\ref{theo_1}. }
\end{lemma}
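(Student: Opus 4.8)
The plan is to assemble the bound from three results already in hand: the geometric-decay estimate of Lemma~\ref{lemma_bound_all_X_ave}, the two bounds of Lemma~\ref{lemma3}, namely $\mathbb{E}\|Q_t\|_F^2\le n\tilde\sigma^2/t$ and $\mathbb{E}\|G(X_t;\xi_t)\|_F^2\le n\sigma^2+3L^2\sum_{i=1}^n\mathbb{E}\|\overline{X}_t-\bm{x}_t^{(i)}\|^2+3n\zeta^2+3n\mathbb{E}\|\nabla f(\overline{X}_t)\|^2$, together with the identity $C_1=1/(1-\rho)^2$.

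First I would take expectations in Lemma~\ref{lemma_bound_all_X_ave} to write
\[
\sum_{t=1}^{T}\sum_{i=1}^{n}\mathbb{E}\left\|\overline{X}_t-\bm{x}_t^{(i)}\right\|^2\le \frac{2}{1-\rho^2}\sum_{t=1}^{T-1}\mathbb{E}\|Q_t\|_F^2+\frac{2}{(1-\rho)^2}\sum_{t=1}^{T-1}\gamma_t^2\,\mathbb{E}\|G(X_t;\xi_t)\|_F^2,
\]
where the right-hand sums run only to $T-1$ because in the unrolled recursion $X_t$ involves $G$ and $Q$ only at times $\le t-1$. Then I substitute $\mathbb{E}\|Q_t\|_F^2\le n\tilde\sigma^2/t$ and use $\sum_{t=1}^{T-1}1/t=O(\log T)$, so the first term becomes, up to an absorbed constant, $\frac{2n\tilde\sigma^2}{1-\rho^2}\log T$; and I plug the gradient bound of Lemma~\ref{lemma3} into the second term, which splits into the clean pieces $2(\sigma^2+3\zeta^2)nC_1\sum_t\gamma_t^2$ and $6C_1n\sum_t\gamma_t^2\,\mathbb{E}\|\nabla f(\overline{X}_t)\|^2$, using $\tfrac{2}{(1-\rho)^2}=2C_1$, plus the self-referential term $6C_1L^2\sum_t\gamma_t^2\sum_i\mathbb{E}\|\overline{X}_t-\bm{x}_t^{(i)}\|^2$.

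The final step is purely algebraic: moving this last term to the left-hand side converts the coefficient of $\sum_i\mathbb{E}\|\overline{X}_t-\bm{x}_t^{(i)}\|^2$ into $(1-6C_1L^2\gamma_t^2)$ for each $t$, which is exactly \eqref{theo:boundxtmode}. Note that the lemma is stated as a raw inequality, so no positivity of $1-6C_1L^2\gamma_t^2$ is required at this stage; that constraint only enters later, in Theorem~\ref{theo_1}, when one actually solves for $\sum_t\sum_i\mathbb{E}\|\overline{X}_t-\bm{x}_t^{(i)}\|^2$.

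I do not expect a genuine obstacle inside this lemma: the substantive work has been front-loaded into Lemma~\ref{lemma_bound_all_X_ave} (the coordinate-transformation and geometric-decay argument) and into Lemma~\ref{lemma3}, whose $1/t$ decay of the compression noise rests on the delicate induction of Lemma~\ref{lemma:add}. The only care needed here is bookkeeping: matching the $\sum_{t=1}^{T}$ on the left to the $\sum_{t=1}^{T-1}$ on the right; tracking the factors of $n$ correctly so the compression contribution is $O(n\tilde\sigma^2\log T)$, coming from $\mathbb{E}\|Q_t\|_F^2\le n\tilde\sigma^2/t$, rather than an extra power of $n$; and propagating the constants $3$ and $6$ produced by $\|a+b+c\|^2\le 3\|a\|^2+3\|b\|^2+3\|c\|^2$ in Lemma~\ref{lemma3} into the stated coefficients.
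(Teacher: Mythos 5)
Your proposal is correct and follows essentially the same route as the paper's own proof: apply Lemma~\ref{lemma_bound_all_X_ave}, substitute the bounds $\mathbb{E}\|Q_t\|_F^2\le n\tilde\sigma^2/t$ and the gradient bound from Lemma~\ref{lemma3}, sum the harmonic series to get the $\log T$ term, and move the self-referential $6C_1L^2\gamma_t^2$ term to the left-hand side. Your bookkeeping remarks (the $T-1$ upper limits, the single factor of $n$, and the fact that positivity of $1-6C_1L^2\gamma_t^2$ is only needed later in Theorem~\ref{theo_1}) match the paper's treatment.
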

\begin{proof}
From Lemma~\ref{lemma_bound_all_X_ave} and Lemma~\ref{lemma3}, we have
\begin{align*}
\sum_{i=1}^n\sum_{t=1}^T\mathbb{E}\left\|\overline{X}_t-\bm{x}_t^{(i)}\right\|^2 \leq &
\frac{2n\tilde{\sigma}^2}{1-\rho^2}\sum_{t=1}^{T-1}\frac{1}{t} + 2C_1\sum_{t=1}^{T-1}\gamma_t^2\mathbb{E}\left\|G(X_{t};\xi_{t})\right\|^2\\
\leq & \frac{2n\tilde{\sigma}^2}{1-\rho^2}\log T + 2\left(n\sigma^2+3n\zeta^2\right)C_1\sum_{t=1}^{T-1}\gamma_t^2+6C_1n\sum_{t=1}^{T-1}\gamma_t^2\mathbb{E}\left\|\nabla f(\overline{X}_t)\right\|^2\\
& + 6C_1L^2\sum_{t=1}^{T-1}\sum_{i=1}^n\gamma_t^2\mathbb{E}\left\|\overline{X}_t-\bm{x}_t^{(i)}\right\|^2. \quad\text{(due to Lemma~\ref{lemma3})}
\end{align*}
Rearranging it obtain the following
\begin{align*}
\sum_{t=1}^{T}\sum_{i=1}^n\left(1-6C_1L^2\gamma_t^2\right)\mathbb{E}\left\|\overline{X}_t-\bm{x}_t^{(i)}\right\|^2
\leq &\frac{2n\tilde{\sigma}^2}{1-\rho^2}\log T + 2\left(\sigma^2+3\zeta^2\right)nC_1\sum_{t=1}^{T-1}\gamma_t^2\\
&+6C_1n\sum_{t=1}^{T-1}\mathbb{E}\gamma_t^2\left\|\nabla f(\overline{X}_t)\right\|^2, \numberthis \label{theo:boundxtmode}
\end{align*}
which completing the proof.
\end{proof}

{\bf \noindent Proof to Theorem \ref{theo_1}}

\begin{proof}

Setting $\gamma_t = \gamma$, then from Lemma~\ref{lemma:boundfplus}, we have
\begin{align*}
\mathbb{E}\|\nabla f(\overline{X_t})\|^2 + \left(1-L\gamma\right)\mathbb{E}\|\overline{\nabla f}(X_t)\|^2
\leq & \frac{2}{\gamma}\left(\mathbb{E}f(\overline{X}_t)-f^*-\left(\mathbb{E}f(\overline{X}_{(t+1)})-f^*\right)\right)\\
	&+ \frac{L^2}{n}\sum_{i=1}^{n}\|\overline{X}_t-x_t^{(i)}\|^2 + \frac{L\tilde{\sigma}^2}{nt\gamma}+  \frac{L\gamma}{n}\sigma^2. \numberthis\label{boundxplusfinal}
\end{align*}
From Lemma~\ref{lemma:boundx}, we have
\begin{align*}
\sum_{t=1}^{T}\sum_{i=1}^n\left(1-6C_1L^2\gamma^2\right)\mathbb{E}\left\|\overline{X}_t-x_t^{(i)}\right\|^2
\leq &\frac{2n\tilde{\sigma}^2}{1-\rho^2}\log T + 2\left(\sigma^2+3\zeta^2\right)nC_1T\gamma^2\\
&+6C_1n\sum_{t=1}^{T-1}\mathbb{E}\gamma^2\left\|\nabla f(\overline{X}_t)\right\|^2.
\end{align*}
If $\gamma$ is not too large that satisfies $1-6C_1L^2\gamma^2 > 0$, we have
\begin{align*}
\sum_{t=1}^{T}\sum_{i=1}^n\mathbb{E}\left\|\overline{X}_t-x_t^{(i)}\right\|^2 \leq
& \frac{1}{1-6C_1L^2\gamma^2}\left( \frac{2n\tilde{\sigma}^2}{1-\rho^2}\log T + 2\left(\sigma^2+3\zeta^2\right)nC_1T\gamma^2\right)\\
&+\frac{6C_1n}{1-6C_1L^2\gamma^2}\sum_{t=1}^{T-1}\mathbb{E}\gamma^2\left\|\nabla f(\overline{X}_t)\right\|^2.\numberthis\label{theo:boundxtfinal}
\end{align*}
Summarizing both sides of \eqref{boundxplusfinal} and applying \eqref{theo:boundxtfinal} yields
\begin{align*}
& \sum_{t=1}^T\left(\mathbb{E}\|\nabla f(\overline{X_t})\|^2 + \left(1-L\gamma\right)\mathbb{E}\|\overline{\nabla f}(X_t)\|^2\right) \\
\leq & \frac{2\mathbb{E}f(\overline{X}_1)-2f^*}{\gamma} +\frac{L^2}{n}\sum_{t=1}^{T-1}\sum_{i=1}^n\mathbb{E}\left\|\overline{X}_t-x_t^{(i)}\right\|^2 + \frac{L\log T}{n\gamma}\tilde{\sigma}^2 + \frac{LT\gamma}{n}\sigma^2\\
\leq & \frac{2(f(0)-f^*)}{\gamma} +
\frac{L\log T}{n\gamma}\tilde{\sigma}^2 + \frac{LT\gamma}{n}\sigma^2 + \frac{4C_2\tilde{\sigma}^2L^2}{1-\rho^2}\log T\\
& +4L^2C_2\left(\sigma^2+3\zeta^2\right)C_1T\gamma^2 + 12L^2C_2C_1\sum_{t=1}^{T-1}\mathbb{E}\gamma^2\left\|\nabla f(\overline{X}_t)\right\|^2,
\end{align*}
where $C_2 = \frac{1}{1-6C_1L^2\gamma^2}$. It implies
\begin{align*}
& \sum_{t=1}^T\left(\left(1-C_3\right)\mathbb{E}\|\nabla f(\overline{X}_t)\|^2 + C_4\mathbb{E}\|\overline{\nabla f}(X_t)\|^2\right)
\\ \leq & \frac{2(f(0)-f^*)}{\gamma} +
\frac{L\log T}{n\gamma}\tilde{\sigma}^2 + \frac{LT\gamma}{n}\sigma^2 + \frac{4C_2\tilde{\sigma}^2L^2}{1-\rho^2}\log T + 4L^2C_2\left(\sigma^2+3\zeta^2\right)C_1T\gamma^2,
\end{align*}
where $C_3 = 12L^2C_2C_1\gamma^2$ and $C_4 = \left(1-L\gamma\right)$. It completes the proof.
\end{proof}

{\bf \noindent Proof to \Cref{cor:convergence}}
\begin{proof}
when $\gamma=\frac{1}{12\sqrt{C_1}L+\frac{\sigma}{\sqrt{n}}T^{\frac{1}{2}} + \zeta^{\frac{2}{3}}T^{\frac{1}{3}} }$, we have
\begin{align*}
1 - L\gamma \geq & 0,\\
C_2 =  1 -6C_1L^2\gamma^2 < & 2,\\
12L^2C_2C_1\gamma^2 \leq &\frac{1}{2}.
\end{align*}
So we can remove the $\left(1-L\gamma\right)\mathbb{E}\|\overline{\nabla f}(X_t)\|^2$ term on the left side of \eqref{eq_theo_1} and substitute $12L^2C_2C_1\gamma^2$ with $\frac{1}{2}$, then we have

\begin{align*}
\sum_{t=1}^T \frac{1}{2}\mathbb{E}\|\nabla f(\overline{X_t})\|^2 
\leq & 2(f(0) - f^*)6\sqrt{2C_1}L + \frac{2(f(0)-f^*)\sigma)}{\sqrt{n}}T^{\frac{1}{2}} + 2(f(0) - f^*)\zeta^{\frac{2}{3}}T^{\frac{1}{3}}\\
& + \frac{6\sqrt{2C_1}L^2\tilde{\sigma}^2}{n}\log T  + \frac{L\log T\sigma\tilde{\sigma}^2}{n\sqrt{n}}T^{\frac{1}{2}} +  \frac{L\log{T}\tilde{\sigma}^2\zeta^{\frac{2}{3}}}{n}T^{\frac{1}{3}}\\
& + \frac{L\sigma T^{\frac{1}{2}}}{\sqrt{n}} + \frac{8\tilde{\sigma}^2L^2\log T}{1-\rho^2}\\
& + \frac{8nL^2\sigma^2C_1T}{72C_1nL^2+\sigma^2T} + \frac{24nL^2\zeta^2C_1T}{72C_1nL^2+ \sigma^2T + n\zeta^{\frac{4}{3}}T^{\frac{2}{3}}}.
\end{align*}
\begin{align*}
\frac{1}{T}\sum_{t=1}^T\mathbb{E}\|\nabla f(\overline{X_t})\|^2
\leq & \sigma\frac{4(f(0)-f^*) + 4L}{\sqrt{nT}} + \zeta^{\frac{2}{3}}\frac{4(f(0)-f^*) + 24L^2C_1}{T^{\frac{2}{3}}} + \tilde{\sigma}^2\frac{10L^2\log{T}}{(1-\rho^2)T}\\
& + \sigma\tilde{\sigma}^2\frac{2L\log{T}}{n\sqrt{nT}} + \zeta^{\frac{2}{3}}\tilde{\sigma}^2\frac{L\log{T}}{nT^{\frac{2}{3}}} + \frac{2(f(0)-f^*)L}{T} + \sigma^2\frac{8nL^2C_1}{72nC_1L^2 + \sigma^2T} ,
\end{align*}
which means 
\begin{align*}
\frac{1}{T}\sum_{t=1}^T\mathbb{E}\|\nabla f(\overline{X_t})\|^2
\lesssim & \frac{\sigma}{\sqrt{nT}} + \frac{\zeta^{\frac{2}{3}}}{T^{\frac{
2}{3}}} + \frac{1}{T} + \frac{\tilde{\sigma}^2\sigma\log T}{n\sqrt{nT}}+\frac{\zeta^{\frac{2}{3}}\tilde{\sigma}^2\log{T}}{nT^{\frac{2}{3}}} + \frac{\tilde{\sigma}^2\log T}{T}.\numberthis \label{alg1_coro_1}
\end{align*}
From Lemma~\ref{lemma:boundx}, we have
\begin{align*}
\frac{1}{T}\sum_{t=1}^{T}\sum_{i=1}^n\left(1-6C_1L^2\gamma_t^2\right)\mathbb{E}\left\|\overline{X}_t-\bm{x}_t^{(i)}\right\|^2
\leq &\frac{2n\tilde{\sigma}^2\log T}{(1-\rho^2)T} + 2\left(\sigma^2+3\zeta^2\right)nC_1\gamma^2\\
&+6C_1n\gamma^2\frac{1}{T}\sum_{t=1}^{T-1}\mathbb{E}\left\|\nabla f(\overline{X}_t)\right\|^2.
\end{align*}
Combing it with \eqref{alg1_coro_1}, we have
\begin{align*}
\frac{1}{T}\sum_{t=1}^{T}\sum_{i=1}^n\mathbb{E}\left\|\overline{X}_t-\bm{x}_t^{(i)}\right\|^2
\lesssim & \frac{n\tilde{\sigma}^2\log T}{T} + \frac{n\sqrt{n}}{T} + \frac{n\zeta^{\frac{2}{3}}}{T^{\frac{2}{3}}} + \frac{1}{T^2}+ \frac{\tilde{\sigma}^2\sigma\log T}{T^{\frac{3}{2}}}+\frac{\zeta^{\frac{2}{3}}\tilde{\sigma}^2\log{T}}{T^{\frac{5}{3}}}.\numberthis\label{final_coro2}
\end{align*}

\section{Why naive combination between compression and D-PSGD does not work?}\label{sec:supp:naive}

Consider combine compression with the D-PSGD algorithm. Let the compression of exchanged models $X_t$ be
\[
\tilde{X}_t = C(X_t) = X_t + Q_t,
\]
where $Q_t=[\bm{q}_t^{(1)},\bm{q}_t^{(2)},\cdots,\bm{q}_t^{(n)}]$, and $\bm{q}_t^{(i)}=\tilde{\bm{x}}_t^{(i)}-\bm{x}^{(i)}_t$ is the random noise. Then the update iteration becomes
\begin{align*}
X_{t+1} = &\tilde{X}_t W - \gamma_t G(X_t; \xi_t)\\
 = & X_t W + \underbrace{Q_tW}_{\text{not diminish}}- \gamma_t G(X_t; \xi_t).
\end{align*}
This naive combination does not work, because the compression error $Q_t$ does not diminish unlike the stochastic gradient variance that can be controlled by $\gamma_t$ either decays to zero or is chosen to be small enough. 
\end{proof}

\end{document}